\definecolor{porange}{HTML}{E77500} %
\newtheorem{lemma}{Lemma}    
\newtheorem{remark}{Remark}
\newtheorem{proposition}{Proposition}    
\newtheorem{definition}{Definition}
\newtheorem*{proposition*}{Proposition}
\definecolor{porange}{HTML}{E77500} %
\newcommand{\jaime}[1]{\ifthenelse{\boolean{include-notes}}{\textcolor{orange}{\textbf{Jaime:} #1}}{}}
\newcommand{\haimin}[1]{\ifthenelse{\boolean{include-notes}}{\textcolor{magenta}{\textbf{Haimin:} #1}}{}}
\newcommand{\justin}[1]{\ifthenelse{\boolean{include-notes}}{\textcolor{Cerulean}{\textbf{Justin:} #1}}{}}
\newcommand{\himani}[1]{\ifthenelse{\boolean{include-notes}}{\textcolor{Plum}{\textbf{Himani:} #1}}{}}
\newcommand{\david}[1]{\ifthenelse{\boolean{include-notes}}{\textcolor{teal}{\textbf{David:} #1}}{}}
\newcommand{\remove}[1]{\ifthenelse{\boolean{include-remove}}{\textcolor{red}{\sout{#1}}}{}}
\newcommand{\new}[1]{\ifthenelse{\boolean{include-new}}{\textcolor{blue}{#1}}{#1}}
\newcommand{\todo}[1]{\ifthenelse{\boolean{include-notes}}{\textcolor{blue}{\textbf{TODO:} #1}}{}}
\newcommand{\guy}[1]{\ifthenelse{\boolean{include-notes}}{\textcolor{Cerulean}{\textbf{Guy:} #1}}{}}
\newcommand{\princeton}[1]{\ifthenelse{\boolean{include-notes}}{\textcolor{orange}{#1}}{}}
\newcommand{\p}[1]{\smallskip \noindent \textbf{{#1}.}}
\definecolor{hcsf}{HTML}{E77500}
\definecolor{lrsf}{HTML}{121212}
\newcommand{\eg}{\emph{e.g.}}
\newcommand{\ie}{\emph{i.e.}}
\DeclareMathOperator*{\argmaxB}{argmax}
\DeclareMathOperator*{\argminB}{argmin}
\newcommand{\failureset}{{\mathcal{F}}}
\newcommand{\safeset}{{\mathcal{S}}}
\newcommand{\task}{{\text{task}}}
\newcommand{\human}{{\textnormal{human}}}
\newcommand{\shield}{\text{\tiny{\faShield*}}}
\newcommand{\policyTask}{{\policy^\task}}
\newcommand{\fallback}{{\policy^\shield}}
\newcommand{\safetyFilter}{{\phi}}
\newcommand{\cbf}{h}
\newcommand{\opponent}{{\text{oppo}}}
\newcommand{\nominal}{{\text{nom}}}
\newcommand{\overtake}{{\text{over}}}
\newcommand{\warmup}{{\text{warmup}}}
\newcommand{\initialization}{{\text{init}}}
\newcommand{\steer}{{\text{steer}}}
\newcommand{\brake}{{\text{brake}}}
\newcommand{\throttle}{{\text{throttle}}}
\newcommand{\reals}{\mathbb{R}}
\newcommand{\compl}{\mathsf{c}}
\newcommand{\prob}{P}
\DeclareMathOperator*{\argmax}{arg\,max}
\newcommand{\st}{\textnormal{s.t.}}
\newcommand{\state}{{x}}
\newcommand{\ctrl}{{u}}
\newcommand{\xset}{{\mathcal{X}}}  %
\newcommand{\cset}{{\mathcal{U}}}
\newcommand{\dyn}{{f}}
\newcommand{\valfunc}{{V}}
\newcommand{\qfunc}{{\mathcal{Q}}}
\newcommand{\policy}{{\pi}}
\newcommand{\envdiscount}{{\gamma_{\text{ENV}}}}
\newcommand{\learnedpolicy}{\pi_\theta}
\newcommand{\learnedqfunc}{Q_\phi}
\newcommand{\learnedvfunc}{V_\phi}
\newcommand{\buffer}{\mathcal{B}}
\newcommand{\qtargetnetwork}{Q_{\phi^\prime}}
\newglossaryentry{RL}
{
  name={RL},
  description={reinforcement learning},
  first={reinforcement learning (\glsentrytext{RL})}
}
\newglossaryentry{HJ}
{
  name={HJ},
  description={Hamilton--Jacobi},
  first={Hamilton--Jacobi (\glsentrytext{HJ})}
}
\newglossaryentry{DCBF}
{
  name={DCBF},
  description={discrete-time control barrier function},
  first={discrete-time control barrier function (\glsentrytext{DCBF})}
}
\newglossaryentry{CBF}
{
  name={CBF},
  description={control barrier function},
  first={control barrier function (\glsentrytext{CBF})}
}
\newglossaryentry{Q-CBF}
{
  name={Q-CBF},
  description={state--action control barrier function},
  first={state--action control barrier function (\glsentrytext{Q-CBF})}
}
\newglossaryentry{ODD}
{
  name={ODD},
  description={Operational Design Domain},
  first={operational design domain (\glsentrytext{ODD})}
}
\newglossaryentry{LRSF}
{
  name={LRSF},
  description={Last-Resort Safety Filter},
  first={last-resort safety filter (\glsentrytext{LRSF})}
}
\newglossaryentry{HCSF}
{
  name={HCSF},
  description={Human-Centered Safety Filter},
  first={human-centered safety filter (\glsentrytext{HCSF})}
}
\newglossaryentry{NLP}
{
  name={NLP},
  description={nonlinear programming problem},
  first={nonlinear programming problem (\glsentrytext{NLP})},
}
\newglossaryentry{ILQR}
{
  name={ILQR},
  description={iterative linear quadratic regulator},
  first={iterative linear quadratic regulator (\glsentrytext{ILQR})},
}
\newglossaryentry{MPC}
{
  name={MPC},
  description={model predictive control},
  first={model predictive control (\glsentrytext{MPC})},
}
\newglossaryentry{AC}
{
  name={AC},
  description={Assetto Corsa},
  first={Assetto Corsa (\glsentrytext{AC})},
}
\newglossaryentry{SAC}
{
  name={SAC},
  description={Soft Actor--Critic},
  first={Soft Actor--Critic (\glsentrytext{SAC})},
}
\newglossaryentry{ANOVA}
{
    name={ANOVA},
    description={Analysis of Variance},
    first={analysis of variance (\glsentrytext{ANOVA})}
}
\newglossaryentry{SME}
{
    name={SME},
    description={Simple Main Effects},
    first={simple main effects (\glsentrytext{SME})},
}
\newglossaryentry{HSD}
{
    name={HSD},
    description={Honestly Significant Difference},
    first={honestly significant difference (\glsentrytext{HSD})},
}
\newglossaryentry{ECDF}
{
    name={ECDF},
    description={Empirical Cumulative Distribution Function},
    first={empirical cumulative distribution function (\glsentrytext{ECDF})},
}
\newglossaryentry{OCP}
{
    name={OCP},
    description={Optimal Control Problem},
    first={optimal control problem (\glsentrytext{OCP})}
}
\newglossaryentry{AI}
{
    name={AI},
    description={Artificial Intelligence},
    first={artificial intelligence (\glsentrytext{AI})}
}
\newglossaryentry{HRI}
{
    name={HRI},
    description={Human--Robot Interaction},
    first={human--robot interaction (\glsentrytext{HRI})}
}
\begin{document}

\title{Safety with Agency: Human-Centered Safety Filter with Application to AI-Assisted Motorsports}
\IEEEoverridecommandlockouts
\author{Donggeon David Oh$^{1,*}$, \thanks{$^{1}$Department of Electrical and Computer Engineering, Princeton University, Princeton, NJ 08540, USA} Justin Lidard$^{2,*}$, \thanks{$^{2}$Department of Mechanical and Aerospace Engineering, Princeton University, Princeton, NJ 08540, USA} Haimin Hu$^{1}$, Himani Sinhmar$^{2}$, Elle Lazarski$^{1}$, \\ \textnormal{Deepak Gopinath$^{3}$, Emily S. Sumner$^{3}$, Jonathan A. DeCastro$^{3}$, Guy Rosman$^{3}$, \thanks{$^{3}$Toyota Research Institute, Cambridge, MA 02139, USA}} \\ \textnormal{Naomi Ehrich Leonard$^{2}$, and Jaime Fern\'andez Fisac$^{1}$  \thanks{This research has been supported in part by an NSF Graduate Research Fellowship. This work is partially supported by Toyota Research Institute (TRI). It, however, reflects solely the opinions and conclusions of its authors and not TRI or any other Toyota entity.}} \thanks{$^{*}$D. D. Oh and J. Lidard contributed equally.}}

\maketitle

\begin{abstract}

We propose a \gls{HCSF} for shared autonomy that significantly enhances system safety without compromising human agency. 
Our \gls{HCSF} is built on a neural safety value function, which we first learn scalably through black-box interactions and then use at deployment to enforce a novel \gls{Q-CBF} safety constraint.
Since this \gls{Q-CBF} safety filter does not require any knowledge of the system dynamics for both synthesis and runtime safety monitoring and intervention, our method applies readily to complex, black-box shared autonomy systems.
Notably, our \gls{HCSF}'s CBF-based interventions modify the human's actions minimally and smoothly, avoiding the abrupt, last-moment corrections delivered by many conventional safety filters.
We validate our approach in a comprehensive in-person user study using Assetto Corsa---a high-fidelity car racing simulator with black-box dynamics---to assess robustness in ``driving on the edge'' scenarios.
We compare both trajectory data and drivers’ perceptions of our \gls{HCSF} assistance against unassisted driving and a conventional safety filter.
Experimental results show that 1) compared to having no assistance, our \gls{HCSF} improves both safety and user satisfaction without compromising human agency or comfort, and 2) relative to a conventional safety filter, our proposed \gls{HCSF} boosts human agency, comfort, and satisfaction while maintaining robustness.
\end{abstract}
\glsresetall
\IEEEpeerreviewmaketitle

\section{Introduction}
\label{sec:intro}

\begin{figure}
    \centering
    \begin{subfigure}{0.49\columnwidth}
        \centering
        \includegraphics[width=\textwidth]{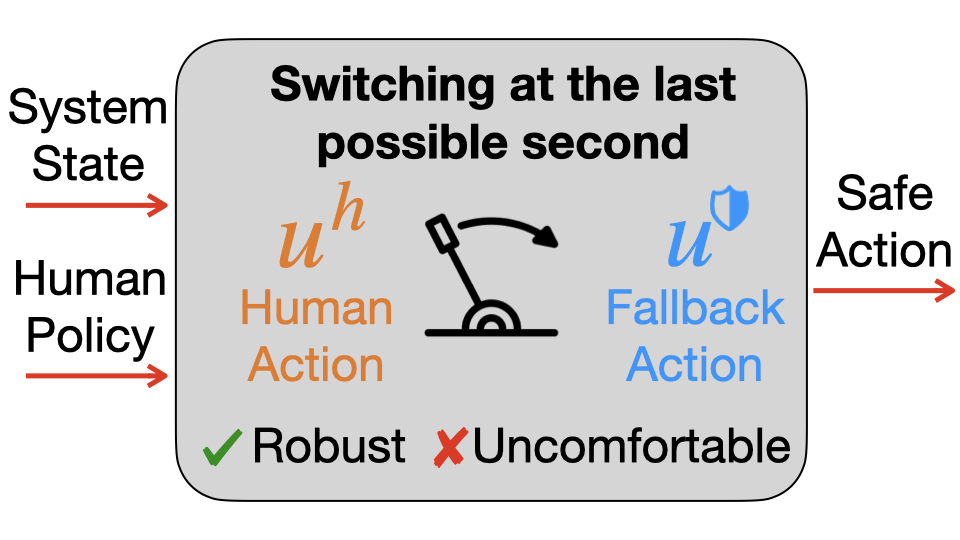}
        \caption{\textit{Last-Resort }Safety Filter}
        \label{fig:saferacing_rss25_figures_half.001}
    \end{subfigure}
        \begin{subfigure}{0.49\columnwidth}
        \centering
        \includegraphics[width=\textwidth]{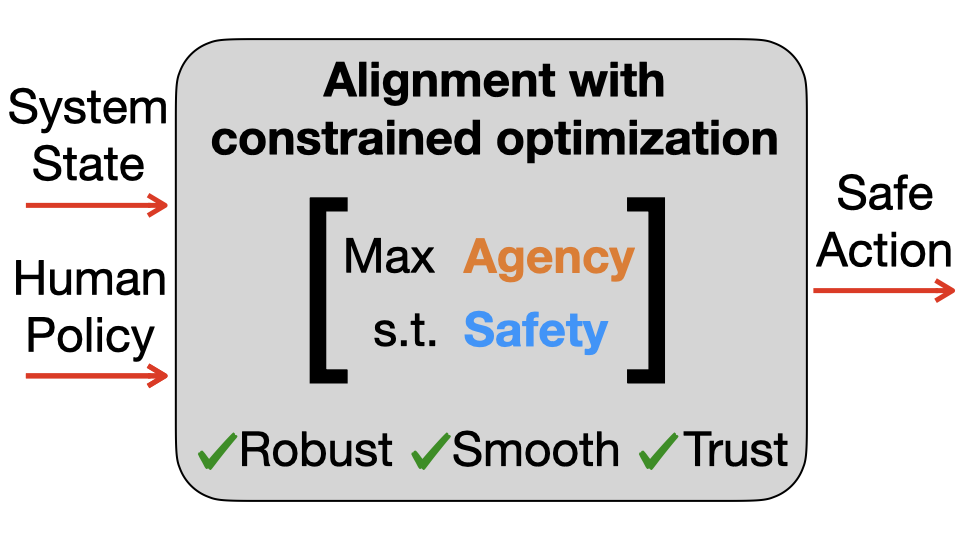}
        \caption{\textit{\princeton{H}uman-\princeton{C}entered} \princeton{S}afety \princeton{F}ilter}
        \label{fig:saferacing_rss25_figures_half.002}
    \end{subfigure}
        \begin{subfigure}{\columnwidth}
        \centering
        \includegraphics[width=\textwidth]{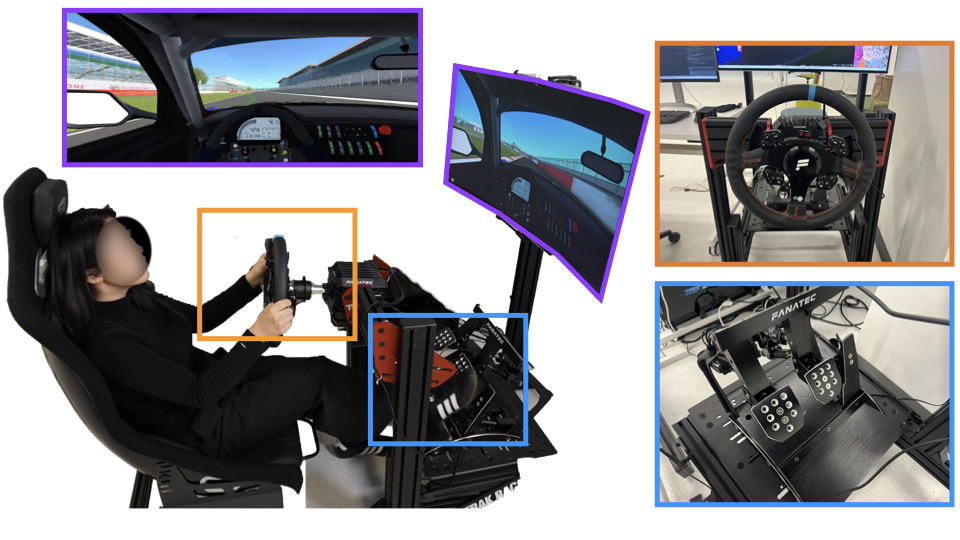}
        \caption{Shared Autonomy for High-Fidelity Car Racing}
        \label{saferacing_rss25_figures_half.003}
    \end{subfigure}
    \caption{Our proposed \gls{HCSF} enables robust and smooth safety interventions for shared autonomy systems. (a) \Gls{LRSF} switches to the best-effort fallback policy at the last possible moment. However, this switching can feel abrupt and uncomfortable for human operators. (b) Our \gls{HCSF} instead intervenes smoothly while promoting human agency, thereby reducing automation surprise and enhancing user experience. (c) Users interact with a high-fidelity racing simulator via a steering wheel and set of pedals (throttle and brake).}
    \label{fig:teaser}
\end{figure}

Recent developments in robot safety provide an exciting opportunity for enhancing human safety and performance in high-stakes situations.
However, augmenting human decision-making with \gls{AI} in a trustworthy way remains an open problem.
A human--AI team in a performance car racing~\cite{wurman2022outracing,decastro2024dreaming,gopinath2024computational} is a representative, yet challenging example, as it pushes safety to the limit.
How should the AI co-pilot assist the human without dulling the driver’s competitive edge?
Should the AI discourage a human from attempting a risky overtaking maneuver on a sharp turn?
When an AI system assists humans in such safety-critical and time-sensitive tasks, maintaining \textit{human agency} is critical.
We need to ensure human awareness of the AI system’s current intent and operating mode, thus avoiding the notorious and sometimes fatal ``automation surprise''~\cite{sarter1997team,jamieson2022b737}.

Safety filters~\cite{bastani2021safe, hsu2023safety} have become an effective approach to ensure safety under an \gls{ODD}, \ie, a clearly defined set of operating conditions for robots to work properly and safely~\cite{sae3259,RR-2662}. Safety filters have been deployed on a wide range of autonomous systems, such as automated vehicles~\cite{zeng2021safety,tearle2021predictive}, legged robots~\cite{hsu2015control,agrawal2017discrete,nguyen2024gameplay}, and aerial navigation~\cite{singletary2022onboard,chen2021fastrack}.
Traditional model-based numerical approaches for safety filter synthesis~\cite{mitchell2005time} result in safety guarantees by design, but they are unable to scale up due to the ``curse of dimensionality''~\cite{bansal2017hamilton}.
Recent research focuses on neural approximation of safety filters~\cite{fisac2019bridging,robey2020learning,bansal2021deepreach,hsu2023isaacs,wang2024magics} that can scale to tens~\cite{nguyen2024gameplay,he2024agile} and even hundreds of state variables~\cite{hu2023deception}. 
While existing safety filters effectively maintain safety, their use in human–AI shared autonomy can result in abrupt, discontinuous interventions that disregard the human operator’s intentions. This undermines the driver’s sense of being in control, creating an uncomfortable and unenjoyable experience. Moreover, such unpredictable, non-transparent behavior can erode confidence and trust in the AI assistant, ultimately degrading team performance and causing the human driver to lose their strategic edge.

\p{Contributions}
To overcome these limitations, we propose a
novel \gls{HCSF} (Fig.~\ref{fig:teaser}) that advances the state of the art in learning-based safety filtering while actively promoting human agency in shared autonomy settings.
In particular, we make three key contributions:
\begin{itemize}
    \item 
    We introduce, to the best of our knowledge, the first \emph{fully model-free \gls{CBF} safety filter}. We learn a neural safety value function through interactions with a black-box system and, at deployment, enforce a safety constraint based on a novel \emph{\gls{Q-CBF}} without any knowledge of system dynamics (\eg, control affine model). Both the synthesis and deployment of our \gls{Q-CBF} are scalable to high-dimensional systems and do not require any knowledge of their dynamics. 

    \item We build upon the learned \gls{Q-CBF} and demonstrate our \gls{HCSF} in \gls{AC}, a high-fidelity racing simulator with black-box dynamics, where the filter is pushed to the limit against all potential failure modes by real human drivers with diverse skill levels. To the best of our knowledge, this is the first time a safety filter has been synthesized, deployed, and evaluated in such a high-dimensional, dynamic shared autonomy setting involving human operators.

    \item We conduct an extensive in-person user study with 83 human participants and conclude, with statistical significance in both trajectory data and human driver responses, that our \gls{HCSF} considerably improves safety and user satisfaction without compromising human agency or comfort relative to having no safety filter. Furthermore, when compared to a conventional safety filter, our \gls{HCSF} offers significant gains in human agency, comfort, and overall satisfaction while maintaining at least the same level of robustness---if not exceeding it.
    
\end{itemize}

\p{Overview}
We organize this paper as follows.
\autoref{sec:related_work} reviews related works, while \autoref{sec:prelim} introduces the problem formulation.
In \autoref{sec:HCSF}, we present our \gls{HCSF} design, emphasizing its key properties and synthesis, and then discuss its practical implementation in \autoref{sec:training}.
\autoref{sec:results} provides our experimental results.
Finally, \autoref{sec:limitation} addresses the limitations and outlines possible future directions, and \autoref{sec:conclusion} concludes the paper.

\section{Related Work}
\label{sec:related_work}

Our work relates to, and builds on, recent advances in human-interactive safety filters and AI-assisted motorsports.

\subsection{Human-Interactive Safety Filters}
A safety filter is a supervisory control scheme that continuously monitors the operation of an autonomous system and intervenes, when necessary, by adjusting its planned actions to prevent potential catastrophic failures.
Safety filters have been increasingly used in high-stakes autonomy applications, ranging from autonomous driving~\cite{leung2020infusing,tearle2021predictive,hu2022sharp,brat2023autonomy}, to aerial navigation~\cite{chen2018hamilton,squires2018constructive,robey2020learning,chen2021fastrack, oh2023safety}, and to legged locomotion~\cite{hsu2015control,he2024agile,nguyen2024gameplay,wang2024magics}.
Recent work by Hsu et al.~\cite{hsu2023safety} provides a unified analysis framework for various safety filters, including \gls{HJ} reachability~\cite{mitchell2005time,bansal2017hamilton,chen2021fastrack}, control barrier functions~\cite{ames2016control,robey2020learning,lindemann2021learning}, model predictive control~\cite{li2020safe,wabersich2021predictive}, and Lyapunov methods~\cite{chow2018lyapunov}.
In general, synthesis of safety filters can be computationally challenging, especially for systems with high-dimensional state space and complex dynamics.
Deep learning has proven to effectively scale up the computation of safety controllers~\cite{fisac2019bridging,hsu2021safety,hu2023deception,nguyen2024gameplay,wang2024magics}.
More recently, methods have been developed that treat these learned neural controllers as an untrusted fallback within a safety filter framework, and robust safety guarantees can be subsequently obtained through runtime verification algorithms such as convex optimization~\cite{hu2020reach,everett2021reachability,gates2023scalable}, forward reachable sets rollouts~\cite{hsu2023isaacs}, and conformal prediction~\cite{lin2024verification}.

When robots are deployed around humans, ensuring safety is paramount to enable their trustworthy integration into people's everyday lives.
However, enforcing safety becomes particularly challenging in human-interactive settings due to coupled motion, limited communication, and potentially conflicting objectives between robots and their human peers.
Early attempts at safe human--robot interaction focus on achieving robust safety by safeguarding against worst-case human decisions~\cite{fisac2018probabilistically,leung2020infusing,tian2022safety}, which may lead to overly conservative robot behaviors~\cite{trautman2010unfreezing}.
Recent research effort has been devoted to designing safety filters that adapt to human decision-making, in hope of improving the robot's task performance without compromising safety.
One popular approach is filter-aware motion planning, which incorporates predictions of the safety filter's behaviors into the robot's task policy~\cite{leung2020infusing,hu2024active}.
This strategy allows the robot to avoid abrupt safety overrides by preempting future costly interventions triggered by unlikely human actions.
Another line of research aims at reducing conservativeness by dynamically adjusting the safety filter's \gls{ODD} according to the robot's evolving uncertainty about the human~\cite{bajcsy2021analyzing,hu2023deception,pandya2024robots}.

While existing human-interactive safety filters enable robots to interact safely and efficiently with \emph{other} humans, similar formulations in human–robot \emph{shared control} settings remain scarce. Recently, research efforts have focused on preserving human agency while enhancing safety in shared autonomy~\cite{broad2019highly, schaff2020residual, reddy2018shared}. However, these approaches do not define a clear \gls{ODD} and lack the principled safety analysis that a safety filter provides, often leading to elevated failure rates. Moreover, some methods rely on knowing the human operator’s policy \emph{a priori}, limiting their robustness when working with groups of human operators who have diverse intentions and skill levels \cite{broad2019highly, reddy2018shared}.

In this work, we draw inspiration from filter‐aware planning to design a safety filter that minimally modifies human actions. Our proposed \gls{HCSF} preserves the principled safety analysis inherited from safety filter theory---in particular, from \gls{HJ} reachability and control barrier functions---while avoiding any explicit model representation of human intentions.

\subsection{AI-Enabled Motorsports}

While modern AI systems surpass human intelligence in competitive sports~\cite{mnih2015human,kaufmann2023champion}, their potential to \textit{augment} human decision-making is underexplored.
High-speed performance car racing presents a domain where safety and seamless collaboration are required to enable a competitive human--AI team---the AI co-pilot must assist the human without dulling the driver’s competitive edge.

Wurman et al.~\cite{wurman2022outracing} demonstrate for the first time that a well-trained neural policy can win a head-to-head competition against some of the world’s
best drivers in a car racing game.
Follow-up works further improve the AI competitiveness via reasoning strategic interactions with data-driven modeling of opponent behaviors~\cite{chen2023learn} and blending model-based dynamic game strategy with data-driven prior knowledge~\cite{lidard2024blending,hu2024think}.
Comparing to fully automated AI motorsports, human--AI collaborative car racing is an emerging, yet relatively underexplored research area.
Gopinath et al.~\cite{gopinath2024computational} propose a multi-task imitation learning approach that enables an automated coaching system that interacts with the student similar to a human teacher.
DeCastro et al.~\cite{decastro2024dreaming} enhance the performance of human--AI teams in car racing by learning a policy that infers and aligns with human intents leveraging a world model.
While AI agents in motorsports have shown promising performance, ensuring the safety of human drivers remains largely unaddressed in a principled manner.
Chen et al.~\cite{chen2021safe} present preliminary results on approximate learning-based safety analysis for autonomous racing, but their approach is limited to the single-car, fully automated setting.

This work presents an \gls{HCSF}, a principled safety filter framework that actively promotes human agency, comfort, and satisfaction. We extensively evaluate our \gls{HCSF} in a large-scale user study using \gls{AC}, marking the first time a safety filter has been tested with both quantitative and qualitative measures of human–AI interaction in a high-fidelity, highly dynamic shared autonomy setting.

\begin{figure*}
    \centering
    \includegraphics[width=\linewidth]{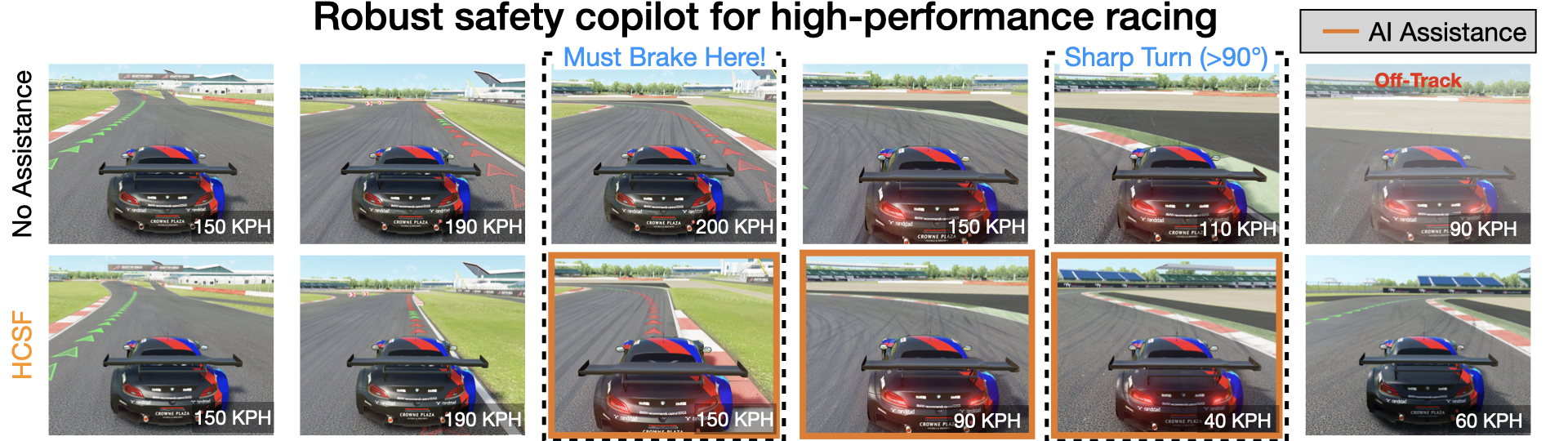}
    \caption{Illustration of our \gls{HCSF} intervention at a hairpin corner (\ie, a sharp turn requiring rapid deceleration). Without safety filter assistance, inexperienced human drivers often miss the braking point, leading to understeering and the vehicle leaving the track. In contrast, our \gls{HCSF} monitors the state and the human action to determine the braking point and provides necessary steering and braking interventions that keep the vehicle on the track. Braking assistance is visible through the rear lights.}
    \label{fig:rollout}
\end{figure*}

\section{Preliminaries and Problem Formulation}
\label{sec:prelim}

We seek to ensure the safe operation of a robot with discrete-time nonlinear dynamics:
\begin{equation}
\label{eq.system_dynamics}
\state_{t+1}=\dyn(\state_t, \ctrl_t),
\end{equation}
where $\state_t\in\xset\subset\reals^{n_x}$ and $\ctrl_t\in\cset\subset\reals^{n_u}$ denote the state and control input at time step $t\in\mathbb{N}$.
The robot's control typically comes from a \emph{task policy} $\policyTask: \xset \rightarrow \cset$.
We define the \emph{failure set} $\failureset$ with a Lipschitz continuous \emph{safety margin function} $g:\xset\rightarrow\reals$:
\begin{equation}
\label{eq.failure_set_def}
\failureset\coloneq\{\state\in\xset\mid g(\state)<0\}.
\end{equation}
States inside $\failureset$ are considered to have already failed in terms of safety. In the context of racing, states that correspond to the race car being outside the track boundaries or in contact with another vehicle should be inside $\failureset$.
The control set $\cset$ and failure set $\failureset$ are core components of the robot's \gls{ODD} (\autoref{subsec:environment}).
The \gls{ODD} may be understood as a social contract that bridges the robot operator, the public, and the policymakers---it provides a clear-cut set of conditions under which the robot is required to operate safely.
To ensure safe robot operation under an \gls{ODD}, we consider a supervisory control framework called safety filters.

\subsection{Safety Filters}
A \emph{safety filter} \cite{hsu2023safety} is
an automated process that continuously monitors the system and intervenes, if deemed necessary, by modifying a candidate action given by the task policy $\policyTask$ to prevent a potentially catastrophic safety failure \textit{in the future}.
Specifically, instead of directly applying the task action $\ctrl_t = \policyTask(\state_t)$, the robot uses an action based on safety filtering:
\begin{equation}
\label{eq:sf}
\ctrl_t = \safetyFilter(\state_t, \policyTask).
\end{equation}
The specific function form of $\safetyFilter$ depends on the intervention type of a safety filter, which includes, \eg, switching, transition, and optimization~\cite[Sec.~3]{hsu2023safety}.
The safety filter only prevents the use of a task action that would compromise future safety, allowing the robot to maintain safety without needing to modify its entire behavior.
In this paper, we use \gls{HJ} reachability analysis to synthesize safety filter $\safetyFilter$~\cite{bansal2017hamilton, mitchell2005time}.

\subsection{Hamilton-Jacobi Reachability Analysis}

We aim to design a safety filter which, given ODD elements \(\failureset\) and \(\cset\), keeps the robot within the \emph{maximal safe set} \(\safeset^* \subset \failureset^\compl \subset \xset\), where $\failureset^\compl=\xset-\failureset$. This set \(\safeset^*\) consists of all states from which there \emph{exists} a control policy that indefinitely prevents the robot from entering \(\failureset\). In theory, \(\safeset^*\) can be computed using Hamilton-Jacobi (HJ) reachability analysis, which employs level-set methods to recast the safety filter synthesis problem as an optimal control problem. Its solution follows from solving the dynamic programming \emph{safety Bellman equation}~\cite{mitchell2005time}:

\begin{equation}
\label{eq.safety_bellman_eq}
\valfunc(\state)=\min\{g(\state), \max_{\ctrl\in\cset}\valfunc(\dyn(\state, \ctrl))\},
\end{equation}
which admits the \emph{safety value function} \(V:\xset\rightarrow\reals\) as its fixed-point solution. Given \(\valfunc(\cdot)\), the maximal safe set \(\safeset^*\) is then defined as:
\begin{equation}
\label{eq.safe_set_def}
\safeset^*\coloneq\{\state \in\xset\mid \valfunc(\state)\geq0\}\subset\failureset^\compl.
\end{equation}
For subsequent extension to our proposed \gls{HCSF}, we adopt the $\qfunc$-function \cite{watkins1992q}---a notion widely used in \gls{RL}---to modify \eqref{eq.safety_bellman_eq} into the \emph{state--action safety Bellman equation}:
\begin{equation}
\label{eq.state_action_safety_bellman_eq}
\qfunc(\state, \ctrl)=\min\{g(\state), \max_{\ctrl^\prime\in\cset}\qfunc(\dyn(\state, \ctrl), \ctrl^\prime)\},
\end{equation}
which admits the \emph{state--action safety value function} \(\qfunc : \xset \times \cset \to \reals\) as its fixed-point solution. This formulation remains equivalent to \eqref{eq.safety_bellman_eq} in the sense that $\valfunc(\state)=\max_{\ctrl\in\cset}\qfunc(\state, \ctrl)$.

We now introduce the \emph{\gls{LRSF}}, a value-based safety filter constructed upon the safety value functions \(\qfunc(\cdot,\cdot)\) and \(\valfunc(\cdot)\):
\begin{equation}
\label{eq.LRSF_def}
\ctrl(\state) =
\begin{cases} 
\policyTask(\state), & \forall \state \in \xset\quad\text{s.t.}\quad \valfunc(\state) > 0 \\
\fallback(\state), & \text{otherwise},
\end{cases}
\end{equation}
where the \emph{safe fallback policy} is defined as
\(\fallback(\state)\coloneq \argmax_{\ctrl \,\in\, \cset}\qfunc(\state,\ctrl)\), 
and \(\policyTask(\cdot)\) is any task-oriented policy that does not explicitly account for safety. We refer to \eqref{eq.LRSF_def} as a ``last-resort'' strategy since the filter does not intervene until \(\valfunc(\state) = 0\): the critical point at which the system is about to exit \(\safeset^*\). Upon reaching the boundary of \(\safeset^*\), \gls{LRSF} fully overrides the control with \(\fallback(\cdot)\) to enforce safety. 

Prior work \cite{chen2021fastrack, mitchell2005time, bansal2017hamilton, fisac2019bridging, chen2018hamilton} has established \gls{LRSF} as a fundamental framework applicable for all \gls{HJ} reachability analysis-based safety filters. This is due to its straightforward yet effective design for enforcing safety and its ``least-restrictive'' nature, allowing the task-oriented policy full freedom until the system reaches the boundary of the maximal safe set.

However, \gls{LRSF} also has notable drawbacks. First, its \emph{task-agnostic} fallback policy often leads to \emph{discontinuous and jerky interventions} \cite{hsu2023safety}. This issue can become more pronounced in a shared autonomy setting, where human operators might feel surprised and confused by abrupt overrides \cite{sarter1997team, jamieson2022b737}. Even though \gls{LRSF} offers maximum freedom until the system reaches the boundary of \(\safeset^*\), its interventions that do not take into account the human's input risk diminishing the operator’s sense of control. Furthermore, the time and space complexities required to solve the Bellman equation~\eqref{eq.safety_bellman_eq} scale exponentially with the state-space dimension, rendering grid-based dynamic programming infeasible for real-world safe robot control.

Our proposed \gls{HCSF} addresses these limitations by synthesizing an output that minimally deviates from the human operator’s input, thereby enhancing both agency and smoothness while still enforcing safety (\autoref{sec:HCSF}). In addition, in \autoref{sec:training}, we leverage recent advances in safety \gls{RL}~\cite{fisac2019bridging,hsu2021safety,wang2024magics} to approximate the $\qfunc$-function via \gls{RL}, enabling the synthesis of a best-effort fallback policy \(\fallback(\cdot)\) for high-dimensional systems.

\subsection{Discrete-Time Control Barrier Functions}
In this subsection, we introduce the definition and implementation of the \gls{DCBF}, another well-established approach for value-based safety filtering.

\begin{definition}[Discrete-time CBF~\cite{agrawal2017discrete}]
\label{def.DCBF} 
A function $h:\xset\rightarrow\reals$ is a \gls{DCBF} for system \eqref{eq.system_dynamics} if $\safeset = \{\state \in\xset\mid \cbf(\state)\geq0\}\subset\failureset^\compl$ and $\exists\alpha\in(0, 1]$ that satisfies:
\begin{equation}
\label{eq.DCBF_def}
\sup_{\ctrl\in\cset}\Delta \cbf(\state, \ctrl)\geq-\alpha \cbf(\state),\quad\forall \state\in \xset,
\end{equation}
where $\Delta \cbf(\state, \ctrl)\coloneq \cbf(\dyn(\state, \ctrl))-\cbf(\state)$.
\end{definition}

Unlike \gls{LRSF}, which imposes safety through hard overrides, a \gls{DCBF} enables \emph{smooth safety interventions} by solving an optimization problem that finds the safety-enforcing action closest to the task action \(\ctrl^\task(\state)\):
\begin{subequations}
\label{eq.nlp}
\begin{align}\ctrl(\state)=\argminB_{\ctrl\in\cset}&\quad\left \| \ctrl^\task(\state)-u \right \|^2,& \label{eq.nlp_cost}\\
\st&\quad\Delta \cbf(\state, \ctrl)\geq-\alpha \cbf(\state),
\label{eq.DCBF_constraint}
\end{align}
\end{subequations}
where \eqref{eq.DCBF_constraint} is the \emph{\gls{DCBF} constraint}. The trade-off is that a \gls{DCBF} no longer enforces safety within the maximal safe set \(\safeset^*\) in general. Instead, it encodes safety with respect to a (smaller) safe set $\safeset = \{\state \in\xset\mid \cbf(\state)\geq0\} \subseteq \safeset^*$.

An \gls{HJ} safety value function \(\valfunc(\cdot)\) is closely linked to a \gls{DCBF} in the sense that, if \(\valfunc(\cdot)\) is continuously differentiable, it automatically qualifies as a valid \gls{CBF} for the maximal safe set \(\safeset^*\)~\cite[Sec.~3.2]{hsu2023safety}. This insight enables the use of \(\valfunc(\cdot)\) in a smooth \gls{CBF} safety filter rather than the \gls{LRSF} alternative---an approach that underpins our proposed \gls{HCSF}.

\begin{remark}
\label{rem.DCBF_relax} 
\autoref{def.DCBF} could be relaxed such that \eqref{eq.DCBF_def} is required to hold for all $\state_t\in\{\state \in\xset\mid \cbf(\state)\geq0\}$. A control input $\ctrl_t\in\cset$ that satisfies \eqref{eq.DCBF_constraint} for any function $\cbf(\cdot)$ meeting the relaxed \gls{DCBF} definition still renders the 0-superlevel set of $\cbf(\cdot)$ forward invariant. Such relaxation of \eqref{eq.DCBF_def} still guarantees safety, but loses the set attractiveness property for the 0-superlevel set of $\cbf(\cdot)$ \cite{cortez2021robust}.
\end{remark}

\section{Smooth Human-Centered Safety Filter for Shared Autonomy}
\label{sec:HCSF}
In this section, we introduce a model-free, human-centered safety filter methodology that builds on \gls{HJ} reachability analysis and \gls{DCBF}s.
We present our \gls{HCSF} formulation and highlight its differences from existing safety filters.

Conventional \gls{CBF} safety filter methods similar to \eqref{eq.nlp} typically require knowledge of the system's dynamics~\cite{ames2016control, zeng2021safety, agrawal2017discrete, singletary2022onboard, cortez2021robust, oh2023safety}, even when using learned barrier functions~\cite{robey2020learning, lindemann2021learning, dawson2022safe, so2024train, zhang2025gcbf+, xiao2023barriernet}. 
This requirement arises for one or both of the following reasons: 1) either a full-order or a simplified dynamical model of the system is utilized to synthesize \gls{CBF} candidates, and 2) knowledge of the dynamics (\eg, control affine model) is leveraged at runtime to enforce the \gls{CBF} safety constraint within an \gls{OCP} (\eg, quadratic program).

While some works explicitly aimed to build and deploy model-free \gls{CBF} safety filters, they have so far fallen short of being \emph{fully model-free}---relying on some combination of simplified models of the system dynamics~\cite{li2022bridging, molnar2021model, cohen2024safety}, predefined low-level controllers~\cite{molnar2021model, cohen2024safety}, and handcrafted fallback policies (\eg, evading maneuvers)~\cite{squires2021model}.
Additionally, recent efforts in learning a \gls{CBF} for latent state representations have proven to be effective for partially observable systems, but they still require a control affine dynamical model~\cite{kumar2024latentcbf}.
Such reliance on knowledge of the system dynamics and the deployment environment can significantly limit the applicability of \gls{CBF}s in complex, real-world scenarios where the dynamical model is often unknown and should be treated as a black-box.
On the other hand, a model-free algorithm for learning a policy together with a barrier certificate was proposed recently~\cite{yang2023model}, but it cannot be used to build a safety filter because the learned policy must be deployed at all times.
Finally, we acknowledge a preprint reporting concurrent efforts toward a model-free state--action \gls{CBF} safety filter~\cite{he2023state}. However, it addresses a finite-horizon safety problem and enforces safety at runtime like a smooth least restrictive safety filter \cite{borquez2024safety} rather than a \gls{CBF} one, fundamentally differing from our work in both mathematical formulation and enforcement of safety.

To this end, we introduce, to the best of our knowledge, the first CBF safety filter that is \emph{fully model-free}.
We first show that the safety value function $\valfunc(\cdot)$ is itself a valid \gls{DCBF} in the sense of \autoref{def.DCBF}.
Then, using the state--action safety value function $\qfunc(\cdot, \cdot)$ which could be learned scalably through black-box interactions with the system via model-free RL-based \gls{HJ} reachability analysis (\autoref{subsec:neural_synthesis}), we propose a method of enforcing a novel \gls{Q-CBF} safety constraint that does not require any information regarding the dynamics.
While theoretical safety guarantees are contingent on the validity of the learned \gls{CBF} (which may be established through statistical analysis~\cite{lin2024verification} or model-based verification~\cite{robey2020learning,hu2020reach}), this is not the focus of our research. Instead, we show our safety filter achieves an extremely high empirical safe rate and effectively preserves human agency.

We now present the \gls{Q-CBF} formulation.

\begin{figure*}
    \centering
    \includegraphics[width=\linewidth]{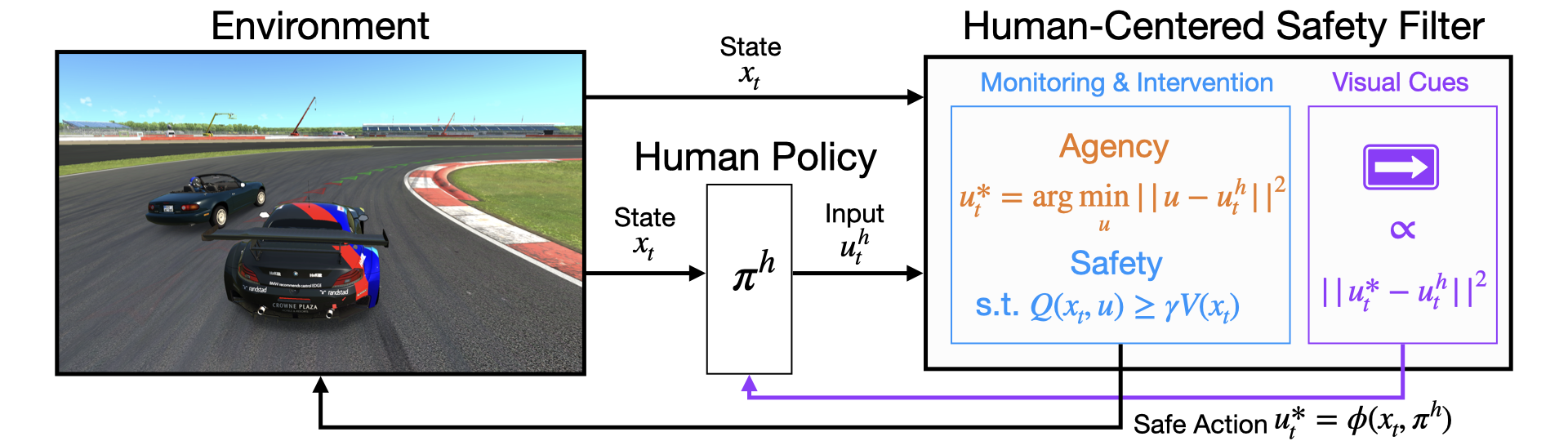}
    \caption{A diagram describing the interaction between a human operator, our proposed \gls{HCSF}, and \gls{AC} game environment. Our \gls{HCSF} utilizes a safety value function that we learn scalably through black-box interactions via model-free \gls{RL}-based \gls{HJ} reachability analysis, and at runtime leverages our novel \gls{Q-CBF} constraint to enforce safety without any knowledge of system dynamics. Moreover, it intervenes minimally and smoothly to enhance human agency and comfort. Finally, our \gls{HCSF} communicates the action modifications to the human driver via visual cues, facilitating transparent human--robot collaborartion.}
    \label{fig:system_control_diagram}
\end{figure*}

\begin{proposition}[\gls{Q-CBF}]
\label{pro:model-free_DCBF}
The safety value function $\valfunc(\state):\xset\rightarrow\reals$, which is a fixed-point solution of the safety Bellman equation \eqref{eq.safety_bellman_eq}, is a valid \gls{DCBF} as defined in \autoref{def.DCBF} and \autoref{rem.DCBF_relax}. The corresponding \gls{DCBF} constraint is:
\begin{equation}
\label{eq.model_free_DCBF_constraint}
\qfunc(\state, \ctrl)\geq\gamma\valfunc(\state),
\end{equation}
where $\gamma\in[0, 1)$. Following the notation from \autoref{def.DCBF}, $\gamma$ is equivalent to $1-\alpha$.
\end{proposition}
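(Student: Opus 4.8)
The plan is to establish three facts in sequence: that $\valfunc$ qualifies as a \gls{DCBF} in the relaxed sense of \autoref{rem.DCBF_relax}, that the $\qfunc$-based inequality \eqref{eq.model_free_DCBF_constraint} is a dynamics-free sufficient condition for the \gls{DCBF} constraint \eqref{eq.DCBF_def}, and that this inequality is always feasible on the maximal safe set, so the safety-filtering optimization \eqref{eq.nlp} remains well-posed.

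First I would extract two consequences of the Bellman equations. Taking the outer minimum in \eqref{eq.safety_bellman_eq} gives $\valfunc(\state) \leq g(\state)$, so $\safeset^* = \{\state : \valfunc(\state) \geq 0\} \subset \{\state : g(\state) \geq 0\} = \failureset^\compl$, confirming the set-containment requirement in \autoref{def.DCBF}. Substituting the identity $\valfunc(\state) = \max_{\ctrl^\prime \in \cset} \qfunc(\state, \ctrl^\prime)$ into the state--action equation \eqref{eq.state_action_safety_bellman_eq} yields the key relation $\qfunc(\state, \ctrl) = \min\{g(\state),\, \valfunc(\dyn(\state, \ctrl))\}$, from which I read off the two facts I rely on throughout: the one-sided inequality $\qfunc(\state, \ctrl) \leq \valfunc(\dyn(\state, \ctrl))$ for every $(\state, \ctrl)$, and $\max_{\ctrl \in \cset} \qfunc(\state, \ctrl) = \valfunc(\state)$.

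Next, to show $\valfunc$ is a valid \gls{DCBF}, I restrict attention to $\safeset^*$ as permitted by \autoref{rem.DCBF_relax}. The same outer minimum in \eqref{eq.safety_bellman_eq} also gives $\valfunc(\state) \leq \max_{\ctrl \in \cset} \valfunc(\dyn(\state, \ctrl))$, hence $\sup_{\ctrl \in \cset} \Delta \valfunc(\state, \ctrl) = \max_{\ctrl \in \cset} \valfunc(\dyn(\state, \ctrl)) - \valfunc(\state) \geq 0$. Since $\valfunc(\state) \geq 0$ on $\safeset^*$, the threshold $-\alpha \valfunc(\state)$ is nonpositive for every $\alpha \in (0, 1]$, so $\sup_{\ctrl \in \cset} \Delta \valfunc(\state, \ctrl) \geq 0 \geq -\alpha \valfunc(\state)$, which verifies \eqref{eq.DCBF_def}. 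This certifies $\valfunc$ as a \gls{DCBF} for every admissible $\alpha$, and in particular for $\alpha = 1 - \gamma$.

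Finally I would connect the $\qfunc$-constraint to safety and feasibility. Using $\qfunc(\state, \ctrl) \leq \valfunc(\dyn(\state, \ctrl))$, any control satisfying \eqref{eq.model_free_DCBF_constraint}, namely $\qfunc(\state, \ctrl) \geq \gamma \valfunc(\state)$, automatically satisfies $\valfunc(\dyn(\state, \ctrl)) \geq \gamma \valfunc(\state) = (1 - \alpha) \valfunc(\state)$, which is exactly the \gls{DCBF} constraint $\Delta \valfunc(\state, \ctrl) \geq -\alpha \valfunc(\state)$; thus \eqref{eq.model_free_DCBF_constraint} is sufficient for safety yet never invokes the dynamics $\dyn$. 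Feasibility then follows from $\max_{\ctrl \in \cset} \qfunc(\state, \ctrl) = \valfunc(\state) \geq \gamma \valfunc(\state)$ on $\safeset^*$: the fallback control $\fallback(\state) = \argmax_{\ctrl \in \cset} \qfunc(\state, \ctrl)$ attains $\qfunc(\state, \fallback(\state)) = \valfunc(\state)$ and hence always meets the constraint. I expect the crux to be conceptual rather than computational---the derivation and exploitation of $\qfunc(\state, \ctrl) = \min\{g(\state), \valfunc(\dyn(\state, \ctrl))\} \leq \valfunc(\dyn(\state, \ctrl))$---since it is precisely this one-sided inequality that lets $\qfunc$ serve as a conservative, dynamics-free surrogate for $\valfunc \circ \dyn$, simultaneously preserving forward invariance and feasibility.
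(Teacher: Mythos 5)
Your proof is correct, and its engine is the same as the paper's: the identity $\qfunc(\state,\ctrl)=\min\{g(\state),\valfunc(\dyn(\state,\ctrl))\}$ obtained by substituting $\valfunc(\cdot)=\max_{\ctrl'\in\cset}\qfunc(\cdot,\ctrl')$ into \eqref{eq.state_action_safety_bellman_eq}, plus feasibility via the fallback action, which is verbatim the paper's \autoref{lemma:UBQfunc}. You differ from the paper in two ways. First, you certify \gls{DCBF} validity of $\valfunc$ directly from \eqref{eq.safety_bellman_eq}, observing $\max_{\ctrl\in\cset}\valfunc(\dyn(\state,\ctrl))-\valfunc(\state)\geq 0\geq-\alpha\valfunc(\state)$ on $\safeset^*$; this is more elementary than the paper, which routes validity through \autoref{lemma:UBQfunc} combined with an equivalence lemma (\autoref{lemma:VfuncequivalenceFromUBQ}), and your argument never needs $\qfunc$ at all for this step. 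Second---and this is the one place you deliver less than the paper---you prove only that \eqref{eq.model_free_DCBF_constraint} is \emph{sufficient} for the \gls{DCBF} constraint $\valfunc(\dyn(\state,\ctrl))-\valfunc(\state)\geq-\alpha\valfunc(\state)$, whereas the paper's \autoref{lemma:VfuncequivalenceFromUBQ} proves the two constraints are \emph{equivalent} on $\safeset^*$. The converse direction is what justifies the proposition's wording that \eqref{eq.model_free_DCBF_constraint} is \emph{the corresponding} \gls{DCBF} constraint rather than a possibly conservative surrogate, and it is also what ensures the \gls{HCSF} program \eqref{eq.HCSF} is no more restrictive than a \gls{DCBF} filter built directly on $\valfunc$---material to the paper's minimal-intervention and human-agency claims, though not to safety. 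The good news is that the converse is a one-liner using facts you already derived: on $\safeset^*$ you have $g(\state)\geq\valfunc(\state)\geq\gamma\valfunc(\state)$, so $\valfunc(\dyn(\state,\ctrl))\geq\gamma\valfunc(\state)$ forces $\qfunc(\state,\ctrl)=\min\{g(\state),\valfunc(\dyn(\state,\ctrl))\}\geq\gamma\valfunc(\state)$. Adding that sentence makes your argument match the full strength of the paper's proof.
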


\begin{proof}
    The proof is deferred to Appendix~\ref{app:Proofs}.
\end{proof}

We emphasize the key difference between the original \gls{DCBF} constraint \eqref{eq.DCBF_constraint} and our formulation \eqref{eq.model_free_DCBF_constraint}---the requirement (or lack thereof) of the system dynamics. 
Given $\qfunc(\cdot, \cdot)$ that satisfies \eqref{eq.state_action_safety_bellman_eq}, \eqref{eq.model_free_DCBF_constraint} does not require the system dynamics for its evaluation. This enables its application to safety-critical systems with black-box dynamics, namely a high-fidelity car racing simulator. On the contrary, evaluating the original \gls{DCBF} constraint \eqref{eq.DCBF_constraint} does require \textit{a priori} knowledge of the system dynamics, which prohibits it from being applied to systems with unknown dynamics. We also note that \eqref{eq.model_free_DCBF_constraint} constrains the system such that \(\valfunc(\cdot)\) cannot decay below 0. In other words, it keeps the system within the maximal safe set \(\safeset^*\), in contrast to many handcrafted \gls{DCBF}s that often suffer from conservative safe sets.

We now leverage \autoref{pro:model-free_DCBF} to define our proposed \gls{HCSF}, a safety filter tailored for shared autonomy settings (Fig.~\ref{fig:rollout}). Our \gls{HCSF} solves an \gls{OCP} at each timestep to find a safe action that minimally deviates from the human control $\ctrl^\human(\state)$ while satisfying the \gls{Q-CBF} constraint. 

\begin{definition}[Human-Centered Safety Filter]
\label{def.HCSF}

\begin{subequations}
\label{eq.HCSF}
\begin{align}\ctrl(\state)=\argminB_{\ctrl\in\cset}&\quad\left \| \ctrl^\human(\state)-u \right \|^2,& \label{eq.HCSF_cost}\\
\st&\quad\qfunc(\state, \ctrl)\geq\gamma\valfunc(\state),
\label{eq.HCSF_constraint}
\end{align}
\end{subequations}
where $\gamma\in[0, 1)$ is a design parameter that dictates how quickly the safety value function is allowed to decrease over a single timestep.
\end{definition}

The \gls{Q-CBF} constraint ensures that the safety value function does not decay below the specified threshold $\gamma \valfunc(\state)$ at each timestep. The recursive feasibility of \gls{HCSF} is a direct consequence of $\valfunc(\cdot)$ being a valid \gls{DCBF}, as stated in \autoref{pro:model-free_DCBF}. This is formalized in the following proposition:

\begin{proposition}[Recursive Feasibility of HCSF] \label{pro:recursive_feasibility}
The optimization problem in Eq.~\eqref{eq.HCSF} is recursively feasible for $\forall\gamma \in [0, 1)$, given any initial state $\state \in \safeset^*$.
\end{proposition}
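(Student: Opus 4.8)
The plan is to prove \autoref{pro:recursive_feasibility} by induction over the time index, reducing recursive feasibility to two elementary facts that I would verify at every step: (i) whenever the current state lies in $\safeset^*$, the optimization \eqref{eq.HCSF} admits a feasible point, and (ii) applying \emph{any} feasible control keeps the successor state inside $\safeset^*$. Chaining these two facts shows that $\safeset^*$ is forward invariant under the filtered dynamics, which is precisely the statement of recursive feasibility. I would first record the structural identity that ties $\qfunc$ to $\valfunc$ at the successor state: combining the state--action safety Bellman equation \eqref{eq.state_action_safety_bellman_eq} with the relation $\valfunc(\state) = \max_{\ctrl \in \cset} \qfunc(\state, \ctrl)$ gives $\qfunc(\state, \ctrl) = \min\{g(\state), \valfunc(\dyn(\state, \ctrl))\}$, which is the workhorse for both parts.

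For the feasibility claim (i), rather than argue abstractly I would exhibit an explicit feasible control, namely the fallback policy $\fallback(\state) = \argmax_{\ctrl \in \cset} \qfunc(\state, \ctrl)$. By definition $\qfunc(\state, \fallback(\state)) = \valfunc(\state)$, and for $\state \in \safeset^*$ we have $\valfunc(\state) \geq 0$; since $\gamma \in [0,1)$ this yields $\qfunc(\state, \fallback(\state)) = \valfunc(\state) \geq \gamma \valfunc(\state)$, so $\fallback(\state)$ satisfies the \gls{Q-CBF} constraint \eqref{eq.HCSF_constraint}. Hence the feasible set of \eqref{eq.HCSF} is nonempty at every $\state \in \safeset^*$.

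For the invariance claim (ii), let $\ctrl^\star$ be any feasible solution of \eqref{eq.HCSF} at a state $\state \in \safeset^*$ and set $\state^+ = \dyn(\state, \ctrl^\star)$. The constraint together with $\gamma \geq 0$ and $\valfunc(\state) \geq 0$ gives $\qfunc(\state, \ctrl^\star) \geq \gamma \valfunc(\state) \geq 0$. Substituting the identity above, $\min\{g(\state), \valfunc(\state^+)\} \geq 0$, and in particular $\valfunc(\state^+) \geq 0$, i.e. $\state^+ \in \safeset^*$. Because the filter by construction applies a feasible $\ctrl^\star$, the realized next state always lands back in $\safeset^*$.

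Concatenating (i) and (ii) closes the induction: from any $\state \in \safeset^*$ the problem is feasible, the applied action steers the system to a state again in $\safeset^*$, and the argument repeats for all subsequent timesteps. I expect the only genuinely delicate point to be the derivation of $\qfunc(\state, \ctrl) = \min\{g(\state), \valfunc(\dyn(\state, \ctrl))\}$ from the Bellman equation and the existence of the $\argmax$ defining $\fallback$; once those are in hand, every remaining inequality is immediate. Indeed, this argument is really a restatement of the relaxed \gls{DCBF} property of $\valfunc$ certified in \autoref{pro:model-free_DCBF} and \autoref{rem.DCBF_relax}, and I would keep the self-contained Bellman-based version for clarity while noting the connection.
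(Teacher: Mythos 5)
Your proposal is correct and takes essentially the same route as the paper: your step (i) is exactly the paper's Lemma~\ref{lemma:UBQfunc} (feasibility witnessed by the fallback policy $\fallback$, since $\qfunc(\state,\fallback(\state))=\valfunc(\state)\geq\gamma\valfunc(\state)$ on $\safeset^*$), and your step (ii) is the paper's use of the state--action Bellman equation to conclude $\valfunc(\dyn(\state,\ctrl))\geq 0$ for any constraint-satisfying action, with the same recursive chaining. Your version is marginally more explicit in stating the identity $\qfunc(\state,\ctrl)=\min\{g(\state),\valfunc(\dyn(\state,\ctrl))\}$ and in emphasizing that invariance holds for \emph{every} feasible control (not just the fallback), but this is a presentational refinement rather than a different argument.
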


\begin{proof}
    The proof is deferred to Appendix \ref{app:Proofs}.
\end{proof}

Together, \autoref{def.HCSF} and \autoref{pro:recursive_feasibility} establish that our \gls{HCSF} \emph{actively promotes human agency} by selecting an action that remains as close as possible to the human’s intended input, while \emph{ensuring the system never leaves the maximal safe set}. Additional information regarding the practical implementation of our \gls{HCSF} together with our choice of $\gamma$ can be found in Appendix \ref{app:Training}. In the following section, we discuss a series of design choices that enable scalable and efficient implementation of the safety filters, along with details on their deployment in a high-fidelity car racing simulation.

\section{Application of HCSF to High-Speed Racing}
\label{sec:training}

Our \gls{HCSF} design in \autoref{sec:HCSF} assumes the knowledge of the state--action safety value function \(\qfunc(\cdot, \cdot)\). However, directly solving \eqref{eq.state_action_safety_bellman_eq} for high-dimensional systems is intractable---a manifestation of the notorious ``curse of dimensionality.'' In this section, we leverage recent advances in safety \gls{RL} to address this challenge and learn both the \(\qfunc\)-function and the best-effort fallback policy in a high-fidelity racing environment. Specifically, we begin by describing the environment setup and then detail the observation, action, safety margin function, and episode termination conditions. Next, we outline our multi-phase training pipeline, which includes warmup and initialization phases that expedite learning by frequently exposing the agent to ``dangerous'' states. Finally, we discuss how we integrate safety filters with visual cues in an effort to enhance the AI’s transparency.

\subsection{Neural Synthesis of Safety Filters}
\label{subsec:neural_synthesis}

Solving the safety Bellman equation~\eqref{eq.safety_bellman_eq} via dynamic programming is intractable for high-dimensional systems, as the computational and memory requirements grow exponentially with the dimensionality of the state space. Even state-of-the-art level-set methods can typically handle at most six continuous state dimensions, rendering them unsuitable for car racing applications.

Recent works \cite{fisac2019bridging, hsu2023isaacs} in safety \gls{RL} address this limitation by proposing a time-discounted variant of \eqref{eq.safety_bellman_eq}, which allows for scalable and effective approximation of the state--action safety value function \(\qfunc(\cdot, \cdot)\) and the best-effort fallback policy \(u^\shield(\cdot)\) via model-free \gls{RL} algorithms, such as \gls{SAC}~\cite{haarnoja2018soft}.

During training, we accumulate a \emph{replay buffer} \(\buffer\) of transitions \(\bigl(x, u, g, x^\prime\bigr)\), where \(g := g(x)\). The critic (\ie, state--action safety value function network) is then trained to predict the future discounted minimum margin by minimizing the Bellman residual, while the actor (\ie, best-effort fallback policy network) is trained to maximize the safety value. Further details on the neural synthesis of safety filters are provided in Appendix~\ref{app:Training}.

\subsection{Human-Machine Interface}

We use Assetto Corsa (\gls{AC}), a high-fidelity racing simulator, together with a gym-compliant interface \cite{remondasimulation}, to facilitate \gls{RL} in a realistic driving environment. Specifically, we adopt the \gls{AC} sim control interface (SCI), which integrates real-time hardware actuation (steering wheel and pedals) with trajectory data from the \gls{AC} game engine and Python implementations of \gls{LRSF} and \gls{HCSF}. A first-person view is displayed on the monitor at 300 Hz. The SCI connects the hardware bus and the host computer via USB to run the safety filter loop (see Fig.~\ref{fig:system_control_diagram}), supporting a 30 Hz control rate---sufficient for a high-fidelity racing environment \cite{remondasimulation, wurman2022outracing}.

The actuation platform consists of a Fanatec CSL DD QR2 wheel base, a Fanatec ClubSport Steering Wheel GT Alcantara V2, and Fanatec Clubsport Pedals V3. To enhance participants’ immersion, we also include a Samsung S39C FHD 75Hz Curved Monitor and a Trak RS6 Racing Simulator rig.

\subsection{Operational Design Domain (ODD)}
\label{subsec:environment}

We select the Silverstone Circuit (GP layout) for both training and deploying the safety filters, as its combination of fast straights and technical corners provides a challenging yet comprehensive proving ground. The ego vehicle is a BMW Z4 GT3, while Mazda MX-5 ND cars serve as opponent vehicles. Since the MX-5 ND is less powerful, it naturally encourages human drivers to attempt overtaking maneuvers. Although a single opponent is deployed during the user study, multiple opponents are used during training to increase on-track interactions and help the ego agent learn effective collision-avoidance strategies. We use 50\% opponent strength and 30\% opponent aggression. The weather condition is set to ``ideal'', track conditions to ``optimum'', temperature to $26^\circ\text{C}$, and wind to $0\,\text{km/h}$. Additionally, traction control, stability control, and ABS are activated, while fuel consumption and tyre wear are turned off.

We learn the state--action safety value function \(\qfunc(\cdot, \cdot)\) and the best-effort fallback policy \(u^\shield(\cdot)\) based on the observation, action, margin function, and episode termination condition detailed below. 

\subsubsection{Observation}
The system operates in a partially observable environment, where each observation is a 133-dimensional vector representing the ego agent’s state and surroundings. This vector includes trajectory data (\eg, speed, angular velocity, tire slip angles, distance to the reference path, and distances to track boundaries computed via ray-casting) from the last four timesteps, as well as the control inputs over the same four timesteps, in order to account for partial observability. The observation vector also contains the look-ahead curvature of the track and information about the nearest opponent, such as relative position, relative velocity, and braking status. A detailed breakdown of this 133-dimensional observation vector is provided in Appendix~\ref{app:Environment}.

\subsubsection{Action}

The normalized action space is defined as $\cset = [-1, 1]^3$, with three continuous channels corresponding to steering, throttle, and brake. Gear changes are handled automatically via the gearbox feature provided by \gls{AC}.

\subsubsection{Margin Function}

The margin function $g : \xset \to \reals$ is defined as the minimum between the signed distance to the track boundary and the signed distance to the nearest opponent, ensuring proximity-based safety constraints for both the environment and opponent vehicles. The corresponding failure set $\failureset$ is defined as in \eqref{eq.failure_set_def}.

\subsubsection{Episode Termination}

If the margin function becomes negative or if the vehicle remains stationary for an elongated time period, the episode terminates and the vehicle is automatically reset to the closest point on the reference path. These episode termination conditions apply to both the neural synthesis of safety filters and the user study.

\subsection{Warmup and Initialization}

In \gls{AC}, resetting the vehicle places it stationary on the closest point of the reference path, making it difficult to gather training data for near-failure scenarios where safety filters are most critical. To address this, we use a two-phase pipeline (warmup and initialization) that accelerates the vehicle to higher speeds under a performance-oriented policy (warmup), then systematically pushes it into more challenging or hazardous states (initialization). By deliberately inducing these ``dangerous'' situations (including adversarial and random maneuvers near the boundary of the safe set), our pipeline ensures the agent encounters a wide range of conditions where the safety filters must intervene effectively. This design both reduces wall-clock training time by avoiding trivial low-speed states and promotes robust learning, as the filter gains experience in precisely the situations where safety intervention is needed most. Full details on the warmup and initialization phases can be found in Appendix~\ref{app:Training}.

\subsection{Training Details}
We train the policy and value networks on a single RTX 4090–equipped machine with an AMD Ryzen 9 7950X 16-core processor. A replay buffer of size 20 million is used, and the actor and critic networks are each updated once per environment step. Both the policy and value networks are three-layer MLPs with 256 neurons per hidden layer, trained with a batch size of 128. The networks are trained for over three weeks (12.8 million environment steps) using the Adam optimizer. We use the same neural approximation of $\qfunc(\cdot, \cdot)$ for all safety filters. Further details on training hyperparameters are provided in Appendix \ref{app:Training}.

\subsection{Visual Cues}

\begin{figure}
    \centering
    \includegraphics[width=\linewidth]{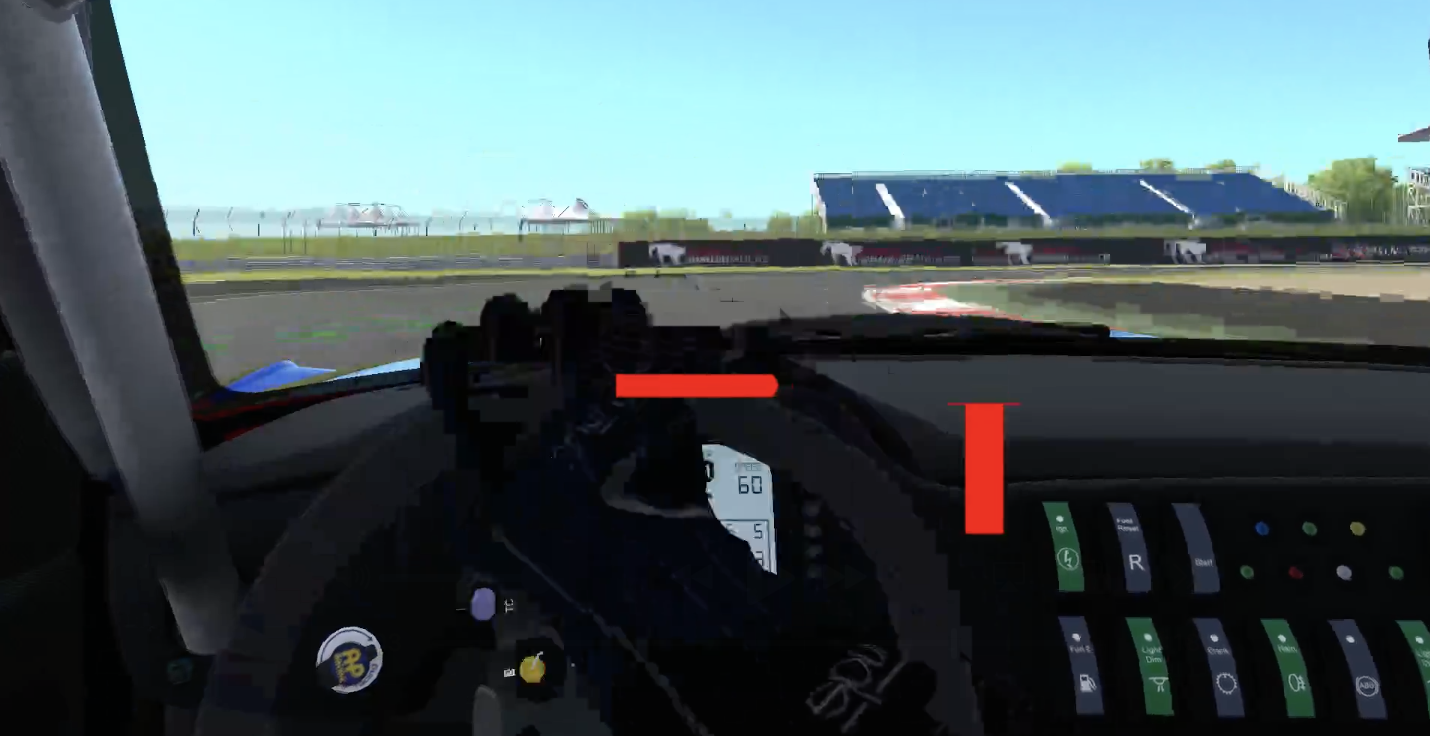}
    \caption{Our HCSF displays two types of visual cues: horizontal arrows that reflect the modifications made to the steering input, and vertical arrows that indicate the corrections made to the throttle/braking inputs. The length of each arrow is proportional to the magnitude of modification made to the corresponding input channel.}
    \label{fig:enter-label}
\end{figure}

In our framework, low-bandwidth visual cues foster transparency and collaboration between the human driver and a safety filter. Whenever an intervention occurs, vertical and horizontal arrows on the screen show both the direction and magnitude of the AI’s corrections relative to the driver’s original input. Specifically, the arrow’s orientation indicates whether the AI is steering more to the left or right compared to the human, or braking more or less compared to the human, while the arrow’s length is proportional to the magnitude of that difference. By mapping each cue to a distinct control channel, we avoid unnecessary information loss. We also omit audio cues, which could add cognitive load or distraction in high-speed scenarios. As illustrated in Fig.~\ref{fig:enter-label}, this setup allows the driver to immediately recognize when and how the system intervenes.

During our user study, all participants were shown a color-coded reference path, which is a series of green and red arrowheads on the track (Fig.~\ref{fig:rollout}). Green arrowheads indicate acceleration, and red arrowheads advise deceleration. This visual aid allows drivers to navigate effectively without prior expertise in sim racing.

\section{Experimental Results}
\label{sec:results}

In this section, we present experimental results that provide evidence for our hypotheses:
\begin{itemize}
    \item \textbf{H1}: Our \gls{HCSF} improves \textbf{safety} and user \textbf{satisfaction} without compromising human \textbf{agency} and \textbf{comfort}, compared to having no filter.
    \item \textbf{H2}: Our \gls{HCSF} improves human \textbf{agency}, \textbf{comfort}, and \textbf{satisfaction} without compromising \textbf{robustness}, compared to \gls{LRSF}.
\end{itemize}

To test these hypotheses, we conducted a large-scale user study in the \gls{AC} simulation environment described in \autoref{subsec:environment}. This study involved 83 participants with diverse driving backgrounds, marking the first time the interaction between human operators and safety filters has been systematically investigated.

\subsection{Baselines} 
\label{subsec:baselines}
We compare our proposed \gls{HCSF} against two baselines---\emph{\gls{LRSF}} and \emph{unassisted driving}---by examining both quantitative trajectory data and qualitative user experience.

For \gls{LRSF}, we follow the formulation in \eqref{eq.LRSF_def}. This filter only intervenes when the system reaches the boundary of the safe set and does not consider the human operator’s input during interventions. We anticipate that such an abrupt approach that disregards human intent may confuse drivers and undermine comfort, thus increasing the risk of automation surprise. In contrast, our \gls{HCSF} solves the optimization problem \eqref{eq.HCSF} to minimally modify the human action while still satisfying the \gls{Q-CBF} constraint, which we hypothesize will enhance human agency, comfort, and overall satisfaction compared to \gls{LRSF}. 
To ensure a fair comparison between our \gls{HCSF} and \gls{LRSF}, we employ the same neural approximation of the state--action safety value function \(\qfunc(\cdot,\cdot)\) for both value-based safety filters. This ensures that \emph{both filters rely on the same safety monitor, while their interventions may differ \eqref{eq.LRSF_def}, \eqref{eq.HCSF}}. In addition, \emph{each filter’s visual cues follow the same proportionality rule}, maintaining consistency in how interventions are conveyed to human drivers.

We also include a control group that receives no safety filter assistance. This allows us to capture any unassisted learning effect---where participants may improve simply through practice---and to gauge the placebo effect of believing one might be assisted by AI, even when no assistance is provided. To control for this placebo effect, all participants are informed that they may receive AI assistance, regardless of their actual assignment. Given that our \gls{HCSF} should substantially reduce accidents, we hypothesize it will achieve superior safety and user satisfaction compared to the unassisted group. Moreover, thanks to its smooth, human-centered interventions, we expect our \gls{HCSF} to preserve human agency and smoothness relative to having no safety filter at all.

\subsection{Metrics} 
\label{subsec:metrics}
We evaluate our core hypotheses using four core metrics: \emph{robustness}, human \emph{agency}, \emph{comfort}, and overall \emph{satisfaction}. In addition, we assess four \emph{filter-specific metrics}---trustworthiness, predictability, interpretability, and competence---to gain further insight into human–safety filter interactions, even though these filter-specific measures are not directly tied to our hypotheses.

\subsubsection{Robustness}
We evaluate the robustness of safety filter–assisted decision making for human drivers using three forms of quantitative trajectory data: out-of-track incidents (per minute), collisions (per minute during close-proximity interaction), and failures (per minute).

Since out-of-track incidents and collisions are two different modes of failure, we normalize them separately. Out-of-track incidents can occur at any moment in a driving session (\eg, a driver might instantly veer off track by sharply flicking the steering wheel), so we divide the total count by the session length. By contrast, collisions can only happen when the ego vehicle is close to an opponent, so we normalize the collision count by the total time spent within a specified distance threshold.

We also gather qualitative data on how robust participants perceive the interaction to be. Even if the quantitative trajectory data indicate strong robustness, drivers may not necessarily feel confident. For example, while many modern vehicles feature lane keeping assist (LKA) systems that effectively reduce lane departures, they can cause a vehicle to bounce between adjacent lane markings. Some drivers lose confidence because of this ``bouncy'' behavior, leading them to disable the feature despite its technical effectiveness \cite{roozendaal2021haptic}. Therefore, we ask participants whether they feel confident in their ability to drive safely throughout each session.

\subsubsection{Agency}
Prior work in cognitive psychology \cite{moore2016sense, braun2018senses, loehr2022sense} and \gls{HRI} \cite{mueller2023toward, prescott2024understanding, collier2025sense} interprets human agency as the correspondence between intended and actual actions. This suggests that agency, in the context of safety filtering, is better framed as a \emph{game of degree} (\ie, how much the system intervenes), rather than a \emph{game of kind} (\ie, whether it intervenes at all). Consistent with this interpretation, we broadly define human agency as the \emph{degree} of control that the driver has over the vehicle. To quantify agency, we define the input modification (I.M.) measure as the $\ell_2$-norm of the difference between the human operator’s raw input and the final input applied after safety filtering:
\begin{equation}
    \text{I.M.}(t) = ||\ctrl_t^\human - \safetyFilter(\state_t, \ctrl_t^\human)||_2,
\end{equation}
where $\safetyFilter$ can be any of the three safety filters in $\{\text{HCSF, LRSF, none}\}$.

In addition to this quantitative trajectory data, we also gather qualitative data on human agency by asking participants how much control they felt they had over the vehicle.

\subsubsection{Comfort}
We measure human comfort using two key forms of quantitative trajectory data: the jerk magnitude and the magnitude of the first-order control input difference.

First, we define the jerk magnitude as:
\begin{equation}
    \text{jerk}(t) = ||\dddot{p}_t||_2,
\end{equation}
where $p_t$ is the vehicle’s position in three-dimensional Euclidean space. Large jerk values can lead to discomfort or motion sickness, whereas smaller jerk values typically indicate a smoother, more comfortable ride \cite{sohn2019driveability, werling2012optimal}.

Next, the squared magnitude of the first-order control input difference (I.D.) is given by:
\begin{equation}
    \text{I.D.}(t) = ||\safetyFilter(\state_t, \ctrl^\human_t)-\safetyFilter(\state_{t-1}, \ctrl^\human_{t-1})||^2_2,
\end{equation}
where smaller I.D. values are associated with better passenger comfort \cite{wang2019path}. In this study, we use I.D. to compare the smoothness of input trajectories obtained from our human-centered optimization problem \eqref{eq.HCSF} with those produced by the best-effort fallback policy \eqref{eq.LRSF_def}.

Finally, we also collect qualitative feedback by asking participants how ``smooth'' they perceived the driving experience to be. We use the term ``smooth'' instead of ``comfortable'' for two reasons: 1) since participants receive no actual motion feedback, it is difficult for them to assess ride comfort, and 2) ``comfortable'' can be mistaken for ``confident,'' given their similarity in everyday usage, which could introduce unwanted ambiguity in the responses.

\subsubsection{Satisfaction}
Because overall satisfaction cannot be measured through quantitative trajectory data, we rely entirely on qualitative data obtained from participants regarding their satisfaction with the overall driving experience. 

\subsubsection{Filter-Specific Metrics}
For additional insights into how human drivers interact with the safety filters, we collect qualitative data on the trustworthiness, predictability, interpretability, and competence of the safety filter assistance. We refer to these four items as ``AI-specific metrics'' because they explicitly address the interventions made by the safety filter, whereas our four core metrics focus on the broader driving experience during a session. For instance, it makes sense to evaluate the smoothness of a session even if no safety filter is deployed, but asking about the trustworthiness of an intervention is only applicable when a filter is actively involved.

\subsection{User Study Design}
\label{subsec:human_study_design}

In this subsection, we describe the design of our user study, including details on participant recruitment, group assignments, and experiment procedure.

\subsubsection{Participant Recruitment} We reached out to potential participants both online and offline. Specifically, we used university-wide mailing lists, social media advertisements, and printed posters to engage faculty, staff, and students at our institution, aiming to minimize biases related to age, gender, or academic major and to ensure a wide range of perspectives. No monetary or academic compensation was provided to participants.

\subsubsection{Group Assignments}
We consider prior driving and video gaming experience to be the most influential factors for our study’s results and explicitly controlled for them. We asked participants to report their driving experience level on a five-point scale, following these criteria: 

\begin{enumerate} 
    \item I have no experience in either real or simulated driving.
    \item I have driving experience but no racing experience.
    \item I have some experience in racing, but only in simulation.
    \item I have extensive racing experience, but only in simulation.
    \item I have real car racing experience.
\end{enumerate}

We assigned participants to each group so that their average initial skill levels remained comparable, ultimately producing group sizes of 25–29 participants. Table~\ref{tab:driving_skill} presents the mean and standard deviation of each group’s reported driving experience. A one-way \gls{ANOVA} followed by Tukey’s \gls{HSD} indicates that p-values for comparisons between any two groups exceed 0.80, suggesting no statistically significant differences in initial skill levels across groups.

\begin{table}[h!]
    \centering
    \renewcommand{\arraystretch}{1.2}
    \begin{tabular}{cccc}
    \hline 
    & HCSF & LRSF  & None \\
    \hline \hline
    Number of Participants & $29$ & $29$ & $25$ \\ \hline
    Average Initial Skill level & $2.17\pm0.54$ & $2.21\pm0.86$ & $2.28\pm0.68$ \\ \hline
    \end{tabular}
    \caption{Number of participants in and the average initial skill level of each group.}
    \label{tab:driving_skill}
\end{table}

\subsubsection{Experiment Procedure} 
In this study, each participant completes three separate driving sessions. During the first session, participants drive for five minutes without any safety filter assistance. This initial session establishes a baseline for each participant’s initial skill level, complementing the group assignment procedure, which also ensures that average initial skill levels remain similar across all groups. All participants are explicitly informed that no assistance is provided in this session. In the second session, participants drive for ten minutes under the assistance of the safety filter corresponding to their assigned group, although they are only told that they may receive AI assistance (with no specification of its type). Finally, as in the first session, participants drive for five minutes without assistance in the third session. This third session is intended to reveal potential over-reliance on safety filters; if participants fully trust the robustness of the filter during the second session, they might rely heavily on it, resulting in marginal or no improvement in their own driving skills. By comparing trajectory data across all three sessions, we can analyze the possibility of over-reliance.

Immediately after each session, participants answer questions regarding the four core metrics: robustness, agency, comfort, and satisfaction. The filter-specific metrics (trustworthiness, predictability, interpretability, and competence) are only queried after the second session, when participants may actually experience safety filter interventions. Each metric is measured using two statements: an affirmative form and a negated form. This ``reverse-coded'' approach helps detect inattentive or biased responses and ensures that the underlying construct is captured from different angles, thus enhancing the reliability of the qualitative measures. We verify that each pair of affirmative and negated statements measures the same construct by conducting a Cronbach’s alpha test, the results of which are reported in Table~\ref{tab:Cronbach_alpha} in the Appendix. All items use a five-point Likert scale. To interpret a participant’s overall response to a given metric, we average the rating from the affirmative statement with $6 - \text{(the rating from the negated statement)}$.

\begin{table}[h!]
    \centering
    \renewcommand{\arraystretch}{1.2}
    \begin{tabular}{ccc}
    \hline
    \textbf{Driving Session} & \textbf{Time} & \textbf{Assistance} \\
    \hline \hline
    Session 1 & 5 minutes & None \\
    \hline
    Session 2 & 10 minutes & LRSF, HCSF, or None\\
    \hline
    Session 3 & 5 minutes & None \\
    \hline
    \end{tabular}
    \caption{Duration of and type of assistance provided in each session.}
    \label{tab:session_time}
\end{table}

\begin{table*}[h!]
    \centering
    
    \renewcommand{\arraystretch}{1.4}
    
    \begin{tabular}{>{\centering\arraybackslash}m{3cm}|m{12cm}}
    \hline
    \textbf{Metric} & \textbf{Related Questions} \\
    \hline\hline
    \textbf{Robustness}
    & \textbf{affirmative}: I felt confident that I could drive safely throughout the race.
    \newline
    \textbf{negated}: I felt nervous about whether I would be able to finish the race without accidents. \\
    \hline
    \textbf{Agency}
    & \textbf{affirmative}: I felt I was in control of the vehicle throughout the race.
    \newline
    \textbf{negated}: There were times when the vehicle wasn't doing what I wanted. \\
    \hline
    \textbf{Comfort}
    & \textbf{affirmative}: The driving experience felt smooth overall.
    \newline
    \textbf{negated}: There were times when the drive felt jerky. \\
    \hline
    \textbf{Satisfaction}
    & \textbf{affirmative}: Overall, I am happy with how the race went.
    \newline
    \textbf{negated}: This race did not go as well as I thought it could have. \\
    \hline
    \textbf{Trustworthiness}
    & \textbf{affirmative}: I trusted the AI Assistant to keep me safe throughout the race.
    \newline
    \textbf{negated}: I felt uneasy about whether the AI Assistant would get me into an accident. \\
    \hline
    \textbf{Predictability}
    & \textbf{affirmative}: The AI Assistant's actions were predictable.
    \newline
    \textbf{negated}: The AI Assistant's actions surprised me at times. \\
    \hline
    \textbf{Interpretability}
    & \textbf{affirmative}: The AI Assistant's actions made sense to me.
    \newline
    \textbf{negated}: Sometimes, I couldn't figure out why the AI Assistant was taking actions. \\
    \hline
    \textbf{Competence}
    & \textbf{affirmative}: The AI Assistant seemed to have a good grasp of the situation.
    \newline
    \textbf{negated}: Sometimes, the AI Assistant didn't seem to know what it was doing. \\
    \hline
    \end{tabular}
    \caption{4 core metrics and 4 filter-specific metrics together with their corresponding affirmative/negated questions.}
    \label{tab:questionnaire}
\end{table*}

\subsection{Results}

In this subsection, we present the results of our user study. We validate our hypotheses on a metric-by-metric basis, using both qualitative and quantitative data for the four core metrics: robustness, agency, comfort, and satisfaction. We also analyze qualitative responses for filter-specific metrics to gain further insight into how participants interact with each safety filter.

When analyzing quantitative or qualitative data that follow a two-factor (session and group) repeated-measures design---i.e., data collected across multiple sessions with an interest in comparing different groups---we employ a Mixed \gls{ANOVA} model to account for both within-subject (session) and between-subject (group) factors. Whenever a significant interaction between session and group emerges in the Mixed \gls{ANOVA} test, we proceed to the \gls{SME} analysis to check how the groups differ separately in session 1 and session 2. Then, if the \gls{SME} analysis results in statistical significance, we perform Tukey's \gls{HSD} test to pinpoint precisely how each group's distribution of data diverges from others within a session, thereby clarifying the role of the safety filters and how they influence each metric.

For data that do not meet this two-factor repeated-measures structure (i.e., data not measured over multiple sessions), we use a one-way \gls{ANOVA} that considers only the between-subject (group) factor. If the \gls{ANOVA} result indicates statistical significance, we then apply Tukey's \gls{HSD} test to compare the distributions pairwise for each group combination.

For all numerical data depicted in bar or box plots, we illustrate statistical significance using asterisks to compare pairs of groups. Formally, under the null hypothesis $H_0$, we assume the data from any two groups come from the same underlying distribution. If a statistical test (\eg, Mixed \gls{ANOVA} followed by \gls{SME} and Tukey's \gls{HSD}) indicates that $p<0.05$, we reject $H_0$ at the 5\% level and mark the pair with one asterisk ($\ast$). Likewise, we use two asterisks ($\ast\ast$) when $p<0.01$, and three asterisks ($\ast\ast\ast$) when $p<0.001$. Thus, more asterisks correspond to stronger evidence against $H_0$ and hence a more statistically significant difference between the groups’ data.

\subsubsection{Robustness}
As shown in Fig.~\ref{fig:safety_metrics}, our \gls{HCSF} maintained near-zero failures across the study, demonstrating strong robustness under diverse human actions. It significantly reduced out-of-track incidents per minute and overall failures per minute compared to the unassisted group. While our \gls{HCSF} also reduced out-of-track incidents and collisions relative to \gls{LRSF}, these differences were not statistically significant.

A similar trend emerges in the qualitative results. As shown in Fig.~\ref{fig:box_s1s2_and_delta_robustness}, participants assisted by our \gls{HCSF} in session 2 felt significantly more confident in the safety filter’s robustness than those in the unassisted group. In addition, the difference in each participant’s confidence score between sessions 1 and 2 was significantly greater for our \gls{HCSF} than for unassisted driving. Although participants in the \gls{HCSF} group also reported higher confidence than those in the \gls{LRSF} group, this difference was not statistically significant.

Hence, we conclude:
\begin{itemize}
    \item Our \gls{HCSF} dramatically improves safety in decision-making compared to having no filter.
    \item Our \gls{HCSF} is at least as robust as  \gls{LRSF}, if not more.
\end{itemize}

Although our \gls{HCSF} did reduce collisions relative to \gls{LRSF} and unassisted driving, there was no statistically significant difference. We conjecture that this may be due to two main causes. First, during the training of the neural approximation of the state--action safety value function and the best-effort fallback policy, we did not treat the opponent as an adversarial agent; instead, we considered the opponent as a part of the environment. We did so because treating the opponent as adversarial would prohibit side-by-side racing---making overtaking impossible---since the opponent could easily induce a collision by side-swiping the ego vehicle. Additionally, \gls{AC} does not allow users to dictate actions for opponent vehicles, thus preventing the setup of an adversarial multi-agent \gls{RL}. In other words, in order for a car race to occur, we cannot guarantee absolute robustness against collisions. Second, participants frequently attempted overtakes in unorthodox parts of the track, where real-world racing discourages such maneuvers. Although they might successfully overtake without colliding, they often carried excessive speed and ended up driving off-track. Consequently, the number of collisions may have been underreported in session 1 and for the unassisted group during session 2.

\begin{figure*}[!h] 
    \centering
    \begin{subfigure}{0.65\columnwidth}
        \centering
        \includegraphics[width=\textwidth, trim=5 20 5 20, clip]{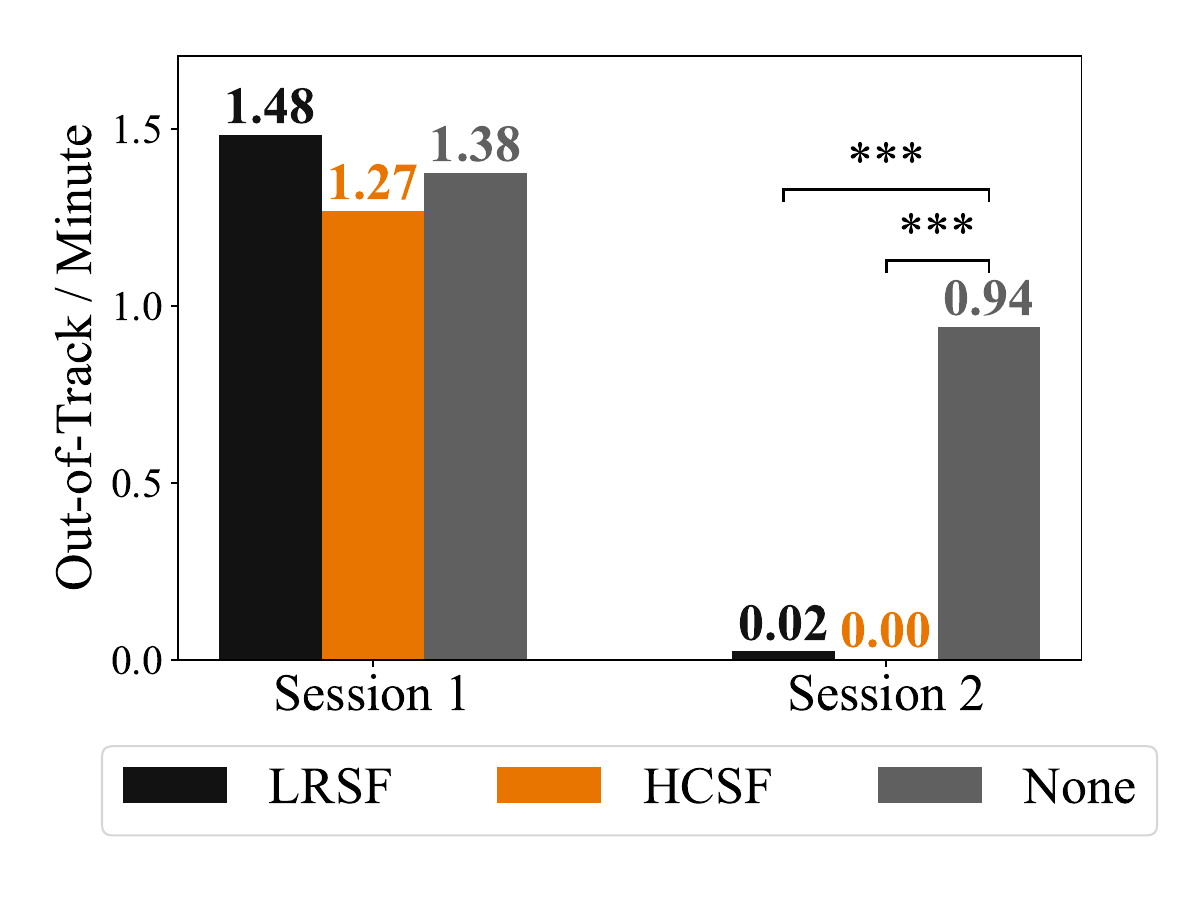}
        \caption{Out-of-track incidents per minute.}
        \label{fig:out_of_track_per_min_side_by_side}
    \end{subfigure}
    \hfill
    \begin{subfigure}{0.65\columnwidth}
        \centering
        \includegraphics[width=\textwidth, trim=5 20 5 20, clip]{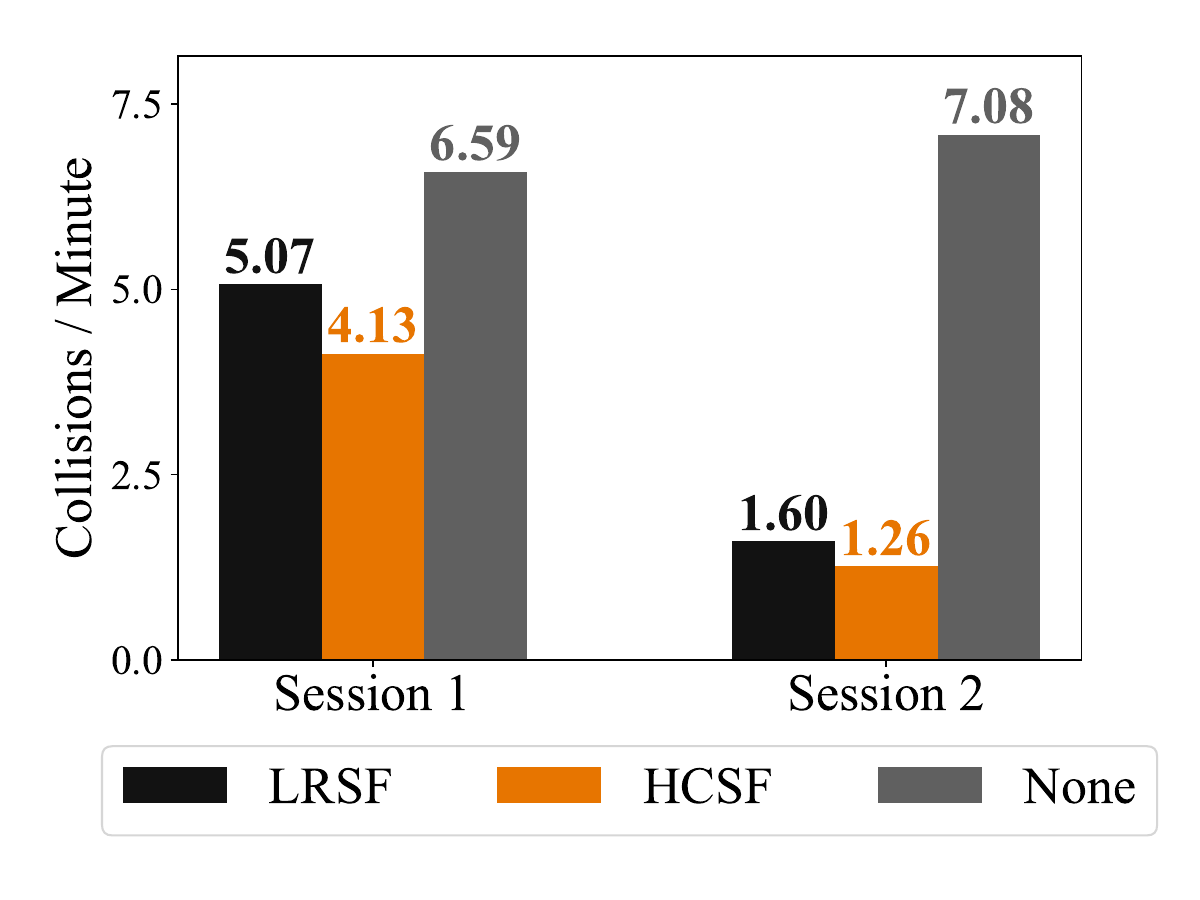}
        \caption{Collisions per minute ($<3m$ distance).}
        \label{fig:collisions_per_prox_side_by_side_3m}
    \end{subfigure}
    \hfill
    \begin{subfigure}{0.65\columnwidth}
        \centering
        \includegraphics[width=\textwidth, trim=5 20 5 20, clip]{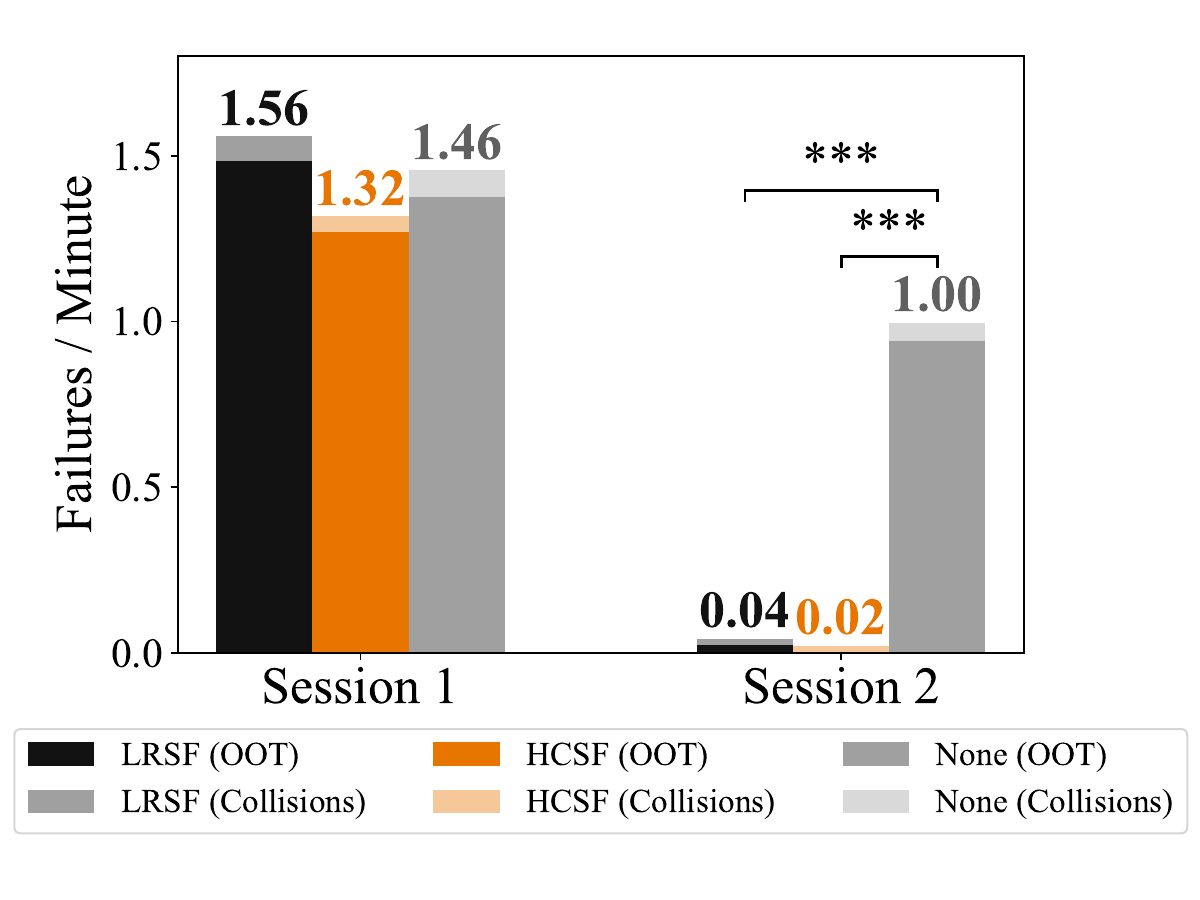}
        \caption{Total number of failures per minute.}
        \label{fig:failures_per_min_stacked}
    \end{subfigure}
    \caption{Our \gls{HCSF} achieved near-zero failures throughout the user study, demonstrating significant enhancement in safety compared to unassisted human driving. Although our \gls{HCSF} outperformed \gls{LRSF} in both failure modes, the differences were not statistically significant. Statistical significance is marked with asterisks, where more asterisks indicate larger significance.}
    \label{fig:safety_metrics}
\end{figure*}

\begin{figure}[!h]
    \centering
    \includegraphics[width=\columnwidth]{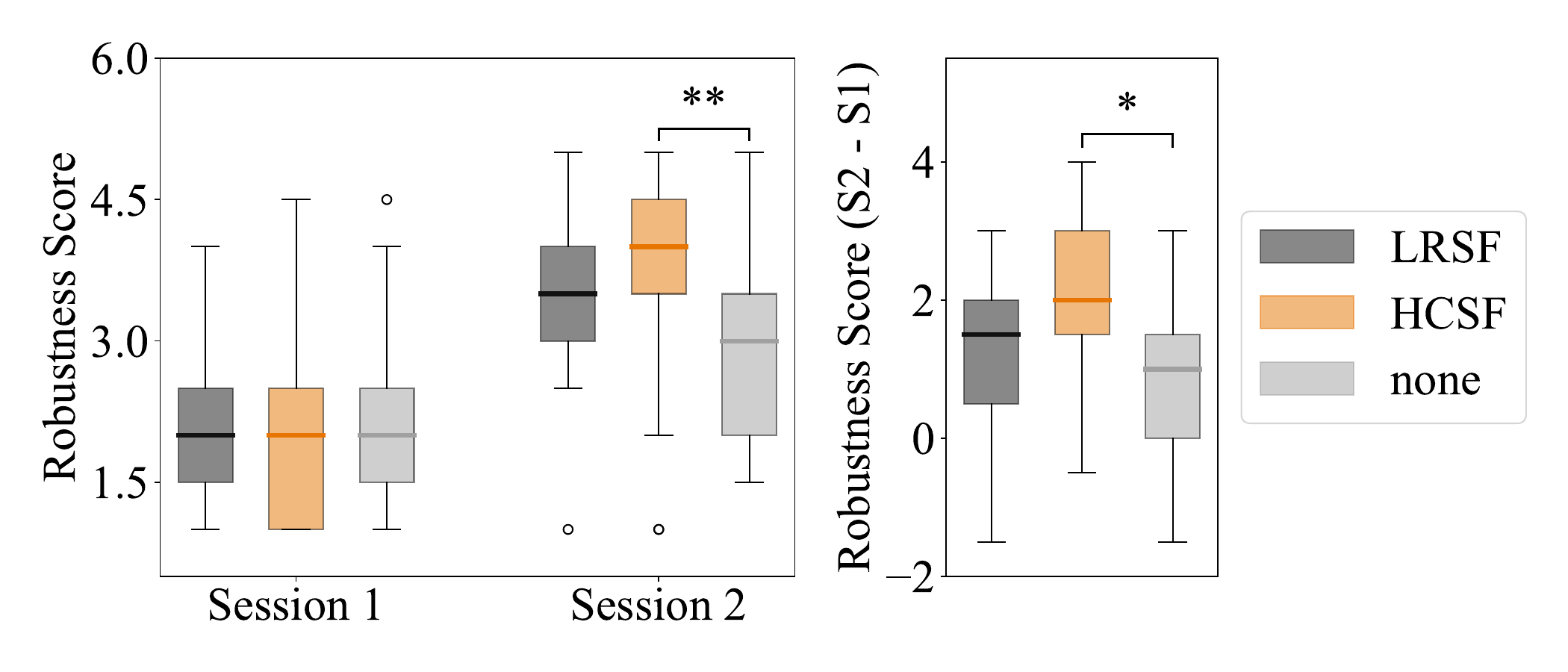}
    \caption{Qualitative measures of robustness across all three groups in both sessions. Participants in the \gls{HCSF} group reported significantly higher confidence in session 2 compared to the unassisted group. Central marks, bottom, and top edges of the boxes indicate the median, 25th, and 75th percentiles, respectively. The maximum whisker length is set to $1.5$ times the standard deviation, which gives $99.7$ percent coverage for normally distributed data. Statistical significance is marked with asterisks, where more asterisks indicate larger significance.}
    \label{fig:box_s1s2_and_delta_robustness} 
\end{figure}

\subsubsection{Agency}
Fig.~\ref{fig:histogram_abs_action_diff_L2_nonzero} reports the \textit{distribution of I.M. among all participants in each group across all timesteps}---including those when no safety filter intervention occurred---for the \gls{HCSF} and \gls{LRSF} groups, thus jointly capturing intervention frequency and magnitude. Notably, our \gls{HCSF} reduces the frequency of interventions with I.M.$>$1.0 relative to \gls{LRSF}, so the filtered actions remain closer to the human operator's intended actions. While our \gls{HCSF} intervenes more frequently than \gls{LRSF} (30.3\% vs. 19.7\% of the driving session duration), its interventions are much smaller in magnitude (0.184 vs. 0.305 in average I.M.).
This result shows that our \gls{HCSF} significantly improves human agency by reducing the input modification magnitude over \gls{LRSF}. By contrast, \gls{LRSF} does not take into account the human action, which can lead to unnecessarily large, abrupt corrections that undermine the driver’s sense of control. This tendency is even more pronounced in the \gls{ECDF} of I.M. in Fig.~\ref{fig:cdf_abs_action_diff_L2_nonzero}, and Fig.~\ref{fig:boxplot_avg_mod_L2_session2_only} shows that the \gls{HCSF} group has a significantly lower average I.M. than the \gls{LRSF} group.

Qualitative measures also strongly support the improved human agency of our \gls{HCSF} over \gls{LRSF}. As shown in Fig.~\ref{fig:box_s1s2_and_delta_agency}, participants assisted by our \gls{HCSF} during session 2 reported a significantly stronger sense of being in control compared to those assisted by \gls{LRSF}. This difference is even more pronounced if we consider the change in each participant’s agency score between sessions 1 and 2. Meanwhile, the unassisted group recorded the highest average agency score overall---unsurprising since they had full control of the vehicle in both sessions---but its difference from the \gls{HCSF} group was not statistically significant in both session 2 scores or the improvement across sessions. Finally, the \gls{LRSF} group exhibited a drop in agency from session 1 to session 2, underscoring how \gls{LRSF} undermines the human operator's sense of control. By contrast, our \gls{HCSF} manages to preserve agency at a level comparable to having no filter at all.

Therefore, we conclude: 
\begin{itemize} 
    \item Our \gls{HCSF} does not compromise human agency compared to having no filter. 
    \item Our \gls{HCSF} significantly enhances human agency compared to \gls{LRSF}. 
\end{itemize}

\begin{figure}[h]
    \centering
    \includegraphics[width=0.75\columnwidth]{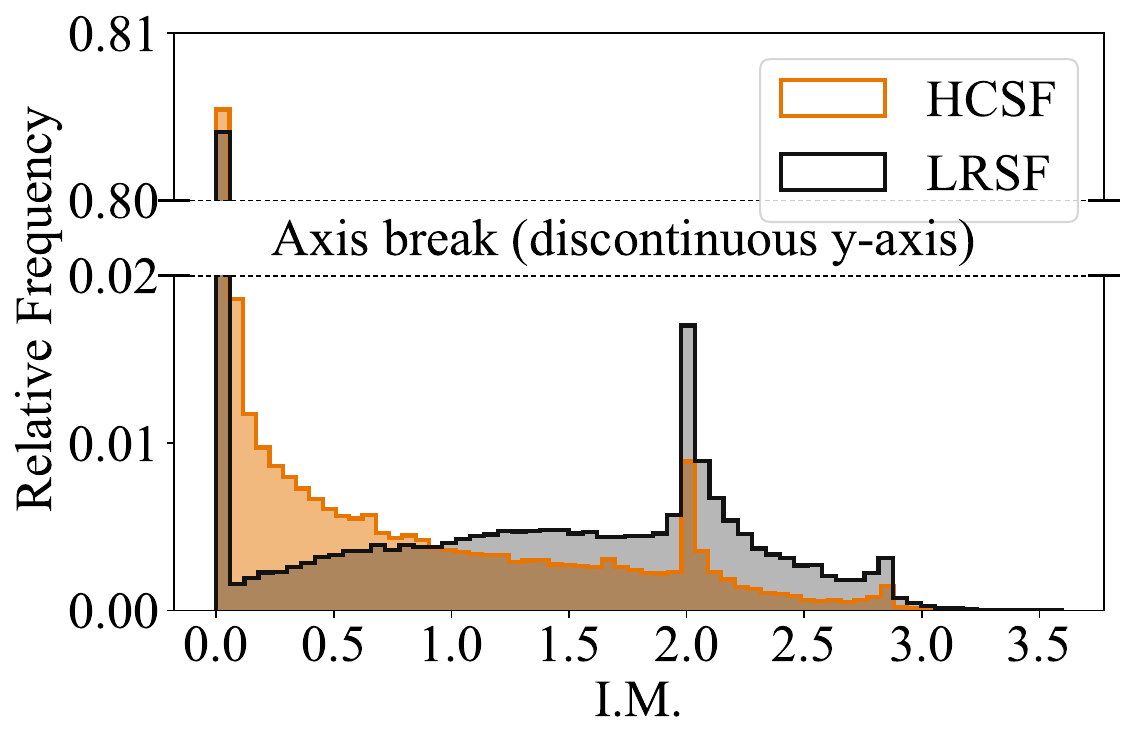}
    \caption{Histogram of I.M. for the \gls{HCSF} and \gls{LRSF} groups over all timesteps, including those when no safety filter intervention occurred. Compared to \gls{LRSF}, which often produces large input modifications that undermine human agency, our \gls{HCSF} reduces the frequency of such large modifications by offering human-centered ``nudges'' of smaller magnitude.}
    \label{fig:histogram_abs_action_diff_L2_nonzero}
\end{figure}

\begin{figure}[!h]
    \centering
    \begin{subfigure}[b]{0.61\columnwidth}
        \centering
        \includegraphics[width=\textwidth]{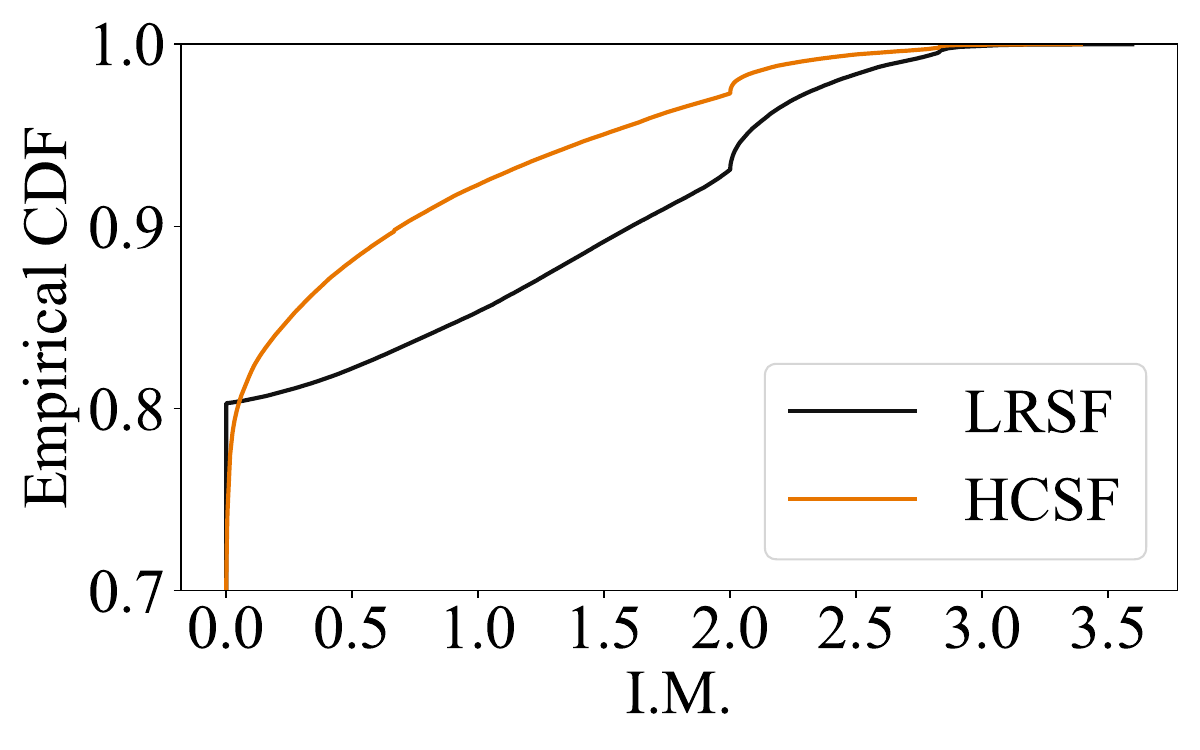}
        \caption{\gls{ECDF} plot of I.M.}
        \label{fig:cdf_abs_action_diff_L2_nonzero}
    \end{subfigure}
    \hfill
    \begin{subfigure}[b]{0.37\columnwidth}
        \centering
        \includegraphics[width=\textwidth]{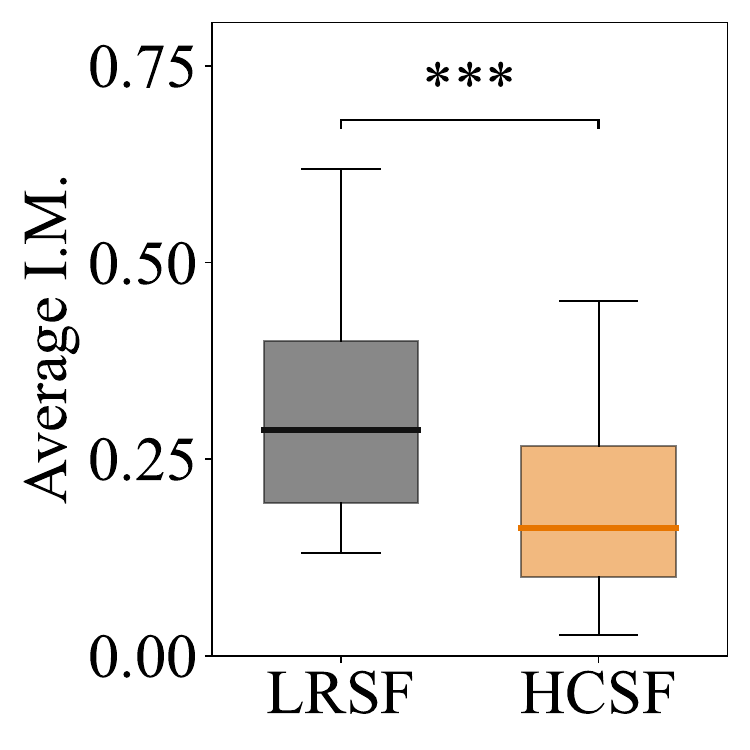}
        \caption{Box plot of average I.M.}
        \label{fig:boxplot_avg_mod_L2_session2_only}
    \end{subfigure}
    \hfill
    \caption{(a) For each group, the presented \gls{ECDF} aggregates the I.M. values from all participants across every timestep, including those when no safety filter intervention occurred. (b) The box plot summarizes the distribution of each participant’s average I.M. within each group. Our \gls{HCSF} yielded a significantly smaller I.M. compared to \gls{LRSF}, suggesting better retention of human agency.}
    \label{fig:agency_cdf_box}
\end{figure}

\begin{figure}[!h]
    \centering
    \includegraphics[width=\columnwidth]{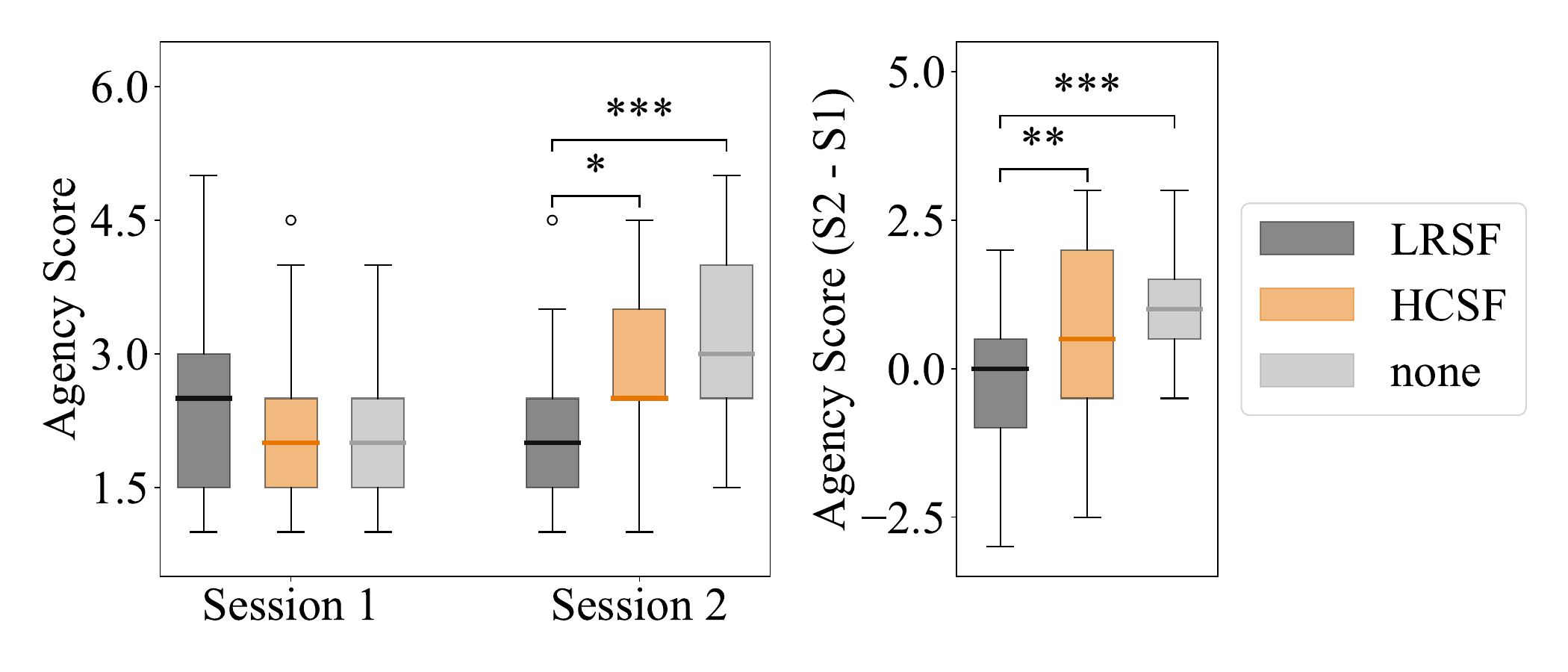}
    \caption{Qualitative measures of human agency across all three groups in both sessions. Participants in the \gls{HCSF} group reported a significantly stronger sense of being in control compared to those in the \gls{LRSF} group, while no significant difference emerged between the \gls{HCSF} and unassisted groups. }
    \label{fig:box_s1s2_and_delta_agency} 
\end{figure}

\subsubsection{Comfort}
Although our \gls{HCSF} does not explicitly address comfort in the optimization problem \eqref{eq.HCSF}, we hypothesize that it will still provide a smoother ride compared to \gls{LRSF}. We expect our \gls{HCSF} to minimize deviations from the human input \eqref{eq.HCSF_cost}, effectively ``anchoring'' the filtered action around the human operator’s commands. Because humans are physically limited in how quickly they can turn the wheel or press the pedals, their inputs naturally form smooth and continuous control trajectories. As a result, the resulting control actions under our \gls{HCSF} assistance are anticipated to inherit some of that smoothness, thus translating into non-jerky movement. In contrast, \gls{LRSF}, which disregards the human input, can cause larger discontinuities in the control signals when switching abruptly between human control and the best-effort fallback policy. Moreover, the best-effort fallback policy itself is a neural network trained with no explicit smoothness reward, further contributing to potentially jerky control.

The \gls{ECDF} plot in Fig.~\ref{fig:AIonly_diff_cdf_s2} illustrates the I.D. distribution among all participants in each group across all timesteps where a safety filter intervened (and hence no data for the unassisted group). We observe our \gls{HCSF} producing a denser distribution at smaller I.D. values. In other words, our \gls{HCSF} tends to yield smoother inputs than \gls{LRSF}. The box plot in Fig.~\ref{fig:avg_1st_diff_box} reinforces this observation by showing a significant difference in average I.D. distribution between the \gls{LRSF} and unassisted groups. Moreover, although the unassisted group exhibits the smallest average I.D., its difference from the \gls{HCSF} group is not statistically significant.

A similar pattern appears in Fig.~\ref{fig:jerk_cdf_s2_AIonly}, which shows the jerk distribution among all participants in each group across all timesteps where a safety filter intervened (and hence no data for the unassisted group). Here, our \gls{HCSF} produces a denser distribution at smaller jerk compared to \gls{LRSF}, indicating smoother, more comfortable driving. The box plot in Fig.~\ref{fig:avg_jerk_box} further confirms that our \gls{HCSF} significantly reduces average jerk relative to \gls{LRSF}, while its difference from unassisted driving is not statistically significant.

Finally, Fig.~\ref{fig:box_s1s2_and_delta_comfort} illustrates the participants' sense of smoothness. Those assigned with \gls{LRSF} reported a decline in their sense of smoothness from session 1 to session 2, indicating discomfort arising from abrupt and discontinuous \gls{LRSF} interventions. In contrast, participants assisted by our \gls{HCSF} reported a significantly higher smoothness score in session 2 compared to those assisted by \gls{LRSF}, and this difference becomes even more pronounced when examining the change in each participant’s score between sessions 1 and 2. Similar to the agency results, the unassisted group exhibited the highest average smoothness score overall, which is unsurprising given the absence of interventions of any sort. Nonetheless, our focus here is on how well our \gls{HCSF} preserves a smooth driving experience; the gap between the \gls{HCSF} and unassisted groups was not statistically significant in either the session 2 scores or in the improvement across sessions.

Therefore, we conclude: 
\begin{itemize} 
    \item Our \gls{HCSF} does not compromise comfort compared to having no filter. 
    \item Our \gls{HCSF} significantly improves comfort compared to \gls{LRSF}. 
\end{itemize}

\begin{figure}
    \centering
    \begin{subfigure}[b]{0.49\columnwidth}
        \centering
        \includegraphics[width=\textwidth]{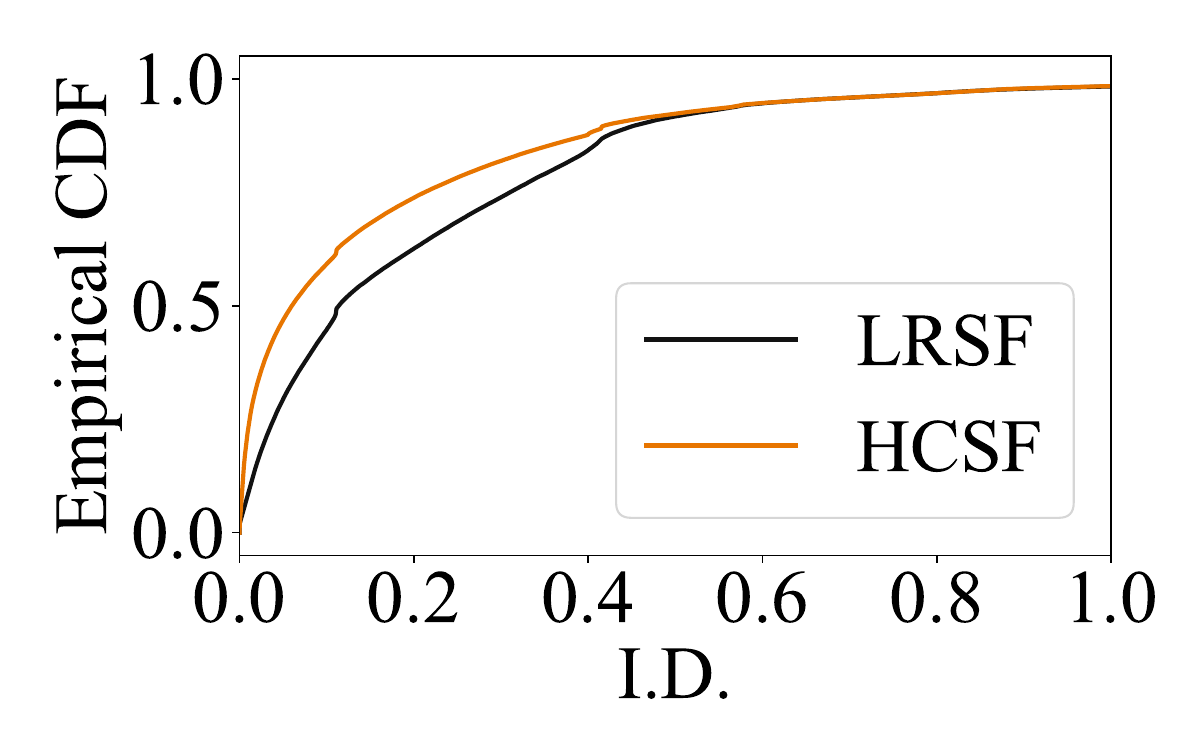}
        \caption{\gls{ECDF} plot of I.D.}
        \label{fig:AIonly_diff_cdf_s2}
    \end{subfigure}
    \begin{subfigure}[b]{0.49\columnwidth}
        \centering
        \includegraphics[width=\textwidth]{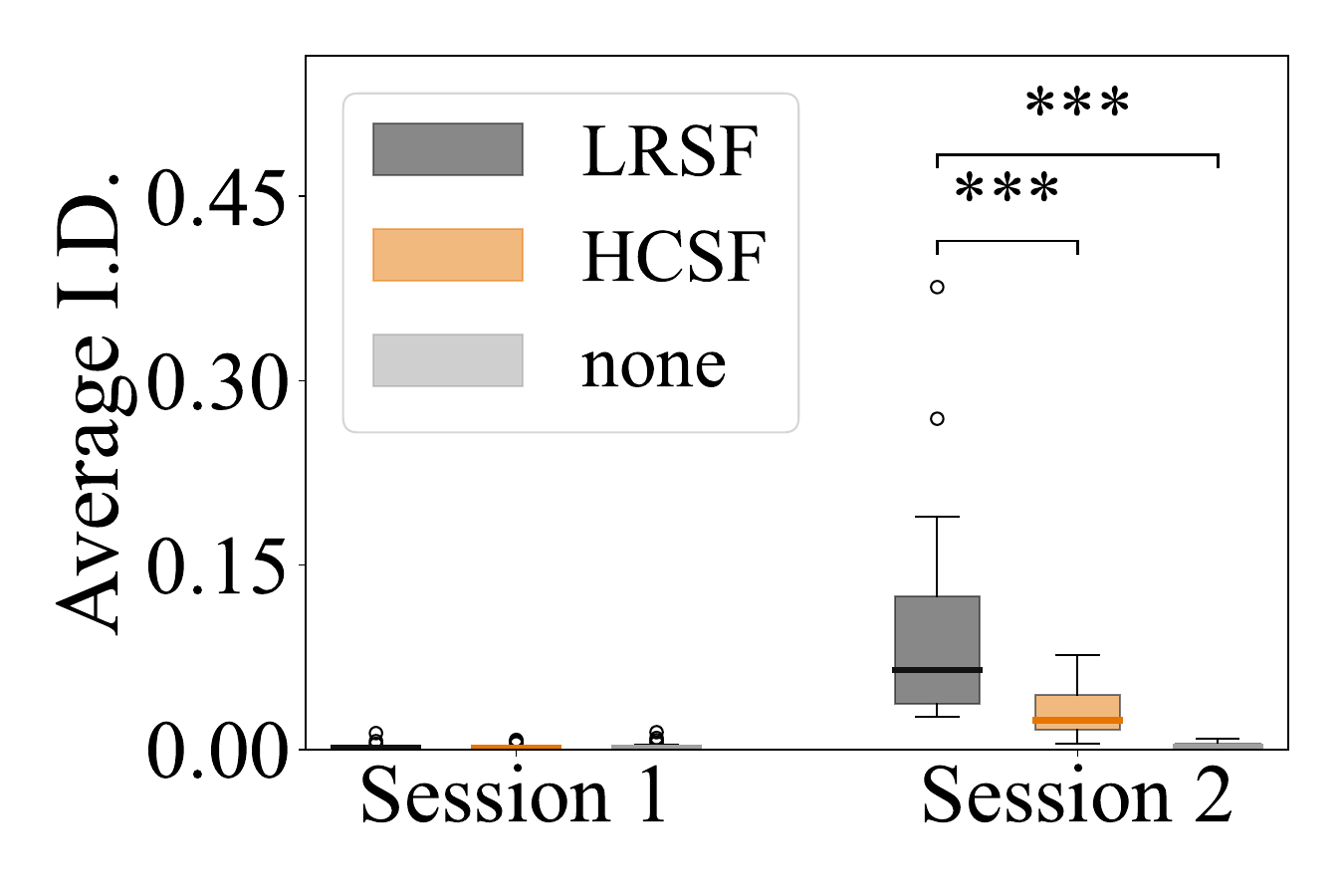}
        \caption{Box plot of average I.D.}
        \label{fig:avg_1st_diff_box}
    \end{subfigure}
    \caption{Our \gls{HCSF} yielded a significantly smaller I.D. compared to \gls{LRSF}, indicating smoother input trajectories, while its difference from unassisted driving was not statistically significant. Central marks, bottom, and top edges of the boxes indicate the median, 25th, and 75th percentiles, respectively. The maximum whisker length is set to $1.5$ times the standard deviation, which gives $99.7$ percent coverage for normally distributed data. Statistical significance is marked with asterisks, where more asterisks indicate larger significance.}
    \label{fig:smoothness_ID}
\end{figure}

\begin{figure}
    \centering
    \begin{subfigure}[b]{0.49\columnwidth}
        \centering
        \includegraphics[width=\textwidth]{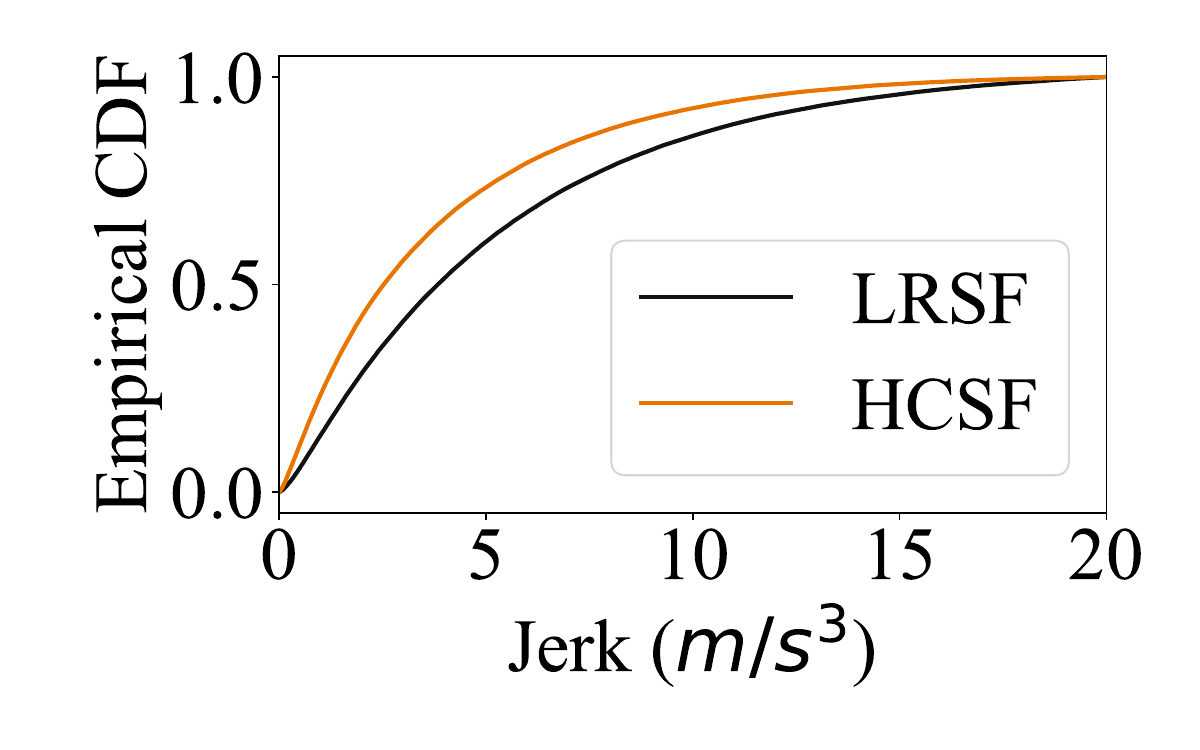}
        \caption{\gls{ECDF} plot of jerk.}
        \label{fig:jerk_cdf_s2_AIonly}
    \end{subfigure}
    \begin{subfigure}[b]{0.49\columnwidth}
        \centering
        \includegraphics[width=\textwidth]{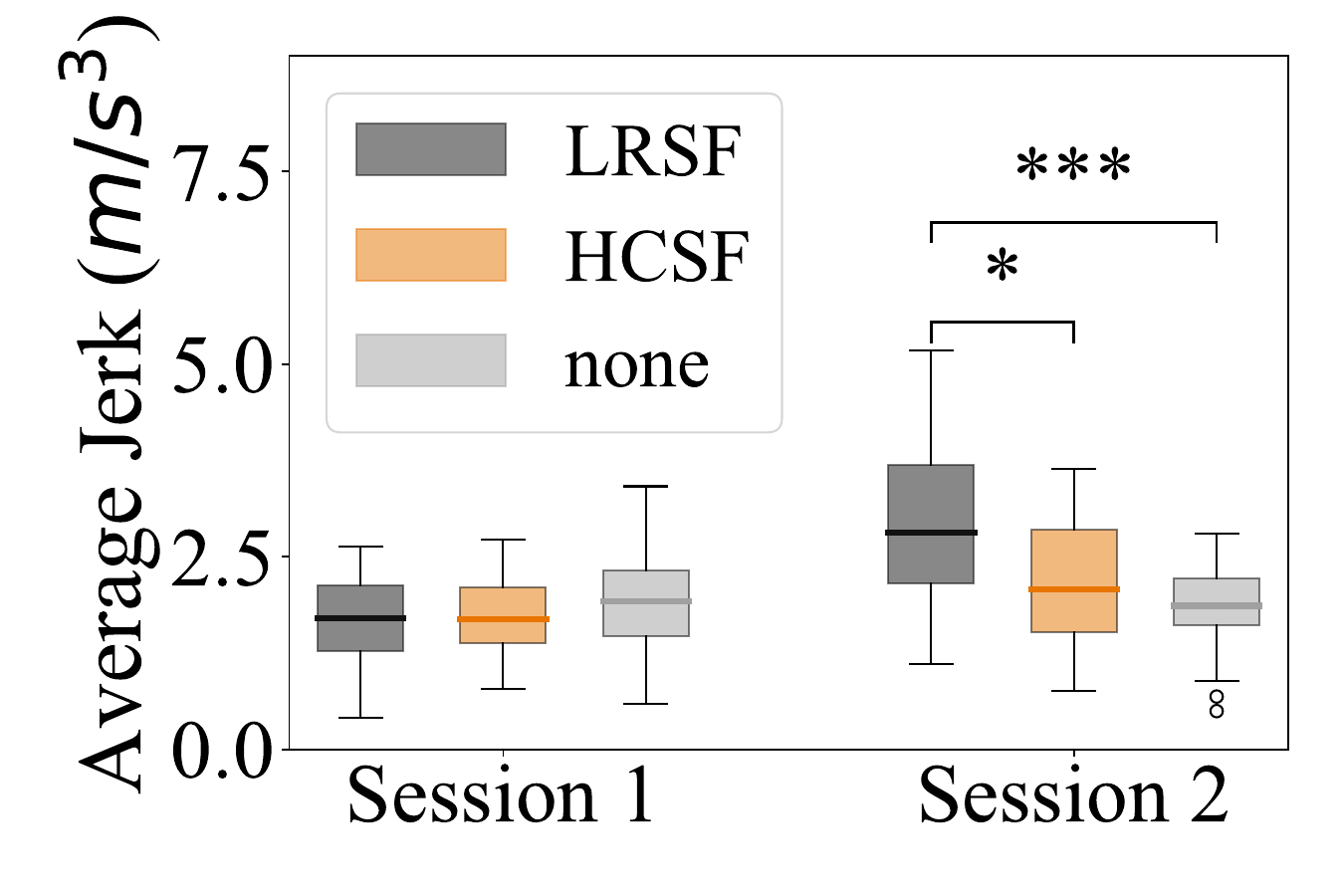}
        \caption{Box plot of average jerk.}
        \label{fig:avg_jerk_box}
    \end{subfigure}
    \caption{Our \gls{HCSF} produced significantly smaller jerk compared to \gls{LRSF}, suggesting better ride comfort, while its difference from unassisted driving was not statistically significant.}
    \label{fig:smoothness_jerk}
\end{figure}

\begin{figure}[!h]
    \centering
    \includegraphics[width=\columnwidth]{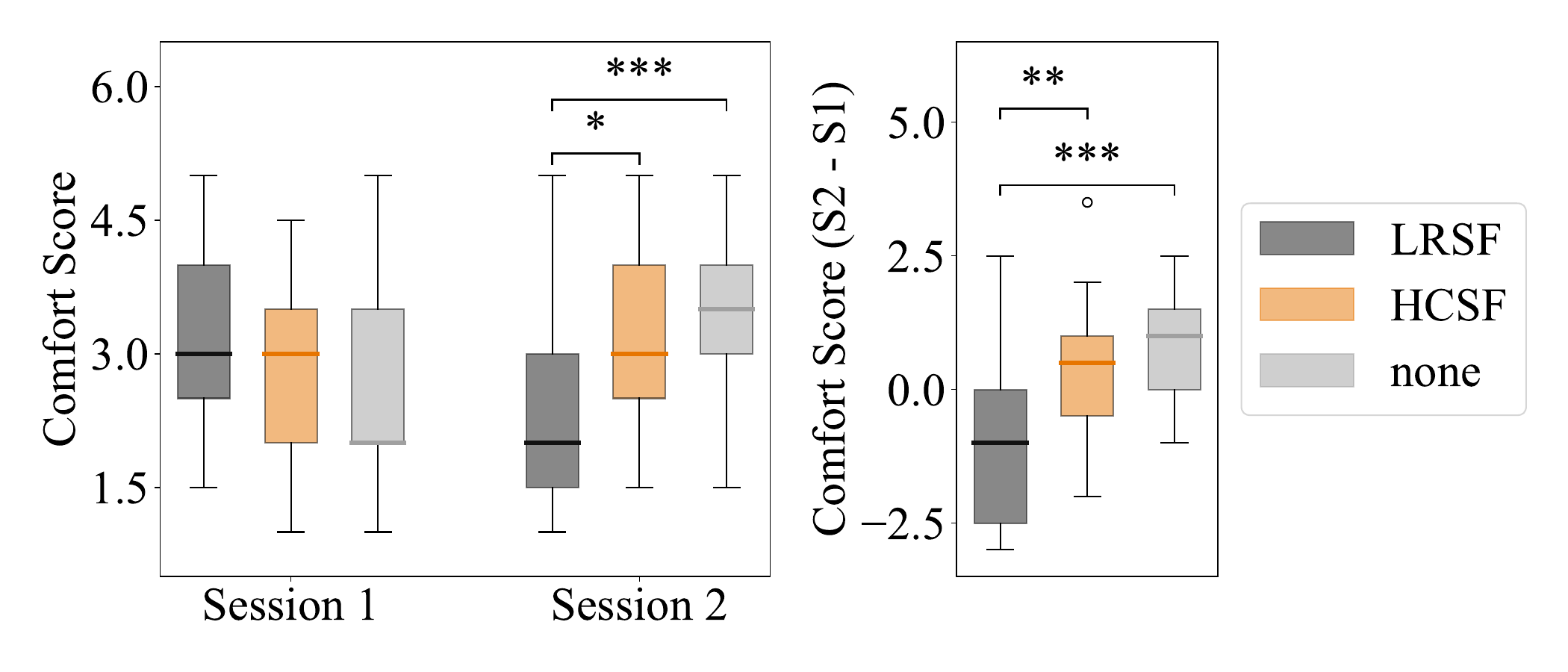}
    \caption{Qualitative measures of comfort across all three groups in both sessions. Participants in the \gls{HCSF} group reported a significantly better sense of smoothness than those in the \gls{LRSF} group, while the difference between the \gls{HCSF} and unassisted groups was not statistically significant.}
    \label{fig:box_s1s2_and_delta_comfort} 
\end{figure}

\subsubsection{Satisfaction}
Fig.~\ref{fig:box_s1s2_and_delta_satisfaction} shows that during session 2, participants who received our \gls{HCSF} assistance reported significantly higher overall satisfaction scores compared to those who received \gls{LRSF} assistance or no assistance. This difference in user satisfaction between the \gls{HCSF} and \gls{LRSF} groups is also evident when examining each participant’s change in scores from session 1 to session 2.

Therefore, we conclude: 
\begin{itemize} 
    \item Our \gls{HCSF} significantly improves user satisfaction compared to having no filter. 
    \item Our \gls{HCSF} significantly improves user satisfaction compared to \gls{LRSF}. 
\end{itemize}

\begin{figure}[!h]
    \centering
    \includegraphics[width=\columnwidth]{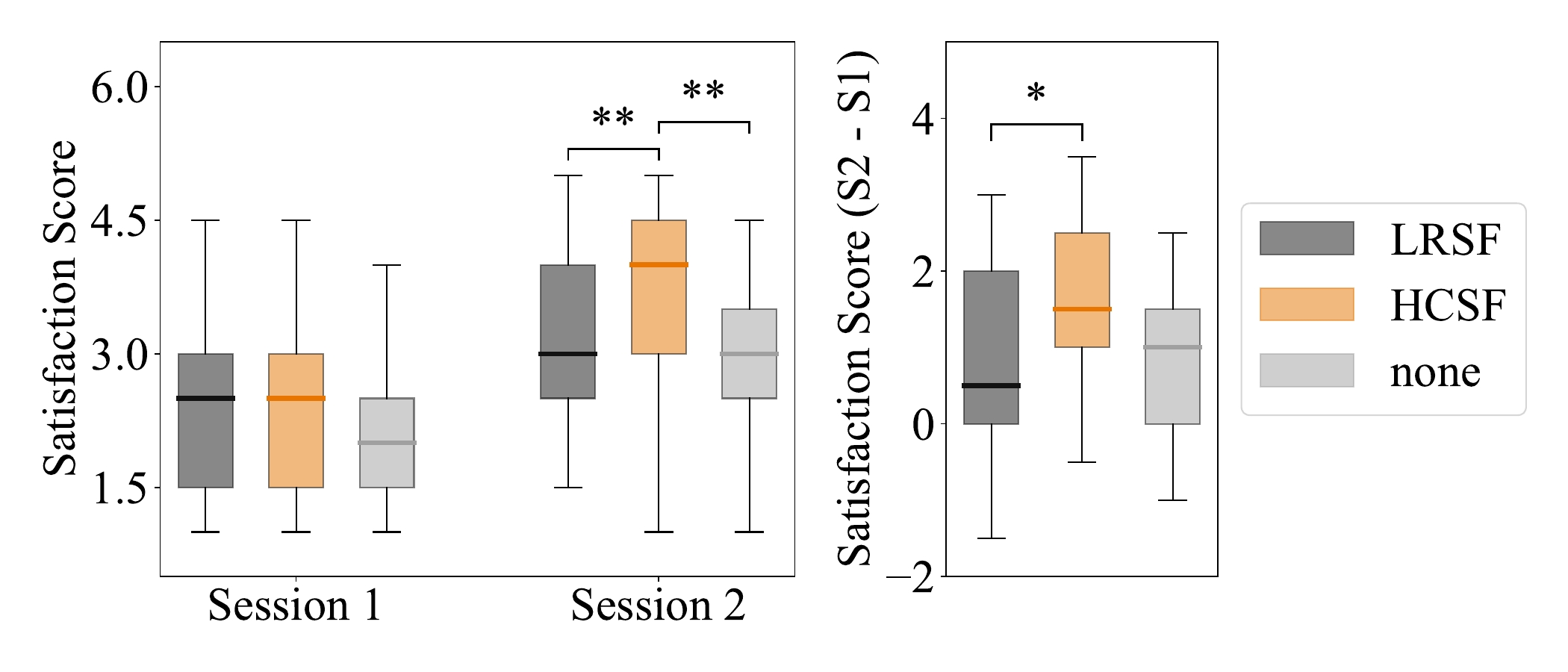}
    \caption{Qualitative measures of overall satisfaction across all three groups in both sessions. Participants in the \gls{HCSF} group reported significantly higher satisfaction during session 2 compared to both the \gls{LRSF} and unassisted groups. }
    \label{fig:box_s1s2_and_delta_satisfaction} 
\end{figure}

\subsubsection{Filter-Specific Metrics}
We further analyze the interplay between human drivers and the safety filter using four filter-specific metrics---trustworthiness, predictability, interpretability, and competence. The participant responses, shown in Fig.~\ref{fig:ai_specific_session2}, reveal that our \gls{HCSF} exhibits significantly higher trustworthiness and competence compared to unassisted driving. Meanwhile, \gls{LRSF} falls between our \gls{HCSF} and unassisted driving without a statistically significant difference. Additionally, participants rated \gls{LRSF} as significantly more unpredictable compared to unassisted driving.

Although the quantitative robustness measures in Fig.~\ref{fig:safety_metrics} show no significant difference between our \gls{HCSF} and \gls{LRSF}, we conjecture that participants perceived \gls{LRSF} as less trustworthy and competent due to its disregard for human input, which results in discontinuous, jerky, and unpredictable interventions. Unable to anticipate when or how \gls{LRSF} would intervene, participants were less inclined to trust the system. In contrast, because our \gls{HCSF} minimizes deviations from the human actions while still preserving robustness, it may have felt more trustworthy and competent to the users.

\begin{figure}[!h]
    \centering
    \includegraphics[width=\columnwidth]{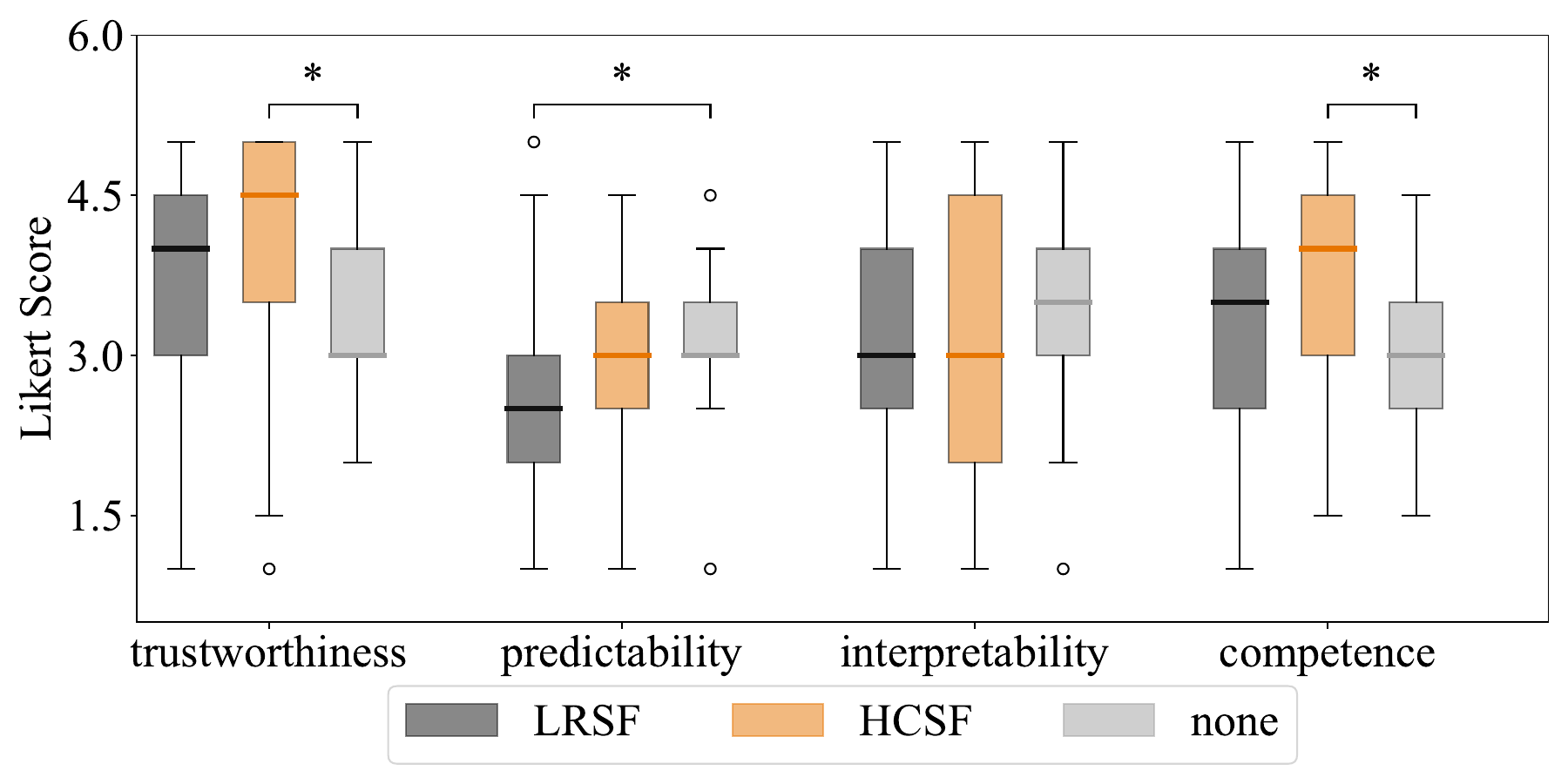}
    \caption{Qualitative measures of filter-specific metrics---trustworthiness, predictability, interpretability, and competence---across all three groups in session 2. Central marks, bottom, and top edges of the boxes indicate the median, 25th, and 75th percentiles, respectively. The maximum whisker length is set to $1.5$ times the standard deviation, which gives $99.7$ percent coverage for normally distributed data. Statistical significance is marked with asterisks, where more asterisks indicate larger significance.}
    \label{fig:ai_specific_session2} 
\end{figure}

\section{Limitations and Future Work}
\label{sec:limitation}

Our proposed \gls{HCSF} offers significant improvements in human agency, comfort, and satisfaction compared to conventional safety filters; however, it does have several limitations.
First, our \gls{HCSF} does not directly optimize for racing performance (\eg, lap times), as its sole objective is to minimize deviation from the human input while satisfying the \gls{Q-CBF} constraint. 
Although our \gls{HCSF} offers a slight performance gain over \gls{LRSF} and unassisted driving, the difference in lap times was not statistically significant (Appendix \ref{app:Results}).
Second, prolonged exposure to our \gls{HCSF} may result in drivers becoming \emph{over-reliant} on AI assistance, potentially hindering the development of unassisted driving skills (Appendix \ref{app:Results}).
Third, the inherently reactive design of our \gls{HCSF} addresses only immediate safety threats, leaving drivers unprepared for complex race dynamics that require proactive anticipation and adaptation.
Finally, while visual cues can help reduce confusion during safety interventions, they may also induce unintended behavioral changes that merit further investigation.

To address these limitations, future work could add a secondary intervention layer that activates prior to safety filter interventions and provides proactive, multi-modal cues instead of removing the human operator's control authority.
We expect this new layer to mitigate the potential over-reliance on safety filters because it does not intervene at the physical control channels shared between a human operator and a safety filter; instead, it ``coaches'' the operator to take appropriate actions before a safety intervention is needed.
Moreover, since we can rely on a safety filter to maintain system safety, the new layer can be designed to foster performance-oriented operation.
Building this additional layer will require further investigation into how humans respond to different modalities and content of cues.
Overall, these advancements would yield a more comprehensive approach to balancing safety and performance in shared autonomy environments.

\section{Conclusion} 
\label{sec:conclusion}
We proposed an \gls{HCSF} that significantly enhances system safety while preserving human agency in human--AI shared autonomy settings. 
Our \gls{HCSF} builds upon a neural safety value function which is learned scalably through black-box interactions via model-free \gls{RL}-based \gls{HJ} reachability analysis. At deployment, we used this value function to enforce a novel \gls{Q-CBF} safety constraint, which does not require any knowledge of the system dynamics for safety monitoring and intervention.
These properties enabled both the synthesis and deployment of our \gls{HCSF} in Assetto Corsa---a high-fidelity black-box car racing simulator---and make our method the first safety filter applied to high-dimensional, black-box shared autonomy systems involving human operators. 
Through an extensive in-person user study, we validated two hypotheses using both trajectory data and user responses: 
\begin{itemize}
    \item \textbf{H1}: Relative to unassisted driving, our proposed \gls{HCSF} improves \textbf{safety} and user \textbf{satisfaction} without compromising human \textbf{agency} and \textbf{comfort}.
    \item \textbf{H2}: Compared to a conventional safety filter, our proposed \gls{HCSF} improves human \textbf{agency}, \textbf{comfort}, and \textbf{satisfaction} without compromising \textbf{robustness}.
\end{itemize}

We envision our \gls{HCSF} being a vital component for a broad class of shared autonomy systems, including advanced driver assistance systems (ADAS) in high-performance driving, thanks to our \gls{HCSF}'s model-free and scalable design. Unlike conventional safety filters---where abrupt interventions that do not take into account the human operator's actions can cause discomfort or automation surprise---our \gls{HCSF} fosters trustworthy, robust, and agency-preserving human--robot collaboration.
Future work will extend our \gls{HCSF} to time-critical and performance-oriented tasks focusing on the interplay between safety and strategic decision-making, and address the potential over-reliance on safety filters.

\balance
\bibliographystyle{IEEEtran}
\bibliography{references.bib}

\newpage
\appendix

\subsection{Proofs of Theoretical Results} \label{app:Proofs}

In this section, we present the proofs for \autoref{pro:model-free_DCBF} and \autoref{pro:recursive_feasibility}. We first state two key lemmas.

\begin{lemma}
\label{lemma:UBQfunc}
For all $\state \in \safeset^*$ and for all $\gamma \in [0, 1)$, there exists an action $\ctrl \in \cset$ such that $\qfunc(\state, \ctrl) \geq \gamma V(x)$.
\end{lemma}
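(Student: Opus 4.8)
The plan is to exploit the identity $\valfunc(\state) = \max_{\ctrl \in \cset} \qfunc(\state, \ctrl)$ established just below the state--action safety Bellman equation~\eqref{eq.state_action_safety_bellman_eq}, which directly exhibits a witness action. First I would invoke the safe fallback policy $\fallback(\state) = \argmax_{\ctrl \in \cset} \qfunc(\state, \ctrl)$ from~\eqref{eq.LRSF_def}; by construction this action attains the maximum, so that $\qfunc(\state, \fallback(\state)) = \max_{\ctrl \in \cset} \qfunc(\state, \ctrl) = \valfunc(\state)$. Thus $\fallback(\state)$ is the natural candidate for the action whose existence the lemma asserts.

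Next I would use the hypothesis $\state \in \safeset^*$. By the definition of the maximal safe set~\eqref{eq.safe_set_def}, this is exactly the statement $\valfunc(\state) \geq 0$. Combining this nonnegativity with $\gamma \in [0, 1)$ gives $\gamma \valfunc(\state) \leq \valfunc(\state)$, since scaling a nonnegative quantity by a factor strictly below one cannot increase it. Chaining the two relations yields $\qfunc(\state, \fallback(\state)) = \valfunc(\state) \geq \gamma \valfunc(\state)$, so $\ctrl = \fallback(\state)$ is the desired action and the lemma follows.

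The only subtlety---and the one point I would be careful to justify rather than silently assume---is that the maximum defining $\fallback$ is actually attained, so that the witness action genuinely lies in $\cset$. This holds because $\cset$ is compact and $\qfunc(\state, \cdot)$ is continuous, so Weierstrass's extreme value theorem applies; moreover, the paper's definition of $\fallback$ via $\argmax$ already presupposes attainment, so invoking it directly suffices. No other obstacle arises: the entire argument is an immediate consequence of the $\valfunc$--$\qfunc$ relationship together with the sign of $\valfunc$ on $\safeset^*$, and it holds uniformly over all $\gamma \in [0,1)$ precisely because the bound $\gamma \valfunc(\state) \leq \valfunc(\state)$ does not depend on the particular value of $\gamma$ beyond $\gamma < 1$.
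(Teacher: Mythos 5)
Your proof is correct and follows essentially the same route as the paper's: both take the safe fallback action $\fallback(\state)=\argmax_{\ctrl\in\cset}\qfunc(\state,\ctrl)$ as the witness, use $\qfunc(\state,\fallback(\state))=\valfunc(\state)$, and conclude via $\gamma\valfunc(\state)\leq\valfunc(\state)$. If anything, you are slightly more careful than the paper, since you make explicit that this last inequality relies on $\valfunc(\state)\geq 0$ for $\state\in\safeset^*$ (which the paper leaves implicit) and you justify attainment of the maximum.
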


\begin{proof}
There exists $\ctrl \in \cset$ that satisfies $\qfunc(\state, \ctrl) = \valfunc(\state)$ for all $\state\in\safeset^*$. One such action is from the safe fallback policy, $\fallback(\state)\coloneq\argmaxB_{\ctrl\in\cset}\qfunc(\state, \ctrl)$. For any $\gamma \in [0, 1)$ we have $\gamma \valfunc(\state) \leq \valfunc(\state)$. Therefore, there always exists an action $\ctrl \in \cset$ such that $\qfunc(\state, \ctrl) \geq \gamma \valfunc(\state)$ for all $\gamma \in [0, 1)$.
\end{proof}

\begin{lemma}
\label{lemma:VfuncequivalenceFromUBQ}
For all $\state \in \safeset^*$ and for all $\gamma \in [0, 1)$, $\qfunc(\state, \ctrl) \geq \gamma \valfunc(\state)$ is equivalent to $\valfunc(f(\state, \ctrl)) - \valfunc(\state) \geq -\alpha \valfunc(\state)$, where $\alpha = 1 - \gamma$.
\end{lemma}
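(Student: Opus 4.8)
The plan is to reduce both inequalities to a single condition on $\valfunc(\dyn(\state,\ctrl))$ and then verify that they coincide on $\safeset^*$. First I would extract the structural identity that the $\qfunc$-function inherits from the two Bellman equations. Substituting the stated relation $\valfunc(\state)=\max_{\ctrl'\in\cset}\qfunc(\state,\ctrl')$ into the state--action safety Bellman equation \eqref{eq.state_action_safety_bellman_eq} yields the compact form
\[
\qfunc(\state,\ctrl)=\min\{g(\state),\,\valfunc(\dyn(\state,\ctrl))\},
\]
which is the workhorse of the argument: it rewrites the left-hand condition $\qfunc(\state,\ctrl)\geq\gamma\valfunc(\state)$ as the conjunction of $g(\state)\geq\gamma\valfunc(\state)$ and $\valfunc(\dyn(\state,\ctrl))\geq\gamma\valfunc(\state)$.

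Next I would handle the right-hand side, which is purely algebraic. Setting $\alpha=1-\gamma$ in $\valfunc(\dyn(\state,\ctrl))-\valfunc(\state)\geq-\alpha\valfunc(\state)$ and rearranging gives exactly $\valfunc(\dyn(\state,\ctrl))\geq\gamma\valfunc(\state)$. Thus the asserted equivalence collapses to showing that, on $\safeset^*$, the margin condition $g(\state)\geq\gamma\valfunc(\state)$ is automatically satisfied and hence never the active term in the $\min$.

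To close this gap I would invoke two elementary facts about $\valfunc$ on the safe set. For $\state\in\safeset^*$ the definition \eqref{eq.safe_set_def} gives $\valfunc(\state)\geq0$, and the safety Bellman equation \eqref{eq.safety_bellman_eq}, being a minimum that includes $g(\state)$ as one of its arguments, forces $\valfunc(\state)\leq g(\state)$ for every state. Combining these with $\gamma\in[0,1)$ yields the chain $\gamma\valfunc(\state)\leq\valfunc(\state)\leq g(\state)$, so $g(\state)\geq\gamma\valfunc(\state)$ holds unconditionally. Consequently the $\min$ in $\qfunc$ is governed entirely by its second argument, and $\qfunc(\state,\ctrl)\geq\gamma\valfunc(\state)$ is equivalent to $\valfunc(\dyn(\state,\ctrl))\geq\gamma\valfunc(\state)$, which is precisely the rearranged right-hand side.

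The main obstacle is conceptual rather than computational: it is recognizing that the margin term $g(\state)$ never binds on $\safeset^*$. The inequalities $0\le\gamma\valfunc(\state)\le\valfunc(\state)\le g(\state)$ are exactly what make the $\min$ collapse onto its look-ahead argument, and once the decomposition $\qfunc(\state,\ctrl)=\min\{g(\state),\valfunc(\dyn(\state,\ctrl))\}$ is in hand together with the substitution $\alpha=1-\gamma$, the remaining manipulations are routine.
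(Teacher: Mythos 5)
Your proof is correct and follows essentially the same route as the paper's: both decompose the $\min$ in the state--action Bellman equation into two inequalities, observe that the margin condition $g(\state)\geq\gamma\valfunc(\state)$ holds automatically on $\safeset^*$ (via $\valfunc(\state)\leq g(\state)$, $\valfunc(\state)\geq 0$, and $\gamma\in[0,1)$), and identify the surviving condition $\valfunc(\dyn(\state,\ctrl))\geq\gamma\valfunc(\state)$ with the rearranged DCBF inequality using $\valfunc(\dyn(\state,\ctrl))=\max_{\ctrl'\in\cset}\qfunc(\dyn(\state,\ctrl),\ctrl')$. The only cosmetic difference is that you collapse $\qfunc(\state,\ctrl)=\min\{g(\state),\valfunc(\dyn(\state,\ctrl))\}$ at the outset, whereas the paper performs that substitution within its ``Condition 2''; the logical content is identical.
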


\begin{proof}
Subtracting $\valfunc(\state)$ from both sides of $\qfunc(\state, \ctrl) \geq \gamma \valfunc(\state)$ results in $\qfunc(\state, \ctrl) - \valfunc(\state) \geq (\gamma - 1)\valfunc(\state)$.
Using the definition of the $\qfunc$-function from~\eqref{eq.state_action_safety_bellman_eq} we substitute for $\qfunc(\state, \ctrl)$:
\begin{equation*}
\min \{ g(\state), \max_{\ctrl' \in \cset} \qfunc(f(\state, \ctrl), \ctrl') \} - \valfunc(\state) \geq (\gamma - 1)\valfunc(\state).
\end{equation*}
This inequality holds if and only if both of the following conditions are satisfied:
\begin{align*}
g(\state) - \valfunc(\state) \geq (\gamma - 1)\valfunc(\state), \quad \text{(Condition 1)} \\
\max_{\ctrl' \in \cset} \qfunc(f(\state, \ctrl), \ctrl') - \valfunc(\state) \geq (\gamma - 1)\valfunc(\state). \quad \text{(Condition 2)}
\end{align*}
Analyzing Condition 1: from \eqref{eq.safety_bellman_eq}, we have $g(\state) - \valfunc(\state) \geq 0$.
$\gamma \in [0, 1)$ implies $\gamma - 1 < 0$ and $\valfunc(\state)\geq0$ for all $\state\in\safeset^*$. Therefore, $g(\state) - \valfunc(\state) \geq (\gamma - 1)\valfunc(\state)$ is always true.

\noindent Analyzing Condition 2: from~\eqref{eq.state_action_safety_bellman_eq}, $\valfunc(f(\state, \ctrl)) = \max_{\ctrl' \in \cset} \qfunc(f(\state, \ctrl), \ctrl')$. Substituting this into Condition 2, we get $\valfunc(f(\state, \ctrl)) - \valfunc(\state)\geq (\gamma - 1)\valfunc(\state)$.

Hence, for all $\state\in\safeset^*$ and for all $\gamma=1-\alpha\in[0, 1)$, $\qfunc(\state, \ctrl) \geq \gamma \valfunc(\state)$ is true if and only if $\valfunc(f(\state, \ctrl)) - \valfunc(\state) \geq -\alpha \valfunc(\state)$ is true. 
\end{proof}

We are now ready to prove \autoref{pro:model-free_DCBF} and \autoref{pro:recursive_feasibility}. We restate them here for convenience. 

\begin{proposition*}[Restatement of \autoref{pro:model-free_DCBF}]
The safety value function $\valfunc(\state):\xset\rightarrow\reals$, which is a fixed-point solution of the safety Bellman equation \eqref{eq.safety_bellman_eq}, is a valid \gls{DCBF} as defined in \autoref{def.DCBF} and \autoref{rem.DCBF_relax}. The corresponding \gls{DCBF} constraint is:
\begin{equation}
\label{eq.model_free_DCBF_constraint_appendix}
\qfunc(\state, \ctrl)\geq\gamma\valfunc(\state),
\end{equation}
where $\gamma\in[0, 1)$.
\end{proposition*}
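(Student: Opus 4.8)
The plan is to verify the two requirements that \autoref{def.DCBF}, under the relaxation in \autoref{rem.DCBF_relax}, places on a candidate \gls{DCBF}: that its $0$-superlevel set lies inside the safe region, and that a suitable decay inequality holds. For $\valfunc(\cdot)$ the first requirement is immediate, since by \eqref{eq.safe_set_def} the set $\{\state \in \xset \mid \valfunc(\state) \geq 0\}$ is precisely the maximal safe set $\safeset^*$, and $\safeset^* \subset \failureset^\compl$ holds by construction. Hence the substantive part is the decay inequality.

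First I would fix the correspondence $\alpha = 1 - \gamma$ and observe that $\gamma \in [0,1)$ yields $\alpha \in (0,1]$, matching the range demanded by \autoref{def.DCBF}. Working on the relaxed domain $\safeset^*$ afforded by \autoref{rem.DCBF_relax}, I would take an arbitrary $\state \in \safeset^*$ and invoke \autoref{lemma:UBQfunc} to obtain a witness action $\ctrl \in \cset$ (concretely, the safe fallback $\fallback(\state)$) satisfying $\qfunc(\state, \ctrl) \geq \gamma \valfunc(\state)$. Then \autoref{lemma:VfuncequivalenceFromUBQ} rewrites this purely value-based inequality as the geometric condition $\valfunc(\dyn(\state, \ctrl)) - \valfunc(\state) \geq -\alpha \valfunc(\state)$, that is, $\Delta \valfunc(\state, \ctrl) \geq -\alpha \valfunc(\state)$. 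Since this holds for the specific witness action, the supremum over $\cset$ can only be at least as large, so $\sup_{\ctrl \in \cset} \Delta \valfunc(\state, \ctrl) \geq -\alpha \valfunc(\state)$ for every $\state \in \safeset^*$, which establishes the decay inequality and hence that $\valfunc(\cdot)$ is a valid \gls{DCBF} in the relaxed sense.

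To pin down the stated form \eqref{eq.model_free_DCBF_constraint_appendix} of the constraint, I would apply \autoref{lemma:VfuncequivalenceFromUBQ} once more, reading the equivalence in the forward direction: the standard barrier constraint $\Delta \valfunc(\state, \ctrl) \geq -\alpha \valfunc(\state)$ is equivalent to $\qfunc(\state, \ctrl) \geq \gamma \valfunc(\state)$. This is exactly what renders the filter model-free, since the latter inequality is evaluated using only the learned $\qfunc(\cdot, \cdot)$ and $\valfunc(\state) = \max_{\ctrl \in \cset} \qfunc(\state, \ctrl)$, never the transition map $\dyn$.

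The hard part will be the domain bookkeeping rather than any computation. The unrelaxed \autoref{def.DCBF} asks for the inequality on all of $\xset$, but \autoref{lemma:UBQfunc} only supplies a feasible action where $\valfunc(\state) \geq 0$; outside $\safeset^*$ one cannot expect the fallback to arrest the decay of $\valfunc(\cdot)$. Leaning on \autoref{rem.DCBF_relax} is therefore essential, and I would be careful to claim only forward invariance of $\safeset^*$, not the stronger set-attractiveness property, consistent with the caveat already flagged in that remark.
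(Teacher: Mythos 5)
Your proposal is correct and follows essentially the same route as the paper's own proof: it invokes \autoref{lemma:UBQfunc} to produce the witness action (the safe fallback), uses \autoref{lemma:VfuncequivalenceFromUBQ} to translate $\qfunc(\state,\ctrl)\geq\gamma\valfunc(\state)$ into the decay inequality $\Delta\valfunc(\state,\ctrl)\geq-\alpha\valfunc(\state)$ with $\alpha=1-\gamma$, and restricts attention to $\safeset^*$ via \autoref{rem.DCBF_relax}. Your additional bookkeeping---explicitly checking the superlevel-set containment from \eqref{eq.safe_set_def}, the range $\alpha\in(0,1]$, and the necessity of the relaxed domain---is sound and only makes explicit what the paper leaves implicit.
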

\begin{proof}
To prove that $\valfunc(\state)$ is a valid DCBF, it is sufficient to show that for all $\state \in \safeset^*$, there exists an action $\ctrl \in \cset$ that satisfies
\begin{equation}
\label{eq.valfunc_DCBF}
\valfunc(f(\state, \ctrl)) - \valfunc(\state) \geq -\alpha \valfunc(\state)
\end{equation}
for some $\alpha \in (0, 1]$. By \autoref{lemma:VfuncequivalenceFromUBQ}, \eqref{eq.valfunc_DCBF} is equivalent to \eqref{eq.model_free_DCBF_constraint_appendix} with $\alpha=1-\gamma$. Moreover, by \autoref{lemma:UBQfunc}, there exists an action $\ctrl\in\cset$ that satisfies \eqref{eq.model_free_DCBF_constraint_appendix} for all $\state\in\safeset^*$ and for all $\gamma\in[0, 1)$. This equivalence establishes that the $\qfunc$-function directly defines the DCBF constraint in a model-free manner, replacing the conventional dynamics-based constraint.
\end{proof}

\begin{proposition*}[Restatement of \autoref{pro:recursive_feasibility}]
The optimization problem in \eqref{eq.HCSF} is recursively feasible for $\forall\gamma \in [0, 1)$, given the initial state $\state \in \safeset^*$.
\end{proposition*}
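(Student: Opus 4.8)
The plan is to prove recursive feasibility by induction on the timestep, reducing the claim to forward invariance of the maximal safe set $\safeset^*$ under any control satisfying the \gls{Q-CBF} constraint. The pivotal observation is that feasibility of \eqref{eq.HCSF} at a state $\state$ holds precisely when $\state \in \safeset^*$ --- this is exactly \autoref{lemma:UBQfunc}. Hence, rather than reasoning about the optimization directly at each step, it suffices to show that applying any feasible minimizer keeps the successor state inside $\safeset^*$; feasibility at the next step then follows automatically.

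For the base case, since the initial state satisfies $\state \in \safeset^*$ by hypothesis, \autoref{lemma:UBQfunc} immediately furnishes some $\ctrl \in \cset$ with $\qfunc(\state, \ctrl) \geq \gamma \valfunc(\state)$, so \eqref{eq.HCSF} admits a solution at the first timestep. For the inductive step, I would assume $\state_t \in \safeset^*$ and let $\ctrl_t$ be any minimizer of \eqref{eq.HCSF}, which exists by the induction hypothesis together with \autoref{lemma:UBQfunc}. Because $\ctrl_t$ satisfies the \gls{Q-CBF} constraint $\qfunc(\state_t, \ctrl_t) \geq \gamma \valfunc(\state_t)$, I would invoke \autoref{lemma:VfuncequivalenceFromUBQ} to rewrite this equivalently as $\valfunc(\state_{t+1}) \geq \gamma \valfunc(\state_t)$, where $\state_{t+1} = \dyn(\state_t, \ctrl_t)$. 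Since $\state_t \in \safeset^*$ yields $\valfunc(\state_t) \geq 0$ and $\gamma \in [0,1)$ is nonnegative, this gives $\valfunc(\state_{t+1}) \geq \gamma \valfunc(\state_t) \geq 0$, so $\state_{t+1} \in \safeset^*$ by the definition \eqref{eq.safe_set_def}. Applying \autoref{lemma:UBQfunc} once more at $\state_{t+1}$ certifies feasibility of \eqref{eq.HCSF} at the next timestep, closing the induction.

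The argument is largely mechanical once the two lemmas are available; the only genuinely conceptual move worth flagging is the translation of ``the optimization remains feasible'' into ``the state remains in $\safeset^*$'' --- that is, recognizing recursive feasibility as forward invariance of the maximal safe set under the CBF constraint. The single place where the admissible range of $\gamma$ is actually used is the sign step: the nonnegativity $\gamma \geq 0$, combined with $\valfunc \geq 0$ on $\safeset^*$, is precisely what lets the value-decrease inequality $\valfunc(\state_{t+1}) \geq \gamma \valfunc(\state_t)$ propagate nonnegativity of $\valfunc$ forward in time. This is the step I would state most carefully, since it is where the whole induction hinges and where an incautious choice of $\gamma$ (e.g. negative) would break the invariance.
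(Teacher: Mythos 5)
Your proof is correct and takes essentially the same approach as the paper's: both arguments reduce recursive feasibility to forward invariance of $\safeset^*$ under any constraint-satisfying control, and both invoke \autoref{lemma:UBQfunc} to certify feasibility at every state in $\safeset^*$. The only cosmetic difference is that you establish $\state_{t+1}\in\safeset^*$ via \autoref{lemma:VfuncequivalenceFromUBQ} (obtaining $\valfunc(\state_{t+1})\geq\gamma\valfunc(\state_t)\geq 0$), whereas the paper argues directly from \eqref{eq.state_action_safety_bellman_eq} that $\qfunc(\state_t,\ctrl_t)\geq\gamma\valfunc(\state_t)\geq 0$ forces $\valfunc(\dyn(\state_t,\ctrl_t))\geq 0$; the two routes are interchangeable, since \autoref{lemma:VfuncequivalenceFromUBQ} is itself a consequence of that same Bellman equation.
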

\begin{proof}
By \autoref{lemma:UBQfunc}, for all $\state \in \safeset^*$ and $\gamma \in [0, 1)$, $\exists \ctrl \in \cset$ such that $\qfunc(\state, \ctrl) \geq \gamma \valfunc(\state) \implies \qfunc(\state,\ctrl) \geq 0$. 
From~\eqref{eq.state_action_safety_bellman_eq}, this choice of $\ctrl$ ensures $\valfunc(f(\state, \ctrl)) \geq 0 \implies f(\state, \ctrl) \in \safeset^*$, meaning the system remains within the maximal safe set $\safeset^*$ at the next timestep. Since this reasoning applies recursively for all timesteps, the optimization problem \eqref{eq.HCSF} is feasible for $\forall\gamma \in [0, 1)$.
\end{proof}

\subsection{Additional Environment Details}
\label{app:Environment}

\subsubsection{Observation}
We use a 133-dimensional observation vector for both neural synthesis of safety filters and for training the warmup policies. Table~\ref{tab:observation} lists the elements of this observation vector, their corresponding dimensionalities, and the number of past timesteps (i.e., ``stacked'' frames) used:
\begin{table}[ht]
\centering
\renewcommand{\arraystretch}{1.2}
\caption{Hyperparameters for the Warmup \& Initialization Phases}
\begin{tabular}{c|c|c}
\hline
\textbf{Observation} & \textbf{Dimensionality} & \textbf{Past Timesteps} \\ \hline
Ego vehicle speed & 1 & 4 \\
Gap to the reference path & 1 & 4 \\
Force feedback & 1 & 4 \\
RPM & 1 & 4\\
Acceleration & 2 & 4\\
Gear & 1 & 4\\
Angular velocity & 1 & 4\\
Local velocity & 2 & 4\\
Slip Angle & 4 & 4\\
Distance to track boundary & 11 & 4\\
Out-of-track & 1 & 1\\
Look ahead curvature & 12 & 1\\
Control input & 3 & 4\\
Ego vehicle heading & 1 & 1\\
Ego vehicle yaw & 1 & 1\\
Opponent distance & 1 & 1\\
Opponent direction & 1 & 1\\
Opponent speed & 1 & 1\\
Opponent heading & 1 & 1\\
Opponent yaw & 1 & 1\\
Opponent brake & 1 & 1\\
\hline
\end{tabular}
\label{tab:observation}
\end{table}

\subsubsection{Action}
The actions represent increments to the control inputs over a single timestep rather than absolute control values. This incremental representation mitigates oscillations or chattering in the control signals \cite{remondasimulation}.

\subsection{Extended Implementation Details}
\label{app:Training}

In this section, we provide additional details about our training methodology. Refer to Table~\ref{tab:hcsf_warmup_init_hyperparams} for the hyperparameter values mentioned in this subsection.

\subsubsection{Warmup and Initialization}

The training pipeline for \gls{HCSF} is designed to promote robustness and improve learning efficiency in the high-fidelity simulation environment of \gls{AC}. It operates in three distinct phases per episode---warmup, initialization, and training---each ensuring the ego agent encounters a broad spectrum of scenarios, including failure-prone states that yield the most valuable data for training the safety filter.

\noindent
\textbf{Warmup Phase.}
\noindent
In \gls{AC}, each reset places the ego agent in a trivial, stationary configuration on the reference path, with its heading tangent to the path. Because it remains in the safe set if it does not move, this state provides little value for learning the \(\qfunc\)-function or the best-effort fallback policy---particularly regarding near-failure conditions. To address this, we introduce a \emph{warmup phase} that accelerates the agent to higher speeds and initiates interactions with the track or opponents before formal training begins. 

During warmup, the agent follows either an overtaking policy \(\policy^\warmup_\overtake\) or a nominal policy \(\policy^\warmup_\nominal\) for a duration sampled from \([0, T^{\warmup, \max}]\). These two policies are chosen with probability \(\prob^\warmup_\overtake\) and its complement, respectively, exposing the agent to both opponent-aware and opponent-agnostic driving scenarios. This warmup procedure transitions the agent away from trivial reset conditions and into more realistic states where the safety filter’s intervention is most meaningful. For additional details regarding the training of the warmup policies, see the next subsection.

To diversify starting states, the warmup phase may terminate early when the ego agent enters failure-prone regions, with probability \(\prob^\warmup_\opponent\) if it is within a distance sampled from \([d^{\warmup, \min}_\opponent,\, d^{\warmup, \max}_\opponent]\) of the nearest opponent. This increases the frequency of interactions with opponents and helps in learning collision avoidance strategies. Additionally, heavy braking can trigger early termination: at the start of each episode, a gating probability \(\prob^\warmup_\text{brake, epi}\) determines whether braking is considered for termination. If allowed, termination occurs with probability \(\prob^\warmup_\text{brake, step}\) at each timestep when the braking input exceeds \(\ctrl^\warmup_\brake\) and speed is above \(v^\warmup\). Employing these probabilities sequentially ensures that heavy braking only leads to termination under conditions resembling realistic failure scenarios. This phased termination approach helps the agent reach states critical for learning an effective safety filter.

\noindent
\textbf{Initialization Phase.}
While the warmup phase introduces motion and some variability in initial conditions, it relies primarily on well-trained policies like \(\policy^\warmup_\overtake\), which tend to stay on the reference path and avoid collisions. As a result, it rarely induces suboptimal driving behaviors or pushes the ego agent into failure-prone scenarios. This limitation is problematic because effective learning of the \(\qfunc\)-function and the best-effort fallback policy demands spanning the full state space \(\xset\) and control space \(\cset\) \cite{fisac2019bridging}.

To address this, we introduce the \emph{initialization phase}, which systematically exposes the ego agent to failure-prone regions of \(\xset\) and \(\cset\). At each timestep, the agent selects an action and queries the \(\qfunc\)-function to check if the resulting state--action pair falls below \(\qfunc^\initialization_\text{term}\), indicating a potentially hazardous scenario. With probability \(\prob^\initialization_\text{term}\), the initialization phase terminates, and the ego vehicle can start the training phase in this ``dangerous'' state where safety intervention is most critical.
Because human drivers often miss braking points before corners and maintain throttle when braking is required, the initialization phase also simulates a \emph{full-throttle mode} with probability \(\prob^\initialization_\text{FT}\). We define \(\mathcal{U}_\text{FT} = \{ (\ctrl_\steer, \ctrl_\throttle, \ctrl_\brake) \mid \ctrl_\steer \in [-1,1], \ctrl_\throttle = 1, \ctrl_\brake = -1 \}\), representing high-speed or aggressive control signals aimed at inducing failure-prone conditions stemming from late braking.

Three initialization schemes---\emph{adversarial}, \emph{random}, and \emph{mixed}---provide distinct ways to sample actions:
\begin{itemize}
    \item \textbf{Adversarial Initialization:}  
    Chosen with probability \(\prob^\initialization_\text{adv}\). Actions come from either \(\mathcal{U}_\text{FT}\) (if full-throttle mode is engaged) or \(\cset\). The \(\qfunc\)-function identifies the action with the smallest \(\qfunc\)-value (the ``adversarial'' action). If its value lies below \(\qfunc^\initialization_\text{term}\), the phase terminates with probability \(\prob^\initialization_\text{term}\). This repeats until termination.

    \item \textbf{Random Initialization:}  
    Chosen with probability \(\prob^\initialization_\text{rand}\). Actions are sampled randomly from \(\mathcal{U}_\text{FT}\) or \(\cset\), without querying \(\qfunc\). If the \(\qfunc\)-value for a sampled action is below \(\qfunc^\initialization_\text{term}\), termination occurs with probability \(\prob^\initialization_\text{term}\).

    \item \textbf{Mixed Initialization:}  
    Chosen with probability \(\prob^\initialization_\text{mix}\). At each timestep, the agent switches between adversarial or random strategies with probability \(0.5\). Termination follows the same rule, triggered by \(\prob^\initialization_\text{term}\) whenever the \(\qfunc\)-value is below \(\qfunc^\initialization_\text{term}\).
\end{itemize}

This approach helps the system generalize to real deployment scenarios, where human operators may have varying skill levels and intentions, leading to unpredictable behaviors and frequent exposure to failure-prone conditions.

\begin{table*}[ht]
\centering
\renewcommand{\arraystretch}{1.2}
\caption{Hyperparameters for the Warmup \& Initialization Phases}
\begin{tabular}{c|c|l}
\hline
\textbf{Hyperparameter} & \textbf{Value} & \textbf{Description} \\ \hline
$T^{\warmup, \max}$ & $25\text{ s}$ & Max duration of warmup phase. \\
$\prob^\warmup_\overtake$ & $0.6$ & Probability of using overtaking policy during warmup. \\
$\prob^\warmup_\opponent$ & $0.25$ & Probability of early termination if near an opponent. \\
$d^{\warmup, \min}_\opponent$ & $6\text{ m}$ & Min distance threshold for opponent proximity. \\
$d^{\warmup, \max}_\opponent$ & $36\text{ m}$ & Max distance threshold for opponent proximity. \\
$\prob^\warmup_\text{brake, epi}$ & $0.4$ & Probability that heavy braking triggers warmup termination. \\
$\prob^\warmup_\text{brake, step}$ & $0.25$ & Probability of termination on each timestep of heavy braking. \\
$\ctrl^\warmup_\brake$ & $0.6$ & Threshold for heavy braking input. \\
$v^\warmup$ & $40~\text{m/s} $ & Speed threshold under which braking triggers termination. \\ 
$\prob^\initialization_\text{term}$ & $0.2$ & Probability of ending initialization once $\qfunc$-value falls below threshold. \\
$\qfunc^\initialization_\text{term}$ & $2$ & $\qfunc$-value threshold for indicating dangerous scenarios. \\
$\prob^\initialization_\text{FT}$ & $0.4$ & Probability of enabling full-throttle mode. \\
$\prob^\initialization_\text{adv}$ & $0.3$ & Probability of adversarial initialization. \\
$\prob^\initialization_\text{rand}$ & $0.3$ & Probability of random initialization. \\
$\prob^\initialization_\text{mix}$ & $0.4$ & Probability of mixed initialization. \\
$c_1$ & $1/12$ & Design parameter for training the nominal warmup policy. \\
$c_2$ & $300$ & Design parameter for training the nominal warmup policy.\\
$d^\overtake_\opponent$ & $100\text{ m}$ & Distance threshold for overtaking rewards. \\
$c_3$ & $600$ & Design parameter for training the overtaking policy. \\
\hline
\end{tabular}
\label{tab:hcsf_warmup_init_hyperparams}
\end{table*}

\subsubsection{Training the Warmup Policies} We employ two different warmup policies during the warmup phase: a \emph{nominal} policy and an \emph{overtaking} policy. The nominal policy, trained using the reward function from \cite{remondasimulation}, seeks the fastest lap times without considering opponents. In contrast, the overtaking policy accounts for the nearest opponent, rewarding successful overtakes and penalizing collisions, although it does not guarantee collision avoidance. The nominal policy is trained using the following reward function:
\begin{equation}
    \label{eq.nominal_reward}
    r_\nominal 
    = \frac{v}{c_2}\,\bigl(1 - c_1 \cdot d_\text{gap}\bigr),
\end{equation}
where \(v\) denotes the speed of the ego vehicle, \(d_\text{gap}\) is the \(\ell_2\)-distance to the reference path, and \(c_1\) and \(c_2\) are design parameters.

For the overtaking policy, we encourage overtaking maneuvers using a term inspired by \cite{wurman2022outracing}:
\begin{equation}
    r_{\overtake, 1} = \text{I}\{d_\opponent<d^\overtake_\opponent\} \cdot c_3\cdot (\Delta\text{NSP}_{t-1}-\Delta\text{NSP}_t),
\end{equation}
where ``oppo'' indicates the nearest opponent, \(\text{I}\{\cdot\}\) is an indicator function that equals \(1\) if \(d_\opponent < d^\overtake_\opponent\) and \(0\) otherwise, \(c_3\) is a design parameter, and \(\Delta\text{NSP}_t \coloneq \text{NSP}_{\text{oppo}, t} - \text{NSP}_{\text{ego}, t}\). Here, ``NSP'' (normalized spline position) measures progression along the track, taking values in \([0, 1)\). By construction, \(r_{\overtake, 1} > 0\) if and only if either the ego vehicle is behind the opponent and is closing in, or the ego vehicle is ahead of the opponent and is pulling away.
Overtaking is further encouraged with an additional term:
\begin{equation}
    r_{\overtake, 2} = \text{I}\{r_{\overtake, 1}>0\}\cdot \frac{c_1}{c_2}\cdot v\cdot d_{\text{gap}},
\end{equation}
which activates only if the ego vehicle is closing in on the opponent from behind or pulling away in front. In effect, this negates the penalty term for deviating from the reference path in \eqref{eq.nominal_reward}, allowing the ego agent to set up or complete an overtake without strictly adhering to the reference path.
The entire reward function for training the overtaking policy is as follows:
\begin{equation}
\label{eq.overtaking_policy_reward}
    r_\overtake=r_\nominal+\max(r_{\overtake, 1}, 0)+r_{\overtake, 2}.
\end{equation}
We also note that, when training the overtaking policy, the ego vehicle is heavily penalized for colliding with the opponent through immediate episode termination.

\subsubsection{Training Details} Following \cite{hsu2023isaacs, fisac2019bridging}, we approximate the state--action safety value function \(\qfunc(\cdot, \cdot)\) using a neural network with parameters \(\phi\). We also parameterize the best-effort fallback policy as a separate neural network with parameters \(\theta\). Our goal is for \(\qfunc_\phi\) to approximate a fixed-point solution of the time-discounted safety Bellman equation, defined as:
\begin{equation} \label{eq.discounted_safety_bellman_eq}
    \learnedvfunc(x) = (1-\envdiscount)g(x) + \envdiscount \min\{g(x), \max_u \learnedqfunc(x,u))\}.
\end{equation}
To remain compatible with standard reinforcement learning methods such as \gls{SAC}~\cite{haarnoja2018soft}, we train the $\qfunc$-network to minimize the Bellman residual:
\begin{equation} \label{eq.isaacs_valueloss}
    L(\phi) = \mathbb E^{\buffer, \pi} [ (Q_\phi(x, u) - y) ^2], 
\end{equation}
where \(y := (1-\envdiscount)\,g' \;+\; \envdiscount \,\min\!\bigl\{g',\,Q_\phi\bigl(x',\,u'\bigr)\bigr\}\), and \(\qtargetnetwork\) is the target \(\qfunc\)-network. We then update \(\learnedpolicy\) using the following policy gradient:
\begin{equation} \label{eq.isaacs_policyloss}
    L(\theta) := \mathbb E^{\buffer, \pi} [-\learnedqfunc(x, u) + \alpha \log \learnedpolicy(u|x)].
\end{equation}
\eqref{eq.isaacs_valueloss} and \eqref{eq.isaacs_policyloss} are updated in an alternating manner by sampling from \(\buffer\). We employ double \(\qfunc\)-learning and update the \(\qfunc\)-networks with a delay. Table~\ref{tab:sac_hyperparameters} summarizes our hyperparameters. The complete \gls{HCSF} training pseudocode is given in \autoref{alg.hcsf_training}. 

\begin{table}[h!]
\centering
\renewcommand{\arraystretch}{1.2}
\caption{Hyperparameters for \gls{SAC} Training}
\begin{tabular}{c|c|l}
\hline
\textbf{Hyperparameter} & \textbf{Value} & \textbf{Description} \\ \hline
$\eta$ & $3 \times 10^{-4}$ & Actor and critic learning rate. \\ 
$\envdiscount$ & $0.992$ & Discount factor. \\ 
$|\mathcal{B}|$ & $2\times10^7$ & Replay buffer size. \\ 
Batch Size & $256$ & Training batch size. \\ 
$\tau$ & $0.005$ & Target network update rate. \\ 
$\alpha$ & Learned & Entropy temperature. \\ 
$N_{UTD}$ & $1$ & Gradient steps per environment step. \\ 
\hline
\end{tabular}
\label{tab:sac_hyperparameters}
\end{table}

\subsubsection{Solving the OCP} We solve the \gls{OCP} \eqref{eq.HCSF} by sampling 2000 candidate actions on the line that connects the human's action $u^\human(x)$ and the best-effort fallback policy's action $u^\shield(x)$ in the control space. Among the candidate actions, we select the one that minimizes the deviation from $u^\human(x)$ while satisfying \eqref{eq.HCSF_constraint} (note that $u^\shield(x)$ always satisfies \eqref{eq.HCSF_constraint}). This is because solely relying on the learned $\qfunc$-function to solve the \gls{OCP} \eqref{eq.HCSF}, such as sampling in the entire control space and searching for the candidate that minimizes the deviation from $u^\human(x)$ while satisfying \eqref{eq.HCSF_constraint}, led to unsatisfactory safety metrics in practice. We compensate for the neural approximation error in the $\qfunc$-function by additionally leveraging the information encoded in the best-effort fallback policy $u^\shield(x)$ \cite{hsu2023isaacs}, thereby achieving near-zero failures throughout the experiment. Optimization remains computationally efficient, as candidate actions are rapidly propagated through the Q-network using GPU-accelerated tensor computations. The pseudocode for \gls{HCSF} optimization is given in \autoref{alg.hcsf_execution}.

\subsubsection{Choice of Design Parameters}
$\gamma$ is a design parameter in our \gls{HCSF} formulation \eqref{eq.HCSF} that does \emph{not} affect the \emph{safety guarantees} but does influence the shared autonomy system's \emph{behavior}. In related works \cite{borquez2024safety}, $\gamma$ is tuned for objectives that are not directly related to safety. In our \gls{HCSF} framework, we tune $\gamma$ to modulate human agency, comfort, and satisfaction. Based on a preliminary trial among the authors, we determined that $\gamma=0.7$ gives a strong sense of agency and satisfaction while feeling smooth, and we used this value throughout the user study.

We further analyze $\gamma$'s effect on agency and comfort via an ablation study in which the overtaking policy $\pi^\warmup_\overtake$ that was trained using the reward function \eqref{eq.overtaking_policy_reward} raced the ego vehicle against an opponent in lieu of human drivers. We tested 10 instantiations of our \gls{HCSF} with different $\gamma$ values ranging from 0 to 0.9, and each instantiation was tested for 10 minutes. We note that this setting is identical to driving session 2 of our user study. The average I.M., which is a proxy for the agency metric, and average I.D. and jerk, which are proxies for comfort, for each \gls{HCSF} instantiation are shown in Fig.~\ref{fig:metrics_vs_gamma}. Smaller I.M. indicates better agency, while smaller I.D. and jerk indicate better comfort. We observe that as $\gamma$ becomes larger, proxies for comfort improve while the proxy for agency degrades. This is coherent with the understanding of $\alpha=1-\gamma$ in conventional \gls{DCBF} settings \eqref{eq.DCBF_constraint}.

We define the \emph{Combined Metric} as the uniformly weighted sum of I.M., I.D., and jerk after linearly normalizing them to be between 0 and 1. As shown in Fig.~\ref{fig:combined_metric_vs_gamma}, we found $\gamma=0.7$ to be a sweet spot between agency and comfort, a result coinciding with our preliminary trial. 
\begin{figure}[h!]
    \centering
    \begin{subfigure}[b]{0.49\columnwidth}
        \centering
        \includegraphics[width=\textwidth]{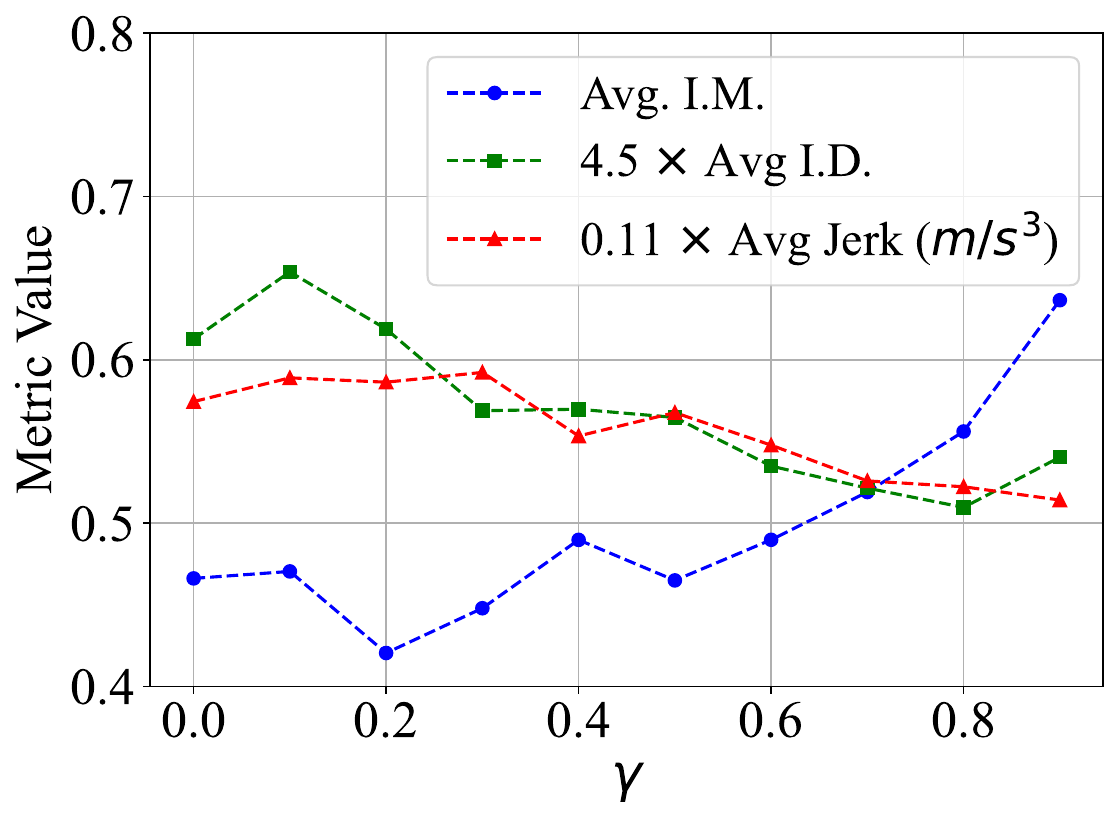}
        \caption{Average I.M., I.D., and jerk.}
        \label{fig:metrics_vs_gamma}
    \end{subfigure}
    \begin{subfigure}[b]{0.49\columnwidth}
        \centering
        \includegraphics[width=\textwidth]{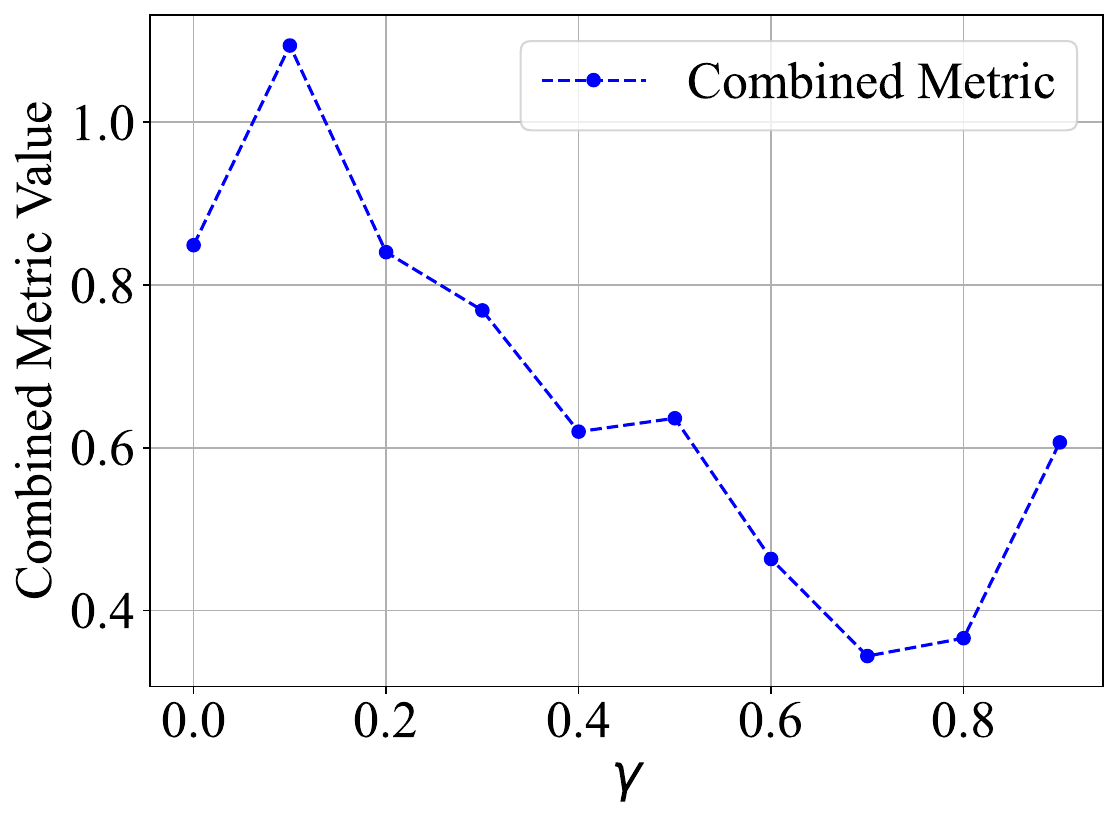}
        \caption{Combined Metric on different $\gamma$.}
        \label{fig:combined_metric_vs_gamma}
    \end{subfigure}
    \caption{\gls{HCSF} with $\gamma=0.7$ strikes a good balance between the proxies for agency and comfort.}
    \label{fig:gamma_ablation}
\end{figure}

Our framework can also integrate approaches like Parametric CBFs \cite{lyu2022adaptive} that co-optimize $\gamma$ to further enhance human agency, comfort, and satisfaction without compromising safety. We acknowledge the relevance of our work to the smooth least restrictive safety filter~\cite{borquez2024safety}, which is an instantiation of our \gls{HCSF} using $\gamma=0$ and does not necessarily strike a good balance between agency and comfort, as can be seen in Fig.~\ref{fig:gamma_ablation}. 

\subsubsection{Wall-Clock Training Time}  The three-week training time stems primarily from the AC simulator’s real-time constraint, with a majority of the time spent on warmup and initialization phases. During the training phase, which begins after the warmup and initialization phases, we sample transitions from a single environment and append them to $\buffer$.

\subsection{Extended Results}
\label{app:Results}

\subsubsection{Cronbach's Alpha Test}

Table~\ref{tab:Cronbach_alpha} summarizes the Cronbach’s alpha values obtained for each of the eight metrics. Agency (0.716), comfort (0.775), and satisfaction (0.710) exceed the commonly cited threshold of 0.70, indicating acceptable internal consistency. Robustness (0.673) and trustworthiness (0.683) fall just below 0.70 and can be considered borderline acceptable, which is reasonable given that each metric was assessed using only two items. Interpretability (0.562) and competence (0.540) are borderline, suggesting that the negated items may not perfectly capture the reverse of their affirmative counterparts. Predictability (0.175) stands out as poor, indicating a potential mismatch between its affirmative and negated statements or participants’ interpretations.

Despite this variability, the four core metrics---robustness, agency, comfort, and satisfaction---either exceed or closely approach the 0.70 threshold, making them sufficiently reliable for supporting our core hypotheses. Hence, we consider the qualitative data for these metrics to be reliable enough to validate the conclusions drawn from our study.

\begin{table}[h!]
\centering
\renewcommand{\arraystretch}{1.2}
\caption{Cronbach's Alpha Test Results}
\begin{tabular}{c|c}
\hline
\textbf{Metric} & \textbf{Cronbach's Alpha Value}\\ \hline
Robustness & \textbf{$0.673$} \\ 
Agency & \textbf{$0.716$} \\ 
Comfort & \textbf{$0.775$} \\ 
Satisfaction & \textbf{$0.710$} \\ 
Trustworthiness & \textbf{$0.683$} \\ 
Predictability & $0.175$ \\ 
Interpretability & $0.562$ \\ 
Competence & $0.540$ \\
\hline
\end{tabular}
\label{tab:Cronbach_alpha}
\end{table}

\subsubsection{Racing Performance}
The final lap times for each participant in all three sessions---1, 2, and 3---are shown in Fig.~\ref{fig:final_lap_time_bar_3sessions}. We use final lap time as a quantitative performance metric, particularly for session 2 in which participants receive a safety filter (including a placebo). We assume that participants need some time to learn how to cooperate with the assistance. Moreover, because each failure forces the vehicle to reset---causing it to restart from a stationary position---we treat failure incidents as inherently penalized in terms of lap times.
 
Although no statistical significance was observed, participants who received a safety filter in session 2 showed greater improvements in lap times compared to those in the unassisted group. Notably, the \gls{HCSF} group in session 2 was the only one to average a sub-three-minute lap time. This advantage likely stems from \gls{HCSF} providing gradual, smoother interventions rather than the last-minute, large corrections of \gls{LRSF}. Such preemptive, smooth corrections have been linked to better performance metrics (e.g., lap times) in filter-aware motion planning literature \cite{hu2022sharp, hu2024active}.

\begin{figure}[!h]
    \centering
    \includegraphics[width=\columnwidth]{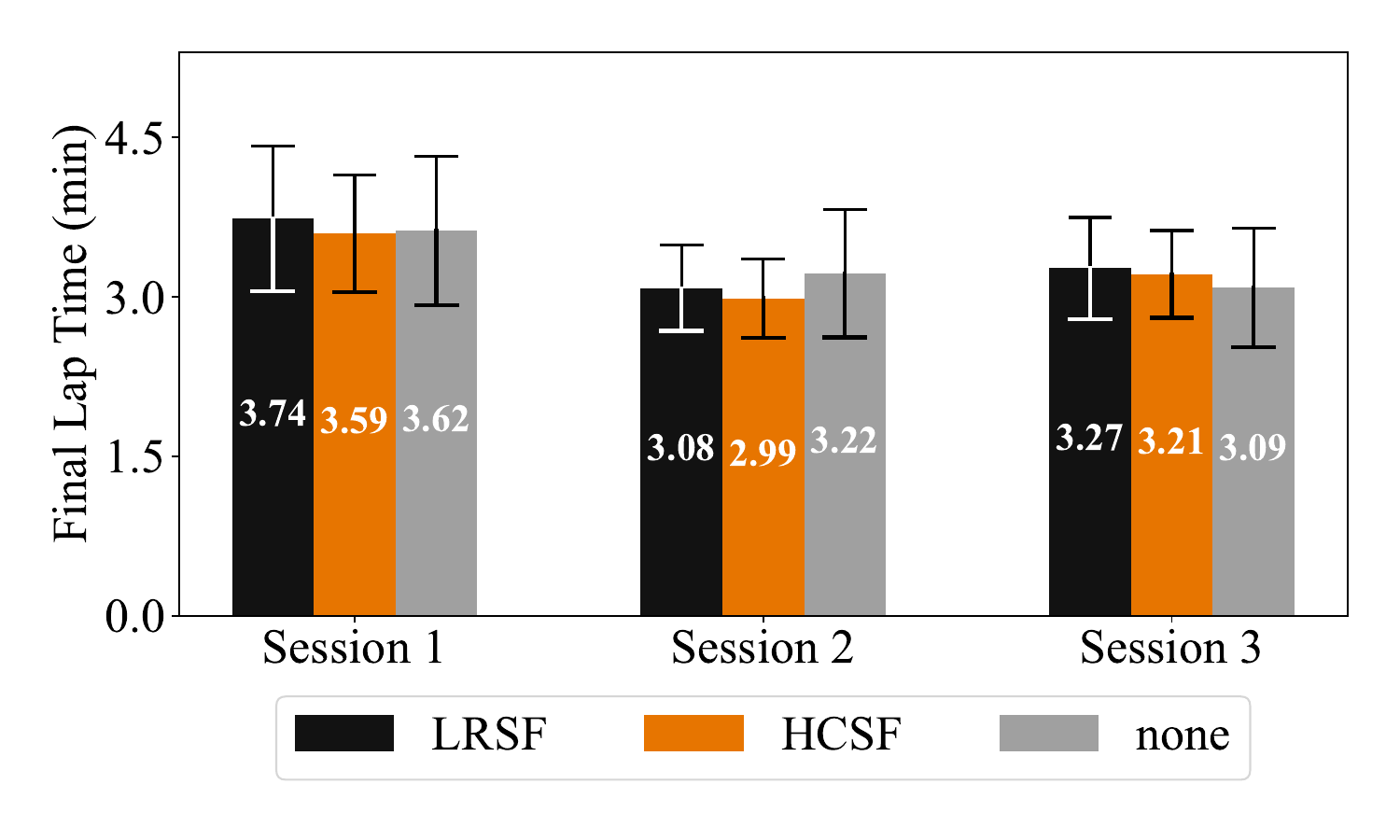}
    \caption{Average final lap times for each group across sessions 1, 2, and 3.}
    \label{fig:final_lap_time_bar_3sessions} 
\end{figure}

Although no statistical significance was observed, Fig.~\ref{fig:final_lap_time_bar_3sessions} suggests a tendency toward over-reliance on safety filters. While the average final lap time of unassisted participants decreased across sessions (likely due to gained driving experience), both \gls{LRSF} and \gls{HCSF} groups---despite showing the shortest times in session~2---had increased lap times in session~3. They became slightly slower (though not significantly) than the unassisted group, potentially because they relied too heavily on the safety filters and thus saw less improvement in driving skill. We anticipate that future \gls{HCSF} designs, which can foresee a longer time horizon and make \emph{strategic, performance-oriented} decisions beyond mere safety enforcement, may address this issue of over-reliance.

\begin{algorithm*}[h]
\caption{HCSF Training}
\begin{algorithmic}[1] \label{alg.hcsf_training}
\STATE Initialize policy parameters $\theta$, value function parameters $\phi$
\STATE Initialize replay buffer $\mathcal{D}$
\STATE Set target smoothing coefficient $\tau$, discount factor $\envdiscount$, and temperature $\alpha$
\FOR{each training episode}
    \STATE Observe initial state $x_0$
    \FOR{each environment step}
        \STATE Select action $u_t \sim \pi_\theta(u_t|x_t)$
        \STATE Execute $u_t$ in environment, observe margin $g_t$, and next state $x_{t+1}$
        \STATE Store transition $(x_t, u_t, g_t, x_{t+1})$ in replay buffer $\buffer$
    \ENDFOR
    \FOR{each gradient step $i = 1,\dots,N_{UTD}$}
        \STATE Sample a minibatch of transitions $(x_t, u_t, g_t, x_{t+1})$ from $\buffer$
        \STATE Compute target $y_t:=(1-\envdiscount)g_t + \envdiscount \min\{g_t, Q_\phi(x^\prime, u^\prime)\}$
        \STATE Update $\qfunc$-function(s) using \eqref{eq.isaacs_valueloss}
        \STATE Update policy using \eqref{eq.isaacs_policyloss}
        \STATE Update target $\qfunc$-function(s): ${\phi} \gets \tau \phi + (1 - \tau) \phi$
    \ENDFOR
\ENDFOR
\end{algorithmic}
\end{algorithm*}

\begin{algorithm*}[h]
\caption{HCSF Execution}
\begin{algorithmic}[1] \label{alg.hcsf_execution}
\FOR{each environment step}
    \STATE Observe the current state $\state$
    \STATE Observe human action $u^\human(\state)$ and compute the best-effort fallback policy's action $u^\shield(x)$
    \STATE Sample candidate actions on the line that connects $u^\human(\state)$ and $u^\shield(x)$
    \STATE Select a candidate action $\ctrl$ that minimizes \eqref{eq.HCSF_cost} while satisfying \eqref{eq.HCSF_constraint}.
    \STATE Apply the filtered output $\ctrl$ to the system
\ENDFOR
\end{algorithmic}
\end{algorithm*}

\end{document}